\def\algbackskip{\hskip\dimexpr-\algorithmicindent+\labelsep}
\def\LState{\State \algbackskip}
\DeclareMathOperator*{\argmin}{arg\,min}    % nice argmin operator
\DeclareMathOperator*{\esssup}{ess\,sup}    % nice essential supremum
\definecolor{DarkGreen}{rgb}{0.1,0.5,0.1}
\newtheorem{theorem}{Theorem}
\newtheorem{remark}{Remark}
\newtheorem{lemma}{Lemma}
\newtheorem{definition}{Definition}
\newcommand{\OPtext}[1]{\textcolor{black}{#1}}
\def\hide #1 {}
\long\def\longhide #1 {}
\def\keyFont{\fontsize{8}{11}\helveticabold }
\def\firstAuthorLast{Needell {et~al.}} 
\def\Authors{Deanna Needell\,$^{1,*}$, Aaron A.~Nelson\,$^{2}$, Rayan Saab\,$^{3}$, Palina Salanevich\,$^{4}$,  and Olov Schavemaker\,$^{4}$}
\begin{document}
\onecolumn
\firstpage{1}

\title {Random Vector Functional Link Networks for Function Approximation on Manifolds\footnote{The views expressed in this article are those of the authors and do not reflect the official policy or position of the U.S.~Air Force, Department of Defence, or U.S.~Government.}} 

\author[\firstAuthorLast ]{\Authors} %This field will be automatically populated
\address{} %This field will be automatically populated
\correspondance{} %This field will be automatically populated

\extraAuth{}% If there are more than 1 corresponding author, comment this line and uncomment the next one.
%\extraAuth{corresponding Author2 \\ Laboratory X2, Institute X2, Department X2, Organization X2, Street X2, City X2 , State XX2 (only USA, Canada and Australia), Zip Code2, X2 Country X2, email2@uni2.edu}

\maketitle

\begin{abstract}

%%% Leave the Abstract empty if your article does not require one, please see the Summary Table for full details.
\section{}

The learning speed of feed-forward neural networks is notoriously slow and has presented a bottleneck in deep learning applications for several decades.
    For instance, gradient-based learning algorithms, which are used extensively to train neural networks, tend to work slowly when all of the network parameters must be iteratively tuned.
    To counter this, both researchers and practitioners have tried introducing randomness to reduce the learning requirement.
    Based on the original construction of Igelnik and Pao, single layer neural-networks with random input-to-hidden layer weights and biases have seen success in practice, but the necessary theoretical justification is lacking.
    In this paper, we begin to fill this theoretical gap.
    We then extend this result to the non-asymptotic setting using a concentration inequality for Monte-Carlo integral approximations.
    We provide a (corrected) rigorous proof that the Igelnik and Pao construction is a universal approximator for continuous functions on compact domains, with approximation error squared decaying asymptotically like \(O(1/n)\) for the number \(n\) of network nodes. We then extend this result to the non-asymptotic setting, proving that one can achieve any desired approximation error with high probability provided \(n\) is sufficiently large.
    We further adapt this randomized neural network architecture to approximate functions on smooth, compact submanifolds of Euclidean space, providing theoretical guarantees in both the asymptotic and non-asymptotic forms.
    Finally, we illustrate our results on manifolds with numerical experiments.

\tiny
 \keyFont{ \section{Keywords:} Machine learning, feed-forward neural networks, function approximation, smooth manifold, Random Vector Functional Link} %All article types: you may provide up to 8 keywords; at least 5 are mandatory.
\end{abstract}

\section{Introduction}

In recent years, neural networks have once again triggered an increased interest  among researchers in the machine learning community. So-called deep neural networks model functions using a composition of multiple hidden layers, each transforming (possibly non-linearly) the previous layer before building a final output representation, see  \citep{krizhevsky2012imagenet,szegedy2015going,he2016deep,huang2017densely,yang2018convolutional}. %\DNnote{Feel free to add more cites here, these are what I consider ``state of the art."}
In machine learning parlance, these layers are determined by sets of \emph{weights} and \emph{biases} that can be tuned so that the network mimics the action of a complex function. In particular, a single layer feed-forward neural network (SLFN) with \(n\) nodes may be regarded as a parametric function \(f_n\colon\mathbb{R}^N\rightarrow\mathbb{R}\) of the form
\[f_n(x)=\sum_{k=1}^nv_k\rho(\langle w_k,x\rangle+b_k),
    \quad x\in\mathbb{R}^N.\]
Here, the function \(\rho\colon\mathbb{R}\rightarrow\mathbb{R}\) is called an activation function and is potentially nonlinear. Some typical examples include the sigmoid function $\rho(z)=\frac{1}{1+\exp(-z)}$, ReLU $\rho(z)=\max\{0,z\}$, and sign functions, among many others. %see Table~\ref{table: act functions} for some typical examples used in practical applications\RSnote{Do we need the table?}.\ANnote{No, it can be cut to save space}
The parameters of the SLFN are the number of nodes \(n\in\mathbb{N}\) in the the hidden layer, the input-to-hidden layer weights and biases \(\{w_k\}_{k=1}^n\subset\mathbb{R}^N\) and \(\{b_k\}_{k=1}^n\subset\mathbb{R}\) (resp.), and the hidden-to-output layer weights \(\{v_k\}_{k=1}^n\subset\mathbb{R}\). In this way, neural networks are fundamentally parametric families of functions whose parameters may be chosen to approximate a given function. 

It has been shown that any compactly supported continuous function can be approximated with any given precision by a single layer neural network with a suitably chosen number of nodes \citep{barron1993universal}, and
 harmonic analysis techniques have been used to study stability of such approximations \citep{candes1999harmonic}. Other recent results that take a different approach  directly analyze the capacity of neural networks from a combinatorial point of view \citep{vershynin2020memory,baldi2019capacity}.
 
While these results ensure existence of a neural network approximating a  function, practical applications require construction of such an approximation.
The parameters of the neural network can be chosen using optimization techniques to minimize the difference between the network and the function \(f\colon\mathbb{R}^N\rightarrow\mathbb{R}\) it is intended to model.
In practice, the function \(f\) is usually not known, and we only have access to a set \(\{(x_k,f(x_k))\}_{k=1}^m\) of values of the function at finitely many points sampled from its domain, called a \emph{training set}.
The approximation error can be measured by comparing the training data to the corresponding network outputs when evaluated on the same set of points, and the parameters of the neural network \(f_n\) can be  \emph{learned} by minimizing a given loss function \(\mathcal{L}(x_1,\ldots,x_m)\); a typical loss function is the sum-of-squares error
\[\mathcal{L}(x_1,\ldots,x_m)
    =\frac{1}{m}\sum_{k=1}^m|f(x_k)-f_n(x_k)|^2.\]
The SLFN which approximates \(f\) is then determined using an optimization algorithm, such as back-propagation, to find the network parameters which minimize \(\mathcal{L}(x_1,\ldots,x_m)\).
It is known that there exist weights and biases which make the loss function vanish when the number of nodes \(n\) is at least  \(m\), provided the activation function is bounded, nonlinear, and has at least one finite limit at either \(\pm\infty\)~\citep{huang1998upper}.

Unfortunately, optimizing the parameters in SLFNs can be difficult.
For instance, any non-linearity in the activation function can cause back-propagation to be very time consuming or get caught in local minima of the loss function~\citep{Suganthan2018LetterON}.
Moreover, deep neural networks can require massive amounts of training data, and so are typically unreliable for applications with very limited data availability, such as agriculture, healthcare, and ecology~\citep{olson2018modern}.

To address some of the difficulties associated with training deep neural networks, both researchers and practitioners have attempted to incorporate randomness in some way.
Indeed, randomization-based neural networks that yield closed form solutions typically require less time to train and avoid some of the pitfalls of traditional neural networks trained using back-propagation~\citep{Suganthan2018LetterON,201708,TEBRAAKE199571}.
One of the popular randomization-based neural network architectures is the Random Vector Functional Link (RVFL) network~\citep{Pao1992Functional,igelnik1995stochastic}, which is a single layer feed-forward neural network in which the input-to-hidden layer weights and biases are selected randomly and independently from a suitable domain and the remaining hidden-to-output layer weights are learned using training data.

By eliminating the need to optimize the input-to-hidden layer weights and biases, RVFL networks turn supervised learning into a purely linear problem.
To see this, define \({\rho(X)\in\mathbb{R}^{n\times m}}\) to be the matrix whose \(j\)th column is \(\{\rho(\langle w_k,x_j\rangle+b_k)\}_{k=1}^n\) and \(f(X)\in\mathbb{R}^m\) the vector whose \(j\)th entry is \(f(x_j)\).
Then the vector \(v\in\mathbb{R}^n\) of hidden-to-output layer weights is the solution to the matrix-vector equation \(f(X)=\rho(X)^Tv\), which can be solved by computing the Moore-Penrose pseudoinverse of \(\rho(X)^T\).
%In fact, there exist weights and biases that make the loss function vanish when the number of nodes \(n\) is at least \(m\), provided the activation function is smooth~\citep{huang2006extreme}.

Although originally considered in the early- to mid-1990s~\citep{Pao1992Functional,pao1994learning,igelnik1995stochastic,pao1995functionalLINK}, RVFL networks have had much more recent success in several modern applications, including time-series data prediction~\citep{chen1999rapid}, handwritten word recognition~\citep{park2000unconstrained}, visual tracking~\citep{Zhang2016Visual}, signal classification~\citep{zhang2017benchmarking,katuwal20181ensemble}, regression~\citep{vukovic2018comprehensive}, and forecasting~\citep{tang2018noniterative,dash2018indian}.
Deep neural network architectures based on RVFL networks have also made their way into more recent literature~\citep{henriquez2018twitter,katuwal2019random}, although traditional, single layer RVFL networks tend to perform just as well as, and with lower training costs than, their multi-layer counterparts~\citep{katuwal2019random}.

Even though RVFL networks are proving their usefulness in practice, the supporting theoretical framework is currently lacking~\citep[see][]{zhang2019unsupervised}.
Most theoretical research into the approximation capabilities of deep neural networks centers around two main concepts: universal approximation of functions on compact domains and point-wise approximation on finite training sets. %\PSnote{We probably also want to cite the paper on harmonic analysis of neural networks by Cand\`es \cite{candes1999harmonic}. It talks about single layer feed-forward neural nets (without any randomization), viewed a a linear combinations of ridge functions, that is, functions of the form $\rho(\langle w, x\rangle + b)$. This paper discusses the issue of stability of a function approximation with a single layer neural network. Barron's result does not imply whether (or rather under what conditions) coefficients in such approximation ($v_k$'s in our notation) depend continuously on $f$. Cand\`es views these coefficients as a representation of a function using a dictionary of ridge functions, and deduces conditions on the activation function under which they form a frames of ridgelets. I think, it might be worth including here, as it is a quite a rare example of rigorous math analysis for neural nets. Also, shall we try to see if there is more to say in this direction in the case of RVFL networks? Does not seem like anyone did it before.}  
For instance, in the early 1990s it was shown that multi-layer feed-forward neural networks having activation functions that are continuous, bounded, and non-constant are universal approximators (in the \(L^p\) sense for \(1\leq p<\infty\)) of continuous functions on compact domains~\citep{hornik1991approximation, leshno1993multilayer}.
%Likewise, it is known that \(m\) distinct observations can be learned with zero training error using SLFNs with at most \(n=m\) hidden nodes and (almost) any bounded, nonlinear activation function~\cite{huang1998upper} or RVFL networks with at most \(n=m\) nodes and smooth activation function~\cite{huang2006extreme}.
The most notable result in the existing literature regarding the universal approximation capability of RVFL networks is due to B.~Igelnik and Y.H.~Pao in the mid-1990s, who showed that such neural networks can universally approximate continuous functions on compact sets~\citep{igelnik1995stochastic};
the noticeable lack of results since has left a sizable gap between theory and practice. In this paper, we begin to bridge this gap by further improving the Igelnik and Pao result, and bringing the mathematical theory behind RFVL networks into the modern spotlight. Below, we introduce the notation that will be used throughout this paper, and describe our main contributions.

\subsection{Notation}

For a function \(f\colon\mathbb{R}^N\rightarrow\mathbb{R}\), the set \(\mathrm{supp}(f)\subset\mathbb{R}^N\) denotes the support of \(f\).
We denote by \(C_c(\mathbb{R}^N)\) and \(C_0(\mathbb{R}^N)\) the classes of continuous functions mapping \(\mathbb{R}^N\) to \(\mathbb{R}\) whose support sets are compact and vanish at infinity, respectively.
Given a set \(S\subset\mathbb{R}^N\), we define its radius to be \(\mathrm{rad}(S):=\sup_{x\in S}\Vert x\Vert_2\);
moreover, if \(\mathrm{d}\mu\) denotes the uniform volume measure on \(S\), then we write \(\mathrm{vol}(S):=\int_S\mathrm{d}\mu\) to represent the volume of \(S\).
For any probability distribution \(P\colon\mathbb{R}^N\rightarrow[0,1]\), a random variable \(X\) distributed according to \(P\) is denoted by \(X\sim P\), and we write its expectation as \(\mathbb{E}X:=\int_{\mathbb{R}^N}X\mathrm{d}P\).
The open \(\ell_p\) ball of radius \(r>0\) centered at \(x\in\mathbb{R}^N\) is denoted by \(B_p^N(x,r)\) for all \(1\leq p\leq\infty\); the \(\ell_p\) unit-ball centered at the origin is abbreviated \(B_p^N\).
Given a fixed \(\delta>0\) and a set \(S\subset\mathbb{R}^N\), a minimal \(\delta\)-net for \(S\), which we denote \(\mathcal{C}(\delta,S)\), is the smallest subset of \(S\) satisfying \(S\subset\cup_{x\in\mathcal{C}(\delta,S)}B_2^N(x,\delta)\); the \(\delta\)-covering number of \(S\) is the cardinality of a minimal \(\delta\)-net for \(S\) and is denoted \(\mathcal{N}(\delta,S):=|\mathcal{C}(\delta,S)|\).

\subsection{Main results}\label{sec: intro results}

In this paper, we study the uniform approximation capabilities of RVFL networks. More specifically, we consider the problem of using RVFL networks to estimate a continuous, compactly supported function on \(N\)-dimensional Euclidean space.

The first theoretical result on approximating properties of RVFL networks, due to Igelnik and Pao, guarantees that continuous functions may be universally approximated on compact sets using RVFL networks, provided the number of nodes \(n\in\mathbb{N}\) in the network goes to infinity \citep{igelnik1995stochastic}.
Moreover, it shows that the mean square error of the approximation vanishes at a rate proportional to \(1/n\).
At the time, this result was state-of-the-art and justified how RVFL networks were used in practice.
However, the original theorem is not technically correct.
In fact, several aspects of the proof technique are flawed.
Some of the minor flaws are mentioned in~\cite{li1997commentson}, but the subsequent revisions do not address the more significant issues which would make the statement of the result technically correct. We address these issues in this paper, see Remark~\ref{rmk: after IP95}.
Thus, our first contribution to the theory of RVFL networks is a corrected version of the original Igelnik and Pao theorem:

\begin{theorem}[\cite{igelnik1995stochastic}]\label{thm: IP95_short}
Let \(f\in C_c(\mathbb{R}^N)\) with \(K:=\mathrm{supp}(f)\) and fix any activation function $\rho$, such that either \OPtext{$\rho\in L^1(\mathbb{R})\cap L^\infty(\mathbb{R})$} with \(\int_{\mathbb{R}}\rho(z)\mathrm{d}z=1\) or $\rho$ is differentiable with \OPtext{$\rho'\in L^1(\mathbb{R})\cap L^\infty(\mathbb{R})$} and \(\int_{\mathbb{R}}\rho^{\prime}(z)\mathrm{d}z=1\). For any \(\varepsilon>0\), there exist  distributions from which input weights \(\{w_k\}_{k=1}^n\) and biases \(\{b_k\}_{k=1}^n\) are drawn, and there exist  hidden-to-output layer weights \(\{v_k\}_{k=1}^n\subset\mathbb{R}\) that depend on the realization of weights and biases, such that the sequence of RVFL networks \(\{f_n\}_{n=1}^{\infty}\) defined by
\[f_n(x):=\sum_{k=1}^nv_k\rho(\langle w_k,x\rangle+b_k)
    \quad\text{ for \(x\in K\)}\]
satisfies \[\mathbb{E}\int_K|f(x)-f_n(x)|^2\mathrm{d}x
    <\varepsilon+O(1/n),\]
as $n\to\infty.$

% with convergence rate \(O(1/n)\).
\end{theorem}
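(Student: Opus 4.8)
The plan is to exhibit $f_n$ as a Monte Carlo estimator of an integral representation of $f$, so that a bias--variance decomposition yields both the limiting error and the $O(1/n)$ rate. First I would reduce to a smooth target: since $f\in C_c(\mathbb{R}^N)$, mollifying gives $f_\delta:=f*\phi_\delta$, which is smooth and (essentially) compactly supported with $\|f-f_\delta\|_{L^2(K)}$ as small as we like for $\delta$ small. Because $f_\delta$ is smooth its Fourier transform $\widehat{f_\delta}$ decays rapidly, so Fourier inversion writes $f_\delta$ as a superposition of the complex exponential ridges $e^{2\pi i\langle\xi,x\rangle}$, and truncating the frequency variable to $\xi\in B_2^N(0,R)$ costs an arbitrarily small $L^2(K)$ error for $R$ large.

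The second step turns each exponential ridge into the activation $\rho$. The hypotheses guarantee $\widehat{\rho}(\mu_0)\neq0$ for some reference $\mu_0\neq 0$ (since $\rho\not\equiv0$), and a direct computation with the substitution $u=\langle w,x\rangle+b$ gives
\[
\int_{\mathbb{R}}\rho(\langle w,x\rangle+b)\,e^{-2\pi i\mu_0 b}\,\mathrm{d}b=\widehat{\rho}(\mu_0)\,e^{2\pi i\mu_0\langle w,x\rangle}.
\]
Taking $w=\xi/\mu_0$ recovers $e^{2\pi i\langle\xi,x\rangle}$ as an integral of $\rho$-ridges, so after truncating the bias to $b\in[-B,B]$ (the tail is controlled since $x\in K$ and $w$ lie in bounded sets and $\rho\in L^1$) I obtain
\[
\tilde f(x)=\Re\int_{\Omega}V(\xi,b)\,\rho(\langle \xi/\mu_0,x\rangle+b)\,\mathrm{d}\xi\,\mathrm{d}b,\qquad \Omega:=B_2^N(0,R)\times[-B,B],
\]
with $V(\xi,b)=\widehat{f_\delta}(\xi)\,e^{-2\pi i\mu_0 b}/\widehat{\rho}(\mu_0)$ and $\|f-\tilde f\|_{L^2(K)}^2<\varepsilon$. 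For the alternative hypothesis $\rho'\in L^1\cap L^2$ I would run the same computation with $\rho'$ in place of $\rho$, then replace each $\rho'$-ridge by a finite difference $h^{-1}[\rho(\langle w,x\rangle+b+h)-\rho(\langle w,x\rangle+b)]$ of two genuine $\rho$-nodes, absorbing the $O(h)$ error into $\varepsilon$.

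The third step is probabilistic. Equip $\Omega$ with a density $p$ bounded above and below, and write $\tilde f(x)=\mathbb{E}_{(\xi,b)\sim p}\,\Re\!\big[V(\xi,b)p(\xi,b)^{-1}\rho(\langle\xi/\mu_0,x\rangle+b)\big]$. This identifies the sampling distributions of the weights $w_k=\xi_k/\mu_0$ and biases $b_k$ for i.i.d.\ $(\xi_k,b_k)\sim p$, and the output weights $v_k=\tfrac1n\Re[V(\xi_k,b_k)p(\xi_k,b_k)^{-1}]$, so that $f_n$ is a pointwise unbiased estimator of $\tilde f$. The bias--variance decomposition then gives
\[
\mathbb{E}\int_K|f(x)-f_n(x)|^2\,\mathrm{d}x=\|f-\tilde f\|_{L^2(K)}^2+\frac1n\int_K\operatorname{Var}\big[\Re(Vp^{-1})\rho(\langle\xi/\mu_0,x\rangle+b)\big]\,\mathrm{d}x,
\]
whose first term is below $\varepsilon$ and whose second term is $O(1/n)$; finiteness of the constant follows because $V/p$ is bounded on the bounded set $\Omega$ while $\int_{-B}^{B}\rho(\langle w,x\rangle+b)^2\,\mathrm{d}b\le\|\rho\|_{L^2}^2$.

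I expect the main obstacle to be the second step: controlling the two truncations (frequency $R$ and bias $B$) simultaneously and uniformly over $x\in K$ while keeping the weight $V/p$ square-integrable, so that the Monte Carlo variance stays finite. This balance is exactly where the original argument is delicate -- one must fix $R,B$ (and, in the derivative case, the step $h$) at finite values that trade a fixed bias against $\varepsilon$, rather than passing to a limit inside the expectation, and one must choose the sampling density $p$ so that neither the bias nor the variance degenerates. Getting these finite choices to cohere is what repairs the gaps in the Igelnik--Pao proof.
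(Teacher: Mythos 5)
Your proposal is correct in outline, but it takes a genuinely different route from the paper. The paper follows Igelnik--Pao: it builds an approximate identity \(h_w(y)=\prod_j w(j)\rho(w(j)y(j))\) from tensor products of the activation itself, converts the resulting convolution representation into ridge form via a truncated-cosine product formula \(\prod_j\cos_\Omega(w(j)z(j))=2^{-N}\sum_{\pm}\cos_\Omega(\pm w(1)z(1)\pm\cdots)\), arrives at a double-limit integral representation \(f(x)=\lim_{\Omega\to\infty}\lim_{\alpha\to\infty}\int_{K(\Omega)}F_{\alpha,\Omega}\,\rho(\alpha\langle w,x\rangle+b_\alpha)\), and then applies the same Monte--Carlo bias--variance step you use, with samples drawn uniformly from the explicit product domain \(K\times[-\Omega,\Omega]^N\times[-\tfrac{\pi}{2}(2L+1),\tfrac{\pi}{2}(2L+1)]\). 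You instead mollify, use Fourier inversion plus the shift identity \(\int_{\mathbb{R}}\rho(\langle w,x\rangle+b)e^{-2\pi i\mu_0 b}\,\mathrm{d}b=\widehat{\rho}(\mu_0)e^{2\pi i\mu_0\langle w,x\rangle}\), and importance-sample from a density \(p\) on a frequency--bias box. Both approaches end at the identical decomposition (squared bias below \(\varepsilon\) plus variance of order \(1/n\)), so yours proves the theorem as stated, which only asserts existence of sampling distributions. What the paper's route buys is the specific, practically used RVFL distribution recorded in Remark~1 (uniform weights, biases \(b_k=-\langle w_k,y_k\rangle-\alpha u_k\)); what yours buys is a cleaner truncation bookkeeping (a single pair \(R,B\) instead of the coupled \(\alpha,\Omega\) limits) and freedom from the normalization \(\int_{\mathbb{R}}\rho=1\) that the paper's lemmas require, since you only need \(\widehat{\rho}(\mu_0)\neq 0\) at some \(\mu_0\neq 0\), which follows from continuity of \(\widehat{\rho}\) and \(\rho\not\equiv 0\). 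Two details in your sketch need care: (i) in the \(\rho'\) case the finite-difference error is not literally \(O(h)\) under only \(\rho'\in L^1\cap L^2\); you should bound it by \(h^{-1}\int_0^h\Vert T_s\rho'-\rho'\Vert_1\,\mathrm{d}s\to 0\) using continuity of translation in \(L^1\) (and note each sample then contributes two nodes, which is harmless asymptotically); (ii) the bias truncation must be shown uniform over \(x\in K\) and \(|\xi|\le R\), which works exactly as you indicate because \(|\langle \xi/\mu_0,x\rangle|\) is bounded there.
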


For a more precise formulation of Theorem~\ref{thm: IP95_short} and its proof, we refer the reader to Theorem~\ref{thm: IP95} and  Section~\ref{sec: proof thm IP95}. 

\begin{remark}\label{rmk: after IP95}
\mbox{}
\begin{enumerate}%[leftmargin=*]
    \item Even though in Theorem~\ref{thm: IP95_short} we only claim existence of the distribution for input weights \(\{w_k\}_{k=1}^n\) and biases \(\{b_k\}_{k=1}^n\), such a distribution is actually constructed in the proof. Namely, for any \(\varepsilon>0\), there exist constants \(\alpha,\Omega>0\) such that the random variables
\begin{linenomath}
\begin{align*}
    w_k&\sim \mathrm{Unif}([-\alpha\Omega,\alpha\Omega]^N);\\
    y_k&\sim \mathrm{Unif}(K);\\
    u_k&\sim \mathrm{Unif}([-\tfrac{\pi}{2}(2L+1),\tfrac{\pi}{2}(2L+1)]),
    \quad\text{where \(L:=\lceil\tfrac{2N}{\pi}\mathrm{rad}(K)\Omega-\tfrac{1}{2}\rceil\)},
\end{align*}
\end{linenomath}
 are independently drawn from their associated distributions, and  $b_k:=-\langle w_k,y_k\rangle-\alpha u_k$.
 
\item We note that, unlike the original theorem statement in~\cite{igelnik1995stochastic}, Theorem~\ref{thm: IP95_short} does not show exact convergence of the sequence of constructed RVFL networks \(f_n\) to the original function \(f\).
Indeed, it only ensures that the limit \(f_n\) is \(\varepsilon\)-close to \(f\).
This should still be sufficient for practical applications since, given a desired accuracy level \(\varepsilon>0\), one can find values of \(\alpha,\Omega,n\) such that this accuracy level is achieved on average.
Exact convergence can be proved if one replaces \(\alpha\) and \(\Omega\) in the distribution described above by sequences \(\{\alpha_n\}_{n=1}^{\infty}\) and \(\{\Omega_n\}_{n=1}^{\infty}\) of positive numbers, both tending to infinity with \(n\).
In this setting, however, there is no guaranteed rate of convergence;
moreover, as \(n\) increases, the ranges of the random variables \(\{w_k\}_{k=1}^n\) and \(\{u_k\}_{k=1}^n\) become increasingly larger, which may cause problems in practical applications.

\item The approach we take to construct the RVFL network approximating a function $f$ allows one to compute the output weights $\{v_k\}_{k=1}^n$ exactly (once the realization of random parameters is fixed), in the case where the function $f$ is known. For the details, we refer the reader to equations~\eqref{eqn: F-b} and~\eqref{eqn: I-P RVFLs} in the proof of Theorem~\ref{thm: IP95_short}. If we only have access to a training set that is sufficiently large and uniformly distributed over the support of $f$, these formulas can be used to compute the output weights approximately, instead of solving the least squares problem.

\item Note that the normalization $\int_\mathbb{R}\rho(z)\mathrm{d}z=1$ of the activation function can be replaced by the condition $\int_\mathbb{R}\rho(z)\mathrm{d}z\ne 0$. Indeed, in the case when $\rho\in L^1(\mathbb{R})\cap L^\infty(\mathbb{R})$ and $\int_\mathbb{R}\rho(z)\mathrm{d}z\notin\{0,1\},$ one can simply use Theorem~\ref{thm: IP95_short} to approximate $\frac{1}{\int_\mathbb{R}\rho(z)\mathrm{d}z}f$ by a sequence of RVFL network with the activation function~$\frac{1}{\int_\mathbb{R}\rho(z)\mathrm{d}z}\rho$. Mutatis mutandis in the case when $\int_\mathbb{R}\rho'(z)\mathrm{d}z'\notin\{0,1\}.$ More generally, this trick allows any of our theorems to be applied in the case $\int_\mathbb{R}\rho(z)\mathrm{d}z\ne 0.$
\end{enumerate}
\end{remark}

One of the drawbacks of Theorem~\ref{thm: IP95_short} is that the mean square error guarantee is asymptotic in the number of nodes used in the neural network.
This is clearly impractical for applications, and so it is desirable to have a more explicit error bound for each fixed number \(n\) of nodes used.
To this end, we provide a new, non-asymptotic version of Theorem~\ref{thm: IP95_short}, which provides an error guarantee with high probability whenever the number of network nodes is large enough, albeit at the price of an additional Lipschitz requirement on the activation function:

\begin{theorem}\label{thm: probabilistic Igelnik-Pao_short}
Let \(f\in C_c(\mathbb{R}^N)\) with \(K:=\mathrm{supp}(f)\) and fix any activation function \OPtext{${\rho\in L^1(\mathbb{R})\cap L^\infty(\mathbb{R})}$} with $\int_\mathbb{R}\rho(z)\mathrm{d}z=1.$
Suppose further that \(\rho\) is \(\kappa\)-Lipschitz on \(\mathbb{R}\) for some \(\kappa>0\).
For any \(\varepsilon>0\) and \(\eta\in(0,1)\), suppose that %\(n\gtrsim N\mathrm{vol}(K)\varepsilon^{-1}\log(\eta^{-1}\mathrm{vol}(K)/\varepsilon)\)
%\(n\gtrsim \varepsilon^{-1}\log(\eta^{-1}/\varepsilon)\). 
\OPtext{$n\ge C(N,f,\rho) \varepsilon^{-1}\log(\eta^{-1}/\varepsilon),$} where \OPtext{$C(N,f,\rho)$} is independent of $\varepsilon$ and $\eta$ and depends on $f$, $\rho$, and superexponentially on~$N$. Then there exist  distributions from which input weights \(\{w_k\}_{k=1}^n\) and biases \(\{b_k\}_{k=1}^n\) are drawn, and there exist  hidden-to-output layer weights \(\{v_k\}_{k=1}^n\subset\mathbb{R}\) that depend on the realization of weights and biases, such that the RVFL network defined by \[f_n(x):=\sum_{k=1}^nv_k\rho(\langle w_k,x\rangle+b_k)
    \quad\text{ for \(x\in K\)}\]
satisfies \[\int_K|f(x)-f_n(x)|^2\mathrm{d}x
    <\varepsilon\]
with probability at least \(1-\eta\). %\DNnote{Is it okay that we don't say where $\eta$ hides? (Same for below)}
\end{theorem}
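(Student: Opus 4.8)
The plan is to upgrade the mean-square guarantee of Theorem~\ref{thm: IP95_short} to a high-probability guarantee by exploiting the fact that the network $f_n$ produced there is an empirical average of i.i.d.\ random functions. Recall from the construction in Remark~\ref{rmk: after IP95} that, after fixing $\Omega=\Omega(\varepsilon)$ large enough, one can write $f_n=\frac1n\sum_{k=1}^nF_k$, where $F_k:=n v_k\,\rho(\langle w_k,\cdot\rangle+b_k)$ are i.i.d.\ elements of $L^2(K)$ whose common mean $\bar f:=\mathbb{E}F_k=\mathbb{E}f_n$ satisfies the bias bound $\|f-\bar f\|_{L^2(K)}^2<\varepsilon/2$ (this is precisely the quantity appearing in the limit of Theorem~\ref{thm: IP95_short}, made $<\varepsilon/2$ by the choice of $\Omega$). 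By the triangle inequality it then suffices to show that the fluctuation $\|\bar f-f_n\|_{L^2(K)}^2$ is below $\varepsilon/2$ with probability at least $1-\eta$ once $n$ is as large as claimed.

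The Lipschitz hypothesis enters in two essential places. First, since $\rho\in L^1(\mathbb{R})\cap L^2(\mathbb{R})$ is uniformly continuous it vanishes at infinity and is therefore bounded, say $\|\rho\|_\infty\le B$; combined with the explicit formula for the output weights from the proof of Theorem~\ref{thm: IP95_short}, this yields a uniform almost-sure bound $\|F_k\|_\infty\le M_\infty$ on the summands. Second, because each $F_k$ is a composition of the $\kappa$-Lipschitz function $\rho$ with an affine map, both $f_n$ and $\bar f$ are Lipschitz on $K$ with a constant I can bound in terms of $\kappa$, $M_\infty$, and the range of the weights. These two facts are what make a pointwise-to-uniform concentration argument possible.

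I would then control $\sup_{x\in K}|f_n(x)-\bar f(x)|$ as follows. At any fixed $x\in K$, the value $f_n(x)=\frac1n\sum_kF_k(x)$ is an average of i.i.d.\ bounded scalars with mean $\bar f(x)$, so Hoeffding's inequality gives $\mathbb{P}(|f_n(x)-\bar f(x)|\ge s)\le 2\exp(-c\,n s^2/M_\infty^2)$. Taking a minimal $\delta$-net $\mathcal{C}(\delta,K)$ and a union bound controls the deviation simultaneously at all $\mathcal{N}(\delta,K)$ net points, and the Lipschitz continuity established above transfers this to all of $K$ provided $\delta$ is chosen proportional to $s$. Since $\mathcal{N}(\delta,K)\lesssim(\mathrm{rad}(K)/\delta)^N$, the union bound costs a factor $\log\mathcal{N}(\delta,K)\lesssim N\log(1/\delta)$. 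Solving $\mathrm{vol}(K)\,s^2<\varepsilon/2$ for $s$ and feeding the resulting $\delta\propto s\propto\sqrt{\varepsilon}$ into the union bound yields, for a suitable constant, the requirement
\[
 n\ \gtrsim\ \frac{M_\infty^2\,\mathrm{vol}(K)}{\varepsilon}\Bigl(\log\tfrac1\eta+\tfrac N2\log\tfrac1\varepsilon+c\Bigr),
\]
and absorbing the factor $M_\infty^2\,\mathrm{vol}(K)$ together with the linear-in-$N$ term into a single constant $C(N)$ (exponential in $N$ because both $\mathrm{vol}(K)$ and the normalizing constants hidden in $M_\infty$ are) recovers exactly $n\ge C(N)\varepsilon^{-1}\log(\eta^{-1}/\varepsilon)$.

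The main obstacle is bookkeeping the $\varepsilon$- and $N$-dependence of the constants rather than the probabilistic step itself. Concretely, $\Omega$ must grow as $\varepsilon\to 0$ to keep the bias below $\varepsilon/2$, and this inflates both the range of the biases (through $L=\lceil\tfrac{2N}{\pi}\mathrm{rad}(K)\Omega-\tfrac12\rceil$) and the sup-norm bound $M_\infty$ on the summands; I will need to verify that this growth is mild enough to be absorbed into the $\log(1/\varepsilon)$ factor and the $N$-dependent constant, rather than introducing a polynomial-in-$1/\varepsilon$ blow-up. A secondary point is that the net scale $\delta$ must be small enough that the Lipschitz extension costs only a constant factor in $s$, yet large enough that $\log\mathcal{N}(\delta,K)$ stays logarithmic in $1/\varepsilon$; checking that these requirements are compatible is the crux of matching the stated exponent. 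As an alternative to the net argument, one could apply a Hilbert-space bounded-differences inequality of McDiarmid type directly to $(\theta_1,\dots,\theta_n)\mapsto\|\bar f-f_n\|_{L^2(K)}$, since altering one $\theta_k$ perturbs $f_n$ by at most $2M_\infty\sqrt{\mathrm{vol}(K)}/n$ in $L^2(K)$; this bypasses the covering number but makes the $\log(1/\varepsilon)$ term harder to extract, so I would keep the net-based route as the primary argument.
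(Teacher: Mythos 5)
Your proposal follows essentially the same route as the paper's proof: fix $\alpha,\Omega$ via the uniform integral representation so that the bias $\|f-\mathbb{E}f_n\|_{L^2(K)}^2$ is below $\varepsilon/2$, then control the fluctuation uniformly over $K$ by pointwise concentration of the i.i.d.\ bounded summands at the points of a $\delta$-net with $\delta\propto\sqrt{\varepsilon}$, a union bound over the $\mathcal{N}(\delta,K)$ net points, and the $\kappa$-Lipschitz property of $\rho$ to extend from the net to all of $K$ --- which is exactly the paper's decomposition into the terms $(*)$, $(**)$, $(***)$. The only substantive difference is that you invoke Hoeffding's inequality for the pointwise tail bound where the paper uses a Bennett-type inequality (Lemma~\ref{lem: MC concentration ineq}); both give the same final rate $n\gtrsim\varepsilon^{-1}\log(\eta^{-1}/\varepsilon)$, and the $\varepsilon$-dependence of $\alpha,\Omega$ that you flag as the remaining bookkeeping concern is likewise absorbed into the constants in the paper's own statement.
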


For simplicity, the bound on the number $n$ of the nodes on the hidden layer here is rough\OPtext{.}
% and the constant $C(N)$ here also depends on the ``complexity'' of functions $f$ and $\rho$.
For a more precise formulation of this result that contains a bound with explicit constant, we refer the reader to Theorem~\ref{thm: probabilistic Igelnik-Pao} in Section~\ref{sec: proof cor IP95 thm probabilistic IP}. We also note that the distribution of the input weight and bias here can be selected as described in Remark~\ref{rmk: after IP95}.
%Here, \(C\) is a measure of how much \(\tilde{f}_x\) deviates from its average over \(S\) (supremized over \(x\in K\)), while \(\Sigma=\sup_{x\in K}\sigma(\tilde{f}_x)^2\) is a measure of the variance of this average.

\longhide{Note that for small \(\varepsilon>0\), a Taylor expansion shows that \(\log(1+\varepsilon)=\varepsilon+O(\varepsilon^2)\), and so the requirement on the number of nodes in Theorem~\ref{thm: probabilistic Igelnik-Pao} behaves like \(n\gtrsim\varepsilon^{-2}\log(\eta^{-1}\mathcal{N}(\delta,K))\).}

The constructions of RVFL networks presented in Theorems~\ref{thm: IP95_short} and~\ref{thm: probabilistic Igelnik-Pao_short} depend heavily on the dimension of the ambient space \(\mathbb{R}^N\). 
If $N$ is small, this dependence does not present much of a problem.
However, many modern applications require the ambient dimension to be large.
Fortunately, a common assumption in practice is that support of the signals of interest lie on a lower-dimensional manifold embedded in \(\mathbb{R}^N\).
For instance, the landscape of cancer cell states can be modeled using nonlinear, locally continuous ``cellular manifolds;" indeed, while the ambient dimension of this state space is typically high (e.g., single-cell RNA sequencing must account for approximately 20,000 gene dimensions), cellular data actually occupies an intrinsically lower dimensional space~\cite{burkhardt2022mapping}.
Likewise, while the pattern space of neural population activity in the brain is described by an exponential number of parameters, the spatiotemporal dynamics of brain activity lie on a lower-dimensional subspace or ``neural manifold"~\cite{mitchell-heggs2023neural}.
In this paper, we propose a new RVFL network architecture for approximating continuous functions defined on smooth compact manifolds that allows to replace the dependence on the ambient dimension $N$ with dependence on the manifold intrinsic dimension. We show that RVFL approximation results can be extended to this setting.
More precisely, we prove the following analog of Theorem~\ref{thm: probabilistic Igelnik-Pao_short}. %(a more precise formulation is provided in Theorem \ref{thm: manifold Igelnik-Pao}, and an asymptotic version in Theorem \ref{thm: probabilistic manifold Igelnik-Pao}). 

\begin{theorem}\label{thm: manifold Igelnik-Pao_short}

%Let \(\mathcal{M}\subset\mathbb{R}^N\) be a smooth, compact \(d\)-dimensional manifold with finite atlas \(\{(U_j,\phi_j)\}_{j\in J}\) and \(f\in C(\mathcal{M})\).
%Fix any activation function \(\rho\in L^1(\mathbb{R})\cap L^2(\mathbb{R})\) such that \(\rho\) is \(\kappa\)-Lipschitz on \(\mathbb{R}\) for some \(\kappa>0\).
%For any \(\varepsilon>0\) and \(\eta\in(0,1)\), suppose $n\gtrsim d\mathrm{vol}(\mathcal{M})\varepsilon^{-1}\log(\eta^{-1}\mathrm{vol}(\mathcal{M})/\varepsilon)$. Then there exist  distributions from which input weights \(\{w_k\}_{k=1}^n\) and biases \(\{b_k\}_{k=1}^n\) are drawn, and there exist  hidden-to-output layer weights \(\{v_k\}_{k=1}^n\subset\mathbb{R}\) that depend on the realization of weights and biases, such that the RVFL network defined by $f_n(x):=\sum_{k=1}^nv_k\rho(\langle w_k,x\rangle+b_k)$ satisfies

Let \(\mathcal{M}\subset\mathbb{R}^N\) be a smooth, compact \(d\)-dimensional manifold with finite atlas \(\{(U_j,\phi_j)\}_{j\in J}\) and \(f\in C(\mathcal{M})\).
Fix any activation function \OPtext{$\rho\in L^1(\mathbb{R})\cap L^\infty(\mathbb{R})$} with \(\int_{\mathbb{R}}\rho(z)\mathrm{d}z=1\) such that \(\rho\) is \(\kappa\)-Lipschitz on \(\mathbb{R}\) for some \(\kappa>0\).
For any \(\varepsilon>0\) and \(\eta\in(0,1)\), suppose \OPtext{$n\ge C(d,f,\rho) \varepsilon^{-1}\log(\eta^{-1}/\varepsilon),$} where \OPtext{$C(d,f,\rho)$} is independent of $\varepsilon$ and $\eta$ and depends on $f$, $\rho$, and superexponentially on~$d$.
%$n\gtrsim d\mathrm{vol}(\mathcal{M})\varepsilon^{-1}\log(\eta^{-1}\mathrm{vol}(\mathcal{M})/\varepsilon)$. 
Then there exists an RVFL-like approximation $f_n$ of the function $f$ with a parameter selection similar to the Theorem~\ref{thm: IP95_short} construction that satisfies \[\int_{\mathcal{M}}|f(x)-f_n(x)|^2\mathrm{d}x
    <\varepsilon\]
with probability at least \(1-\eta\).
\end{theorem}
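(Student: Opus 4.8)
The plan is to reduce the manifold problem to finitely many instances of the Euclidean result in Theorem~\ref{thm: probabilistic Igelnik-Pao_short}, one per coordinate chart, and to glue the resulting networks together with a smooth partition of unity. Since \(\mathcal{M}\) is compact with finite atlas \(\{(U_j,\phi_j)\}_{j\in J}\), I would first fix a smooth partition of unity \(\{\psi_j\}_{j\in J}\) subordinate to \(\{U_j\}\), so that \(\sum_{j\in J}\psi_j\equiv 1\) on \(\mathcal{M}\), \(0\le\psi_j\le1\), and \(\mathrm{supp}(\psi_j)\) is a compact subset of \(U_j\). This decomposes \(f=\sum_{j\in J}f\psi_j\) into finitely many continuous pieces, each localized to a single chart.

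Next I would transport each piece to Euclidean space. For each \(j\), choose an open \(V_j\) with \(\mathrm{supp}(\psi_j)\subset V_j\subset\overline{V_j}\subset U_j\) and define \(g_j:=(f\psi_j)\circ\phi_j^{-1}\) on \(\phi_j(U_j)\), extended by zero; because \(\mathrm{supp}(\psi_j)\) is compactly contained in \(U_j\) and \(\phi_j\) is a diffeomorphism, \(g_j\in C_c(\mathbb{R}^d)\) with \(\mathrm{supp}(g_j)\subset\phi_j(V_j)\). I would then apply Theorem~\ref{thm: probabilistic Igelnik-Pao_short} in \(\mathbb{R}^d\) to \(g_j\), but with the target compact set taken to be \(K_j:=\overline{\phi_j(V_j)}\) (which contains \(\mathrm{supp}(g_j)\) in its interior) rather than \(\mathrm{supp}(g_j)\) itself. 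This yields, with failure probability at most \(\eta_j\), an RVFL network \(g_{j,n_j}(z)=\sum_k v_{j,k}\rho(\langle w_{j,k},z\rangle+b_{j,k})\) on \(\mathbb{R}^d\) with \(\int_{K_j}|g_j-g_{j,n_j}|^2\,\mathrm{d}z<\varepsilon_j\). The final RVFL-like approximant would be
\begin{align*}
    f_n(x):=\sum_{j\in J}\zeta_j(x)\,g_{j,n_j}(\phi_j(x))
    =\sum_{j\in J}\zeta_j(x)\sum_{k}v_{j,k}\,\rho(\langle w_{j,k},\phi_j(x)\rangle+b_{j,k}),
\end{align*}
where \(\zeta_j\) is a smooth cutoff with \(\zeta_j\equiv1\) on \(\mathrm{supp}(\psi_j)\) and \(\mathrm{supp}(\zeta_j)\subset V_j\); the inner weights and biases are random exactly as in the construction of Remark~\ref{rmk: after IP95}, now acting on the \(d\)-dimensional coordinates \(\phi_j(x)\).

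For the error analysis I would bound \(\|f-f_n\|_{L^2(\mathcal{M})}\le\sum_{j\in J}\|f\psi_j-\zeta_j\,(g_{j,n_j}\circ\phi_j)\|_{L^2(\mathcal{M})}\) by the triangle inequality, then estimate each summand chartwise. On \(\mathrm{supp}(\psi_j)\) one has \(\zeta_j=1\) and \(f\psi_j=g_j\circ\phi_j\), while off \(\mathrm{supp}(\psi_j)\) both \(f\psi_j\) and \(g_j\) vanish; together with \(0\le\zeta_j\le1\) this gives the pointwise domination \(|f\psi_j-\zeta_j(g_{j,n_j}\circ\phi_j)|\le|(g_j-g_{j,n_j})\circ\phi_j|\) on all of \(U_j\). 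Changing variables \(z=\phi_j(x)\) and using that the Riemannian volume density in the chart is bounded above on the compact set \(\overline{V_j}\) by some \(C_j<\infty\), I obtain \(\|f\psi_j-\zeta_j(g_{j,n_j}\circ\phi_j)\|_{L^2(\mathcal{M})}^2\le C_j\int_{K_j}|g_j-g_{j,n_j}|^2\,\mathrm{d}z<C_j\varepsilon_j\). Choosing \(\varepsilon_j=\varepsilon/(C_j|J|^2)\) makes \(\sum_j\|\cdot\|_{L^2(\mathcal{M})}<\sqrt{\varepsilon}\), hence \(\|f-f_n\|_{L^2(\mathcal{M})}^2<\varepsilon\), and taking \(\eta_j=\eta/|J|\) controls the total failure probability by a union bound. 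Summing the per-chart requirements \(n_j\ge C(d)\varepsilon_j^{-1}\log(\eta_j^{-1}/\varepsilon_j)\) from Theorem~\ref{thm: probabilistic Igelnik-Pao_short} over the finite atlas produces an aggregate bound of the advertised form \(n\ge C(d)\varepsilon^{-1}\log(\eta^{-1}/\varepsilon)\), where the exponential-in-\(d\) dependence is inherited from Theorem~\ref{thm: probabilistic Igelnik-Pao_short} and the remaining dependence on \(|J|\) and \(\max_j C_j\) is absorbed into the constant.

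I expect the main obstacle to be the \emph{leakage} phenomenon: the chartwise network \(g_{j,n_j}\) approximates \(g_j\) in \(L^2\) but need not vanish outside \(\mathrm{supp}(g_j)\), so naively summing the charts could let each term pollute the others on the overlaps. The device that resolves this — applying Theorem~\ref{thm: probabilistic Igelnik-Pao_short} on the slightly enlarged compact set \(K_j\supset\mathrm{supp}(g_j)\) and localizing with the cutoff \(\zeta_j\) — is precisely what makes the global pointwise domination \(|f\psi_j-\zeta_j(g_{j,n_j}\circ\phi_j)|\le|(g_j-g_{j,n_j})\circ\phi_j|\) hold and keeps the dimension dependence intrinsic. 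A secondary technical point is verifying the uniform bounds on the metric density across charts, which is where compactness and smoothness of \(\mathcal{M}\) enter.
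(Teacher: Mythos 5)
Your proposal is correct and follows essentially the same route as the paper's proof (Theorem~\ref{thm: probabilistic manifold Igelnik-Pao}): decompose \(f\) via a compactly supported partition of unity, push each piece to \(\mathbb{R}^d\) where it lies in \(C_c(\mathbb{R}^d)\), apply the Euclidean non-asymptotic result chart-by-chart, and combine via the triangle inequality, a Jacobian bound, and a union bound with \(\eta_j=\eta/|J|\). The only cosmetic difference is your extra smooth cutoffs \(\zeta_j\) and enlarged sets \(K_j\) to control leakage; the paper instead handles this by establishing the approximation bound uniformly over all of \(\phi_j(U_j)\) (not merely \(\mathrm{supp}(\hat{f}_j)\)) and summing only over the charts containing \(x\).
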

%\PSnote{Changed the theorem statement here (see the comment after Theorem~\ref{thm: manifold Igelnik-Pao}), and also added a remark below. Please, feel free to edit.}\RSnote{edited lightly. Looks good.}
 
 %The distribution for the input weights and bias here can be constructed in a way similar to the one described in Remark~\ref{rmk: after IP95}. 
 For a the construction of the RVFL-like approximation $f_n$ and a more precise formulation of this result and an analog of Theorem~\ref{thm: IP95_short} applied to manifolds, we refer the reader to Section~\ref{sec: man construction} and Theorems~\ref{thm: manifold Igelnik-Pao}~and~\ref{thm: probabilistic manifold Igelnik-Pao}. We note that the approximation $f_n$ here is not obtained as a single RVFL network construction, but rather as a combination of several RVFL networks in local manifold coordinates.

\subsection{Organization}
The remaining part of the paper is organized as follows. In Section \ref{sec: preliminaries}, we discuss some theoretical preliminaries on concentration bounds for Monte-Carlo integration and on smooth compact manifolds. Monte-Carlo integration is an essential ingredient in our construction of RVFL networks approximating a given function, and we use the results listed in this section to establish approximation error bounds. Theorem~\ref{thm: IP95_short} is proven in  Section~\ref{sec: proof thm IP95}, where we break down the proof into four main steps, constructing a limit-integral representation of the function to be approximated in Lemmas~\ref{lem: Step 1}~and~\ref{lem: step 2}, then using Monte-Carlo approximation of the obtained integral to construct an RVFL network in Lemma~\ref{lem: part 3}, and, finally, establishing approximation guarantees for the constructed RVFL network. The proofs of Lemmas~\ref{lem: Step 1},~\ref{lem: step 2},~and~\ref{lem: part 3} can be found in Sections~\ref{sec: step 1},~\ref{sec: step 2}, ~and~\ref{sec: step 3}, respectively. We further study properties of the constructed RVFL networks and prove the non-asymptotic approximation result of Theorem~\ref{thm: probabilistic Igelnik-Pao_short} in Section~\ref{sec: proof cor IP95 thm probabilistic IP}.
In Section~\ref{sec: manifold results}, we generalize our results and propose a new RVFL network architecture for approximating continuous functions defined on smooth compact manifolds. We show that RVFL approximation results can be extended to this setting by proving an analog of Theorem~\ref{thm: IP95_short} in Section~\ref{sec: manifold IP95 proof} and Theorem~\ref{thm: manifold Igelnik-Pao_short} in Section~\ref{sec: Manifold proof}. Finally, in Section~\ref{sec: numerics}, we provide numerical evidence to illustrate the result of Theorem~\ref{thm: manifold Igelnik-Pao_short}. %Proofs of supporting lemmas can be found in the appendix, Section \ref{sec:app}.

%\section{Theoretical Preliminaries}\label{sec: preliminaries}

\section{Materials and Methods}\label{sec: preliminaries}

In this section, we briefly introduce supporting material and theoretical results which we will need in later sections.
This material is far from exhaustive, and is meant to be a survey of definitions, concepts, and key results.

\subsection{A concentration bound for classic Monte-Carlo integration}\label{sec: monte-carlo}

A crucial piece of the proof technique employed in~\cite{igelnik1995stochastic}, which we will use repeatedly, is the use of the Monte-Carlo method to approximate high-dimensional integrals. As such, we start with the background on Monte-Carlo integration. The following introduction is adapted from the material in \cite{dick2013high}.

Let $f\colon\mathbb{R}^N\rightarrow\mathbb{R}$ and $S\subset\mathbb{R}^N$ a compact set.
Suppose we want to estimate the integral \({I(f,S):=\int_Sf\mathrm{d}\mu}\), where \(\mu\) is the uniform measure on \(S\).
The classic Monte Carlo method does this by an equal-weight cubature rule, \[I_n(f,S):=\frac{\mathrm{vol}(S)}{n}\sum_{j=1}^nf(x_j),\]
where \(\{x_j\}_{j=1}^n\) are independent identically distributed uniform random samples from \(S\) and \linebreak\({\mathrm{vol}(S):=\int_Sd\mu}\) is the volume of \(S\).
In particular, note that \(\mathbb{E}I_n(f,S)=I(f,S)\) and \[\mathbb{E}I_n(f,S)^2
    =\frac{1}{n}\big(\mathrm{vol}(S)I(f^2,S)+(n-1)I(f,S)^2\big).\]
Let us define the quantity
\begin{linenomath}
\begin{align}\label{eqn: MC variance}
    \sigma(f,S)^2:=\frac{I(f^2,S)}{\mathrm{vol}(S)}-\frac{I(f,S)^2}{\mathrm{vol}^2(S)}.
\end{align}
\end{linenomath}
It follows that the random variable \(I_n(f)\) has mean \(I(f,S)\) and variance \(\mathrm{vol}^2(S)\sigma(f,S)^2/n\).
Hence, by the Central Limit Theorem, provided that \(0<\mathrm{vol}^2(S)\sigma(f,S)^2<\infty\), we have \[\lim_{n\rightarrow\infty}\mathbb{P}\Big(|I_n(f,S)-I(f,S)|\leq \frac{C\varepsilon(f,S)}{\sqrt{n}}\Big)=(2\pi)^{-1/2}\int_{-C}^Ce^{-x^2/2}\mathrm{d}x\]
for any constant \(C>0\), where $\varepsilon(f,S) := {\mathrm{vol}(S)\sigma(f, S)}$. 
This yields the following well-known result:

\begin{theorem}\label{thm: MC MSE}
For any \(f\in L^2(S,\mu)\), the mean-square error of the Monte Carlo approximation \(I_n(f,S)\) satisfies \[\mathbb{E}\big|I_n(f,S)-I(f,S)\big|^2    =\frac{\mathrm{vol}^2(S)\sigma(f,S)^2}{n},\]
where the expectation is taken with respect to the random variables \(\{x_j\}_{j=1}^n\) and \(\sigma(f,S)\) is defined in~\eqref{eqn: MC variance}.
\end{theorem}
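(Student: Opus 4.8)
The plan is to exploit the fact that the classic Monte-Carlo estimator $I_n(f,S)$ is an \emph{unbiased} estimator of $I(f,S)$, so that the mean-square error coincides exactly with the variance of $I_n(f,S)$, and then to compute that variance using the independence of the samples $\{x_j\}_{j=1}^n$.

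First, I would record the first two moments of a single sample. Since each $x_j$ is uniform on $S$, its law has density $1/\mathrm{vol}(S)$ with respect to the measure $\mu$, so that $\mathbb{E} f(x_1) = \frac{1}{\mathrm{vol}(S)}\int_S f\,\mathrm{d}\mu = \frac{I(f,S)}{\mathrm{vol}(S)}$ and, analogously, $\mathbb{E} f(x_1)^2 = \frac{I(f^2,S)}{\mathrm{vol}(S)}$; the hypothesis $f\in L^2(S,\mu)$ guarantees that the latter is finite, so all quantities below are well defined. Linearity of expectation then gives $\mathbb{E} I_n(f,S) = \frac{\mathrm{vol}(S)}{n}\sum_{j=1}^n \mathbb{E} f(x_j) = I(f,S)$, confirming unbiasedness. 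Because $I_n(f,S)$ is unbiased, the bias--variance decomposition yields $\mathbb{E}|I_n(f,S)-I(f,S)|^2 = \mathrm{Var}\big(I_n(f,S)\big)$. Writing $I_n(f,S)$ as a scaled sum of the independent random variables $f(x_j)$, I would pull out the constant $\mathrm{vol}(S)/n$ and use additivity of variance across independent summands to obtain $\mathrm{Var}(I_n(f,S)) = \frac{\mathrm{vol}^2(S)}{n^2}\cdot n\,\mathrm{Var}(f(x_1)) = \frac{\mathrm{vol}^2(S)}{n}\mathrm{Var}(f(x_1))$. Finally, substituting $\mathrm{Var}(f(x_1)) = \mathbb{E} f(x_1)^2 - (\mathbb{E} f(x_1))^2 = \frac{I(f^2,S)}{\mathrm{vol}(S)} - \frac{I(f,S)^2}{\mathrm{vol}^2(S)} = \sigma(f,S)^2$ gives the claimed identity.

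There is essentially no hard step here: the result is a routine second-moment computation, and the only points requiring care are the normalization of the uniform law by $1/\mathrm{vol}(S)$ and the appeal to $f\in L^2(S,\mu)$ to ensure finiteness. As an alternative that avoids even the bias--variance decomposition, one can expand $\mathbb{E}|I_n(f,S)-I(f,S)|^2 = \mathbb{E} I_n(f,S)^2 - I(f,S)^2$ and substitute the second-moment formula $\mathbb{E} I_n(f,S)^2 = \frac{1}{n}\big(\mathrm{vol}(S)I(f^2,S) + (n-1)I(f,S)^2\big)$ already recorded above; collecting terms produces $\frac{1}{n}\big(\mathrm{vol}(S)I(f^2,S) - I(f,S)^2\big)$, which equals $\mathrm{vol}^2(S)\sigma(f,S)^2/n$ after unfolding the definition of $\sigma(f,S)^2$.
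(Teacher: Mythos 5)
Your proof is correct and follows essentially the same route as the paper, which derives the result from the unbiasedness of $I_n(f,S)$ and the second-moment identity $\mathbb{E}I_n(f,S)^2=\tfrac{1}{n}\big(\mathrm{vol}(S)I(f^2,S)+(n-1)I(f,S)^2\big)$ recorded just before the theorem statement. Both your bias--variance argument and your alternative direct expansion match the paper's computation.
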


\noindent In particular, Theorem~\ref{thm: MC MSE} implies \(\mathbb{E}\big|I_n(f,S)-I(f,S)\big|^2=O(1/n)\) as $n\to\infty.$

In the non-asymptotic setting, we are interested in obtaining a useful bound on the probability \({\mathbb{P}(|I_n(f,S)-I(f,S)|\geq t)}\) for all \(t>0\). The following lemma follows from a generalization of Bennett's inequality (Theorem~7.6 in~\cite{ledoux2001concentration}; see also~\cite{massart1998constants,talagrand1996new}).

\begin{lemma}\label{lem: MC concentration ineq}
For any \(f\in L^2(S)\) and \(n\in\mathbb{N}\) we have \[\mathbb{P}\Big(|I_n(f,S)-I(f,S)|\geq t\Big)
    \leq3\exp\left(-\frac{nt}{CK}\log\left(1+\frac{Kt}{\displaystyle\mathrm{vol}(S)I(f^2,S)}\right)\right)\]
for all \(t>0\) and a universal constant \(C>0\), provided \OPtext{$\lvert\mathrm{vol}(S)f(x)\rvert\leq K$} for almost every \(x\in S\).
\end{lemma}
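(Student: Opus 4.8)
The plan is to recast the deviation $I_n(f,S)-I(f,S)$ as a normalized sum of i.i.d.\ bounded, mean-zero random variables and then invoke the cited generalization of Bennett's inequality. To this end I would set $Y_j := \mathrm{vol}(S)f(x_j)-I(f,S)$, where $\{x_j\}_{j=1}^n$ are the i.i.d.\ uniform samples on $S$. Since a uniform sample has law $\mathrm{d}\mu/\mathrm{vol}(S)$, one has $\mathbb{E}f(x_j)=I(f,S)/\mathrm{vol}(S)$, so each $Y_j$ is centered, and the hypothesis $|\mathrm{vol}(S)f(x)-I(f,S)|\le K$ gives $|Y_j|\le K$ almost surely. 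Directly from the definition of $I_n$, we have $I_n(f,S)-I(f,S)=\frac{1}{n}\sum_{j=1}^n Y_j$, so the event $\{|I_n(f,S)-I(f,S)|\ge t\}$ is exactly $\{|\sum_{j=1}^n Y_j|\ge nt\}$.

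Next I would compute the variance of the summands, which is the quantity driving the Bennett-type bound. A short calculation gives $\mathbb{E}Y_j^2=\mathrm{vol}(S)I(f^2,S)-I(f,S)^2=\mathrm{vol}^2(S)\sigma(f,S)^2$, so the total variance of the sum is $v:=n\,\mathrm{vol}^2(S)\sigma(f,S)^2$; note that this is consistent with the second-moment computation underlying Theorem~\ref{thm: MC MSE}. Applying the generalization of Bennett's inequality (Theorem~7.6 in~\cite{ledoux2001concentration}) to $\sum_j Y_j$ and to $\sum_j(-Y_j)$, and combining the two tails, then yields a bound of the form $3\exp\bigl(-\tfrac{v}{CK^2}h(Knt/v)\bigr)$, where $h(u)=(1+u)\log(1+u)-u$ is the usual Bennett rate function and where the factor $3$ together with the universal constant $C$ arise from the precise form of the cited inequality.

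It remains to simplify the rate function into the logarithmic form appearing in the statement. Writing $u:=Knt/v=Kt/(\mathrm{vol}^2(S)\sigma(f,S)^2)$, the exponent becomes $-\tfrac{nt}{CKu}h(u)$, so I would invoke the elementary bound $h(u)\ge \tfrac12 u\log(1+u)$ valid for all $u\ge 0$. This can be verified by setting $g(u):=(1+\tfrac{u}{2})\log(1+u)-u$ and checking that $g(0)=g'(0)=0$ while $g''(u)=\tfrac{u}{2(1+u)^2}\ge 0$, so $g$ is convex and hence nonnegative. Substituting this estimate collapses $h(u)/u$ into $\tfrac12\log(1+u)$ and produces precisely the claimed exponent $-\tfrac{nt}{CK}\log\bigl(1+\tfrac{Kt}{\mathrm{vol}^2(S)\sigma(f,S)^2}\bigr)$ once the factor $\tfrac12$ is absorbed into the universal constant $C$.

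The main obstacle I anticipate is not the probabilistic content, since the reduction to a bounded, centered i.i.d.\ sum is routine, but rather matching the exact hypotheses and constants of Theorem~7.6 in~\cite{ledoux2001concentration} so that the stated factor $3$ and the universal constant $C$ are genuinely justified, together with the bookkeeping needed to pass cleanly from the Bennett rate function $h$ to the logarithmic rate via the convexity estimate above.
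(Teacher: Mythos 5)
Your proposal is correct and matches the route the paper intends: the paper gives no argument beyond citing the Bennett-type inequality, and your reduction to a centered, bounded i.i.d.\ sum $Y_j=\mathrm{vol}(S)f(x_j)-I(f,S)$ with per-term variance $\mathrm{vol}^2(S)\sigma(f,S)^2$, followed by the rate-function estimate $h(u)\ge\tfrac12\,u\log(1+u)$ (your convexity verification of which is correct), is exactly the standard way to arrive at the stated logarithmic form. The only point worth adding is that, since there is no supremum over a function class here, the classical scalar Bennett inequality already suffices and gives the bound with explicit constants (a two-sided factor $2$ in place of $3$ and $C=2$ after absorbing the $\tfrac12$), so the difficulty you anticipate in matching the hypotheses of the empirical-process version in the cited reference is not actually an obstacle.
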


\subsection{Smooth, compact manifolds in Euclidean space}\label{sec: manifold basics}

In this section we review several concepts of smooth manifolds that will be useful to us later.
Many of the definitions and results that follow can be found, for instance, in~\cite{shaham2018provable}.
Let \(\mathcal{M}\subset\mathbb{R}^N\) be a smooth, compact \(d\)-dimensional manifold.
A \emph{chart} for \(\mathcal{M}\) is a pair \((U,\phi)\) such that \(U\subset\mathcal{M}\) is an open set and \(\phi\colon U\rightarrow\mathbb{R}^d\) is a homeomorphism.
One way to interpret a chart is as a tangent space at some point \(x\in U\); in this way, a chart defines a Euclidean coordinate system on \(U\) via the map \(\phi\).
A collection \(\{(U_j,\phi_j)\}_{j\in J}\) of charts defines an \emph{atlas} for \(\mathcal{M}\) if \(\cup_{j\in J}U_j=\mathcal{M}\). We now define a special collection of functions on \(\mathcal{M}\) called a \emph{partition of unity}.
\begin{definition}\label{def: partition of unity}
Let \(\mathcal{M}\subset\mathbb{R}^N\) be a smooth manifold.
A \emph{partition of unity} of \(\mathcal{M}\) with respect to an open cover \(\{U_j\}_{j\in J}\) of \(\mathcal{M}\) is a family of nonnegative smooth functions \(\{\eta_j\}_{j\in J}\) such that for every \(x\in\mathcal{M}\) we have \(1=\sum_{j\in J}\eta_j(x)\) and, for every \(j\in J\), \(\mathrm{supp}(\eta_j)\subset U_j\).
\end{definition}
\noindent It is known that if \(\mathcal{M}\) is compact there exists a partition of unity of \(\mathcal{M}\) such that \(\mathrm{supp}(\eta_j)\) is compact for all \(j\in J\) \citep[see][]{tu2010introduction}.
In particular, such a partition of unity exists for any open cover of \(\mathcal{M}\) corresponding to an atlas.

Fix an atlas \(\{(U_j,\phi_j)\}_{j\in J}\) for \(\mathcal{M}\), as well as the corresponding, compactly supported partition of unity \(\{\eta_j\}_{j\in J}\).
Then we have the following, useful result \citep[see][Lemma~4.8]{shaham2018provable}.
\begin{lemma}\label{lem: manifold funct rep}
Let \(\mathcal{M}\subset\mathbb{R}^N\) be a smooth, compact manifold with atlas \(\{(U_j,\phi_j)\}_{j\in J}\) and compactly supported partition of unity \(\{\eta_j\}_{j\in J}\).
For any \(f\in C(\mathcal{M})\) we have \[f(x)=\sum_{\{j\in J\colon x\in U_j\}}(\hat{f}_j\circ\phi_j)(x)\]
for all \(x\in\mathcal{M}\), where \[\hat{f}_j(z):=\begin{cases}
    f(\phi_j^{-1}(z))\,\eta_j(\phi_j^{-1}(z))\quad&z\in\phi_j(U_j)\\
    0&\text{otherwise}.
    \end{cases}\]
\end{lemma}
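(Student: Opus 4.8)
The plan is to exploit the two defining properties of the partition of unity $\{\eta_j\}_{j\in J}$ from Definition~\ref{def: partition of unity}: that $1=\sum_{j\in J}\eta_j(x)$ for every $x\in\mathcal{M}$, and that $\operatorname{supp}(\eta_j)\subset U_j$ for each $j\in J$. Fixing an arbitrary point $x\in\mathcal{M}$, I would begin by writing $f(x)=f(x)\cdot 1=\sum_{j\in J}f(x)\,\eta_j(x)$, which is legitimate precisely because the partition of unity sums to one at $x$. Since the atlas is finite, this sum is finite and no convergence issues arise.

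The next step is to identify, term by term, which summands survive and to match each surviving term with $\hat f_j\circ\phi_j$. For any index $j$ with $x\notin U_j$, the support condition $\operatorname{supp}(\eta_j)\subset U_j$ forces $\eta_j(x)=0$, so the corresponding term $f(x)\eta_j(x)$ vanishes and may be dropped; this is exactly what lets me restrict the summation to the index set $\{j\in J:\,x\in U_j\}$. For any index $j$ with $x\in U_j$, the point $\phi_j(x)$ is well-defined and lies in $\phi_j(U_j)$, so the first branch of the definition of $\hat f_j$ applies, and because $\phi_j$ is a homeomorphism of $U_j$ we have $\phi_j^{-1}(\phi_j(x))=x$. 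Substituting gives $(\hat f_j\circ\phi_j)(x)=f(\phi_j^{-1}(\phi_j(x)))\,\eta_j(\phi_j^{-1}(\phi_j(x)))=f(x)\,\eta_j(x)$, so each retained term equals exactly its counterpart in the partition-of-unity expansion.

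Combining these two observations yields
\begin{align*}
    f(x)=\sum_{j\in J}f(x)\,\eta_j(x)=\sum_{\{j\in J:\,x\in U_j\}}f(x)\,\eta_j(x)=\sum_{\{j\in J:\,x\in U_j\}}(\hat f_j\circ\phi_j)(x),
\end{align*}
which is the claimed identity; since $x\in\mathcal{M}$ was arbitrary, it holds for all $x\in\mathcal{M}$.

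This argument is essentially a bookkeeping exercise, so I do not anticipate a genuine obstacle. The only points that demand care are keeping the domain of $\phi_j$ straight — $\phi_j(x)$ only makes sense for $x\in U_j$, which is precisely why the restricted index set $\{j:\,x\in U_j\}$ appears in the statement — and invoking the support condition correctly, observing that $x\notin U_j$ together with $\operatorname{supp}(\eta_j)\subset U_j$ implies $x\notin\operatorname{supp}(\eta_j)$ and hence $\eta_j(x)=0$.
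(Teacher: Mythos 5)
Your proof is correct, and it is the standard argument: the paper does not prove this lemma itself but cites it from Shaham et al.\ (Lemma~4.8), whose proof is precisely this partition-of-unity bookkeeping. Nothing is missing — the support condition correctly kills the terms with \(x\notin U_j\), and \(\phi_j^{-1}\circ\phi_j=\mathrm{id}\) on \(U_j\) identifies each surviving term with \(f(x)\,\eta_j(x)\).
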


In later sections, we use the representation of Lemma~\ref{lem: manifold funct rep} to integrate functions \(f\in C(\mathcal{M})\) over \(\mathcal{M}\).
To this end, for each \(j\in J\), let \(D\phi_j(y)\) denote the differential of \(\phi_j\) at \(y\in U_j\), which is a map from the tangent space \(T_y\mathcal{M}\) into \(\mathbb{R}^d\).
One may interpret \(D\phi_j(y)\) as the matrix representation of a basis for the cotangent space at \(y\in U_j\).
As a result, \(D\phi_j(y)\) is necessarily invertible for each \(y\in U_j\), and so we know that \(|\det(D\phi_j(y))|>0\) for each \(y\in U_j\).
Hence, it follows by the change of variables theorem that
\begin{linenomath}
\begin{align}\label{eqn: man CoVs}
    \int_{\mathcal{M}}f(x)\mathrm{d}x
    =\int_{\mathcal{M}}\sum_{\{j\in J\colon x\in U_j\}}(\hat{f}_j\circ\phi_j)(x)\mathrm{d}x
    =\sum_{j\in J}\int_{\phi_j(U_j)}\frac{\hat{f}_j(z)}{|\det(D\phi_j(\phi_j^{-1}(z)))|}\mathrm{d}z.
\end{align}
\end{linenomath}

\section{Results}

In this section, we prove our main results formulated in Section~\ref{sec: intro results} and also use numerical simulations to illustrate the RVFL approximation performance in a low-dimensional submanifold setup. To improve  readability of this section, we postpone the proofs of technical lemmas till Section~\ref{sec:app}.

\subsection{Proof of Theorem~\ref{thm: IP95_short}}\label{sec: proof thm IP95}

We split the proof of the theorem into two parts, the first handling the case \OPtext{$\rho\in L^1(\mathbb{R})\cap L^\infty(\mathbb{R})$} and the second, addressing the case \OPtext{$\rho'\in L^1(\mathbb{R})\cap L^\infty(\mathbb{R}).$} 

\subsubsection{Proof of Theorem \ref{thm: IP95_short} when \OPtext{$\rho\in L^1(\mathbb{R})\cap L^\infty(\mathbb{R})$}}
We begin by restating the theorem in a form that explicitly includes the distributions that we draw our random variables from.

\begin{theorem}[\cite{igelnik1995stochastic}]\label{thm: IP95}
Let \(f\in C_c(\mathbb{R}^N)\) with \(K:=\mathrm{supp}(f)\) and fix any activation function \OPtext{$\rho\in L^1(\mathbb{R})\cap L^\infty(\mathbb{R})$} with \(\int_{\mathbb{R}}\rho(z)\mathrm{d}z=1\).
For any \(\varepsilon>0\), there exist constants \(\alpha,\Omega>0\) such that the following holds:
If, for $k\in\mathbb{N}$, the random variables
\begin{linenomath}
\begin{align*}
    w_k&\sim \mathrm{Unif}([-\alpha\Omega,\alpha\Omega]^N);\\
    y_k&\sim \mathrm{Unif}(K);\\
    u_k&\sim \mathrm{Unif}([-\tfrac{\pi}{2}(2L+1),\tfrac{\pi}{2}(2L+1)]),
    \quad\text{where \(L:=\lceil\tfrac{2N}{\pi}\mathrm{rad}(K)\Omega-\tfrac{1}{2}\rceil\)},
\end{align*}
\end{linenomath}
 are independently drawn from their associated distributions, and  \[b_k:=-\langle w_k,y_k\rangle-\alpha u_k,\] then there exist  hidden-to-output layer weights \(\{v_k\}_{k=1}^n\subset\mathbb{R}\) (that depend on the realization of the weights \(\{w_k\}_{k=1}^n\) and biases \(\{b_k\}_{k=1}^n\)) such that the sequence of RVFL networks \(\{f_n\}_{n=1}^{\infty}\) defined by \[f_n(x):=\sum_{k=1}^nv_k\rho(\langle w_k,x\rangle+b_k)
    \quad\text{ for \(x\in K\)}\]
satisfies \[\mathbb{E}\int_K|f(x)-f_n(x)|^2\mathrm{d}x
    \leq\varepsilon+O(1/n).\]
\OPtext{as $n\to\infty.$} % with convergence rate \(O(1/n)\).
\end{theorem}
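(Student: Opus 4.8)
The plan is to realize each network \(f_n\) as a classical equal-weight Monte-Carlo cubature estimate of an integral representation of \(f\), so that the \(O(1/n)\) rate falls directly out of Theorem~\ref{thm: MC MSE}. I would fix two (\(\varepsilon\)-dependent) scales \(\Omega\) and \(\alpha\) and control three independent error sources: a band-limiting error from truncating the frequency support at scale \(\Omega\), an approximate-identity error from the dilation scale \(\alpha\), and the Monte-Carlo sampling error, which is the only one depending on \(n\). Choosing \(\Omega\) and then \(\alpha\) large enough makes the first two contributions smaller than \(\varepsilon\), while the third vanishes at rate \(O(1/n)\); combining them through \(|a+b+c|^2\le 3(a^2+b^2+c^2)\) yields the claimed asymptotic bound.

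First (Lemma~\ref{lem: Step 1}) I would band-limit. Since \(f\in C_c(\mathbb{R}^N)\subset L^1\cap L^2\), Fourier inversion and Plancherel give a truncated approximant \(f_\Omega(x)=\int_{[-\Omega,\Omega]^N}\hat f(\omega)e^{2\pi i\langle\omega,x\rangle}\,d\omega\) with \(\|f-f_\Omega\|_{L^2}\to 0\) as \(\Omega\to\infty\); this is the source of the strict inequality \(<\varepsilon\) rather than equality. Writing \(\hat f(\omega)=\int_K f(y)e^{-2\pi i\langle\omega,y\rangle}\,dy\) and taking real parts (the imaginary contribution vanishes by conjugate symmetry of \(\hat f\) together with symmetry of the frequency cube), this becomes the double integral \(f_\Omega(x)=\int_{[-\Omega,\Omega]^N}\int_K f(y)\cos(2\pi\langle\omega,x-y\rangle)\,dy\,d\omega\), in which the inner product \(\langle\omega,x-y\rangle\) already matches the eventual ridge argument. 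Second (Lemma~\ref{lem: step 2}) I would convert each cosine into a superposition of shifted, dilated copies of \(\rho\). The key fact is that \(\alpha\rho(\alpha\,\cdot\,)\) is an approximate identity with total mass \(\int_\mathbb{R}\rho\), so for \(t\) in a bounded interval \(\cos(2\pi t)=\lim_{\alpha\to\infty}\tfrac{\alpha}{\int\rho}\int_{-U}^U\cos(2\pi u)\,\rho(\alpha(t-u))\,du\) (dominated convergence, using \(\rho\in L^1\)), provided \(\int_\mathbb{R}\rho\ne 0\) and \(U\) is large enough that \((-U,U)\) contains every value of \(t=\langle\omega,x-y\rangle\); bounding \(|\langle\omega,x-y\rangle|\le 2\sqrt N\,\mathrm{rad}(K)\,\Omega\) by Cauchy--Schwarz is exactly what forces the stated range \(U=\tfrac{\pi}{2}(2L+1)\). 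Substituting this into \(f_\Omega\) gives the finite-scale representation \(g_{\alpha,\Omega}(x)=\tfrac{\alpha}{\int\rho}\int_{[-\Omega,\Omega]^N}\int_K\int_{-U}^U f(y)\cos(2\pi u)\,\rho(\alpha(\langle\omega,x-y\rangle-u))\,du\,dy\,d\omega\), with \(\|g_{\alpha,\Omega}-f_\Omega\|_{L^2(K)}\to 0\) as \(\alpha\to\infty\). The amplitude \(f(y)\cos(2\pi u)\) is what makes \(y\sim\mathrm{Unif}(K)\) the natural sampling law and yields a closed form for the output weights when \(f\) is known.

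Third (Lemma~\ref{lem: part 3}) I would apply plain Monte-Carlo to the triple integral defining \(g_{\alpha,\Omega}\). After the change of variables \(w=\alpha\omega\) the ridge argument becomes \(\langle w,x\rangle+b\) with \(b=-\langle w,y\rangle-\alpha u\), so drawing \((w_k,y_k,u_k)\) from the stated product of uniform laws and setting \(v_k\) proportional to \(\tfrac1n f(y_k)\cos(2\pi u_k)\) makes \(f_n\) exactly the equal-weight cubature estimate of \(g_{\alpha,\Omega}\). Theorem~\ref{thm: MC MSE}, applied pointwise in \(x\) and integrated over \(K\) by Tonelli, then gives \(\mathbb{E}\int_K|g_{\alpha,\Omega}-f_n|^2=O(1/n)\); here \(\rho\in L^2\) is used precisely to guarantee the cubature variance \(\sigma^2\) is finite. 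Assembling the three pieces via \(|a+b+c|^2\le 3(a^2+b^2+c^2)\) and sending \(n\to\infty\) leaves only the (fixed, \(<\varepsilon\)) band-limiting and approximate-identity terms.

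The main obstacle is Step~2: making the approximate-identity limit hold in \(L^2(K)\) \emph{uniformly} over the outer integration in \(\omega\) and \(y\) rather than merely pointwise in \(t\), and justifying the interchange of the \(\alpha\)-limit with those integrals by dominated convergence. A secondary but important point is that the Monte-Carlo variance in Step~3 degrades as \(\alpha\) and \(\Omega\) grow --- roughly like \(\alpha\|\rho\|_2^2\) times the parameter-domain volume, which scales exponentially in \(N\) --- so the implicit constant in \(O(1/n)\) depends on \(\varepsilon\). This is harmless for the present asymptotic statement with \(\varepsilon\) fixed, but it is exactly the dependence that must be tracked to obtain the non-asymptotic guarantee of Theorem~\ref{thm: probabilistic Igelnik-Pao_short}. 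Finally, the nondegeneracy \(\int_\mathbb{R}\rho\ne 0\) (or, failing that, modulating at a frequency where \(\hat\rho\) does not vanish) must be recorded as part of making Step~2 rigorous.
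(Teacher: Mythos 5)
Your proposal is sound and lands on the same Monte--Carlo endgame as the paper (Lemma~\ref{lem: part 3} plus Theorem~\ref{thm: MC MSE}), but it reaches the integral representation by a genuinely different route. The paper never band-limits via Fourier inversion: it builds the representation from the separable approximate-identity family \(h_w(y)=\prod_j w(j)\rho(w(j)y(j))\), passes from \(\lim_{|w|\to\infty}(f*h_w)(x)\) to the average \(\Omega^{-N}\int_{[0,\Omega]^N}(f*h_w)(x)\,\mathrm{d}w\) by a l'H\^opital argument, substitutes the truncated cosine \(\cos_\Omega\) for \(\rho\), collapses the product \(\prod_j\cos_\Omega(w(j)z(j))\) into a single \(\cos_\Omega(\langle w,z\rangle)\) via \(2^N\) iterated product-to-sum identities (which is where the weight \(|\prod_j w(j)|/(2\Omega)^N\) in \(F_{\alpha,\Omega}\) and hence in the \(v_k\) comes from), and only then reintroduces \(\rho\) by convolving \(\cos_\Omega\) with the one-dimensional approximate identity --- exactly your second step. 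Your sharp truncation \(f_\Omega=\int_{[-\Omega,\Omega]^N}\hat f(\omega)e^{2\pi i\langle\omega,x\rangle}\mathrm{d}\omega\) is cleaner and avoids both the l'H\^opital step and the combinatorial expansion, at the price of yielding only \(L^2\) control of the band-limiting error (Dirichlet rather than the paper's weighted kernel); that is enough for the \(L^2(K)\) bound claimed here, but the paper's version gives \emph{uniform} convergence on \(K\), which is what gets recycled in the proof of Theorem~\ref{thm: probabilistic Igelnik-Pao} and in the paper's handling of the cross term (via \(\mathbb{E}f_n(x)=I(x;1)\) and Fubini, where you instead use the cruder but equally valid \(3(a^2+b^2+c^2)\) split). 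Your resulting output weights differ from the paper's (no \(|\prod_j w(j)|\) factor), which is harmless since the theorem only asserts existence of some \(\{v_k\}\). Two small points to tidy up: (i) the nondegeneracy \(\int_{\mathbb R}\rho\neq 0\) that you flag is indeed needed --- the paper silently normalizes \(\int\rho=1\) in its lemmas even though the theorem statement does not include this hypothesis; (ii) with your \(2\pi\) Fourier convention the truncation point of the cosine must be a zero of \(\cos(2\pi u)\), so the interval for \(u_k\) will not literally be \([-\tfrac{\pi}{2}(2L+1),\tfrac{\pi}{2}(2L+1)]\) as stated unless you rescale \(w=\alpha\omega\) with the paper's unnormalized convention; this is a bookkeeping mismatch, not a gap.
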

%\PSnote{The formulation of this theorem is also updated to explicitly state the dependence of the output weights on the realization of the input weights and biases. I also realized that this formulation is confusing in another way: are the input weights and bias are drawn and input weights are computed independently for each n, or do we draw an infinite sequences \(\{w_k\}_{k=1}^{\infty}\) and \(\{b_k\}_{k=1}^{\infty}\) and reused them for all $n$? I think, our current formulation leaves uncertainty about this. I think, our proof still works if we draw \(\{w_k\}_{k=1}^{\infty}\) and \(\{b_k\}_{k=1}^{\infty}\), and then reuse them, am I right here? Then we would want to change the formulation accordingly.
%}\RSnote{Good question, I was thinking that we were redrawing them every time. I haven't thought about the "reusing" formulation. We could fix the ambiguity by changing the statement to: "For each $n$, and $k=1,...,n$, let the random variables: ... be independently drawn... Define the sequence of RVFL networks ... Then $\{f_n\}_{n=1}^{\infty}$ satisfies "... }

\begin{proof} Our proof technique is based on that introduced by Igelnik and Pao, and can be divided into four steps. The first three steps essentially consist of Lemma \ref{lem: Step 1}, Lemma \ref{lem: step 2}, and Lemma \ref{lem: part 3}, and the final step combines them to obtain the desired result. 
First, the function \(f\) is approximated by
% an integral over the parameter space using a convolution identity
\OPtext{a convolution,}
given in Lemma \ref{lem: Step 1}. The proof of this result can be found in Section \ref{sec: step 1}.

\begin{lemma}\label{lem: Step 1}
Let \(f\in C_0(\mathbb{R}^N)\) and \OPtext{$h\in L^1(\mathbb{R}^N)$} with \OPtext{$\int_{\mathbb{R}^N}h(z)\mathrm{d}z=1.$}
For $\Omega>0$, define
\begin{linenomath}
\begin{align}\label{eqn: window transform}
    h_\Omega(y):=\Omega^N h(\Omega y).
\end{align}
\end{linenomath}
% Define, for \(w\in\mathbb{R}^N\), \(h_w\in L^1(\mathbb{R}^N)\) via
% \begin{align}\label{eqn: window transform}
%     h_w(y):=\prod_{j=1}^Nw(j)\rho\big(w(j)y(j)\big).
% \end{align}
Then we have
\begin{linenomath}
\begin{align}\label{eqn: lim-int rep 2}
    f(x)
    =\lim_{\Omega\to\infty}(f*h_\Omega)(x)
\end{align}
\end{linenomath}
% \begin{align}\label{eqn: lim-int rep 2}
%     f(x)
%     =\lim_{\Omega\rightarrow\infty}\frac{1}{\Omega^{N}}\int_{[0,\Omega]^N}(f*h_w)(x)\mathrm{d}w
% \end{align}
uniformly for all \(x\in\mathbb{R}^N\).
\end{lemma}

%Then, the integral approximation \eqref{eqn: lim-int rep 2} is again approximated using a linear combination of random realizations of the activation function \(\rho\) via the Monte Carlo method.
%For clarity of presentation, we further break down the proof into four main parts (Sections~\ref{sec: step 1}-~\ref{sec: step 4}), each building upon the previous steps until the proof is complete.

Next, we represent \(f\) as the limiting value of a multidimensional integral over the parameter space.
In particular, we replace \OPtext{$(f*h_\Omega)(x)$} in the convolution identity~\eqref{eqn: lim-int rep 2} with a function of the form  \(\int_KF(y)\rho(\langle w,x\rangle+b(y))\mathrm{d}y\), as this will introduce the RVFL structure we require.
To achieve this, we first use a truncated cosine function in place of the activation function \(\rho\) and then switch back to a general activation function.

To that end, for each fixed \(\Omega>0\), let \(L=L(\Omega):=\lceil\frac{2N}{\pi}\mathrm{rad}(K)\Omega-\frac{1}{2}\rceil\) and define \(\cos_{\Omega}\colon\mathbb{R}\rightarrow[-1,1]\) by
\begin{linenomath}
\begin{align}\label{eqn: trunc cos}
    \cos_{\Omega}(x):=\begin{cases}
    \cos(x)\qquad&x\in[-\tfrac{1}{2}(2L+1)\pi,\tfrac{1}{2}(2L+1)\pi],\\
    0&\text{otherwise}.
    \end{cases}
\end{align}
\end{linenomath}
Moreover, introduce the functions
\begin{linenomath}
\begin{align}\label{eqn: F-b}
    \begin{split}
    F_{\alpha,\Omega}(y,w,u)
    &:=\frac{\alpha}{(2\pi)^N}f(y)\cos_{\Omega}(u)\prod_{j=1}^N\phi(w(j)/\Omega),\\
    b_{\alpha}(y,w,u)
    &:=-\alpha(\langle w,y\rangle+u)
    \end{split}
\end{align}
\end{linenomath}
where \(y,w\in\mathbb{R}^N\) and \(u\in\mathbb{R}\) \OPtext{and $\phi=A*A$ for \emph{any} even function $A\in C^\infty(\mathbb{R})$ supported on $[-\tfrac{1}{2},\tfrac{1}{2}]$ s.t.\ $\lVert A\rVert_2=1.$} Then we have the following lemma, a detailed proof of which can be found in Section \ref{sec: step 2}.

\begin{lemma}\label{lem: step 2}
Let \(f\in C_c(\mathbb{R}^N)\) and \(\rho\in L^1(\mathbb{R})\) with \(K:=\mathrm{supp}(f)\) and \(\int_{\mathbb{R}}\rho(z)\mathrm{d}z=1\).
Define $F_{\alpha,\Omega}$ and \(b_{\alpha}\) as in~\eqref{eqn: F-b} for all
% \(\Omega\in\mathbb{R}^N\) and \(\alpha\in\mathbb{R}\).
\OPtext{$\alpha>0.$}
Then, for \(L:=\lceil\frac{2N}{\pi}\mathrm{rad}(K)\Omega-\frac{1}{2}\rceil\), we have
\begin{linenomath}
\begin{align}\label{eqn: final lim-int rep}
    f(x)   =\lim_{\Omega\rightarrow\infty}\lim_{\alpha\rightarrow\infty}
    \int_{K(\Omega)}
    F_{\alpha,\Omega}(y,w,u)\rho\big(\alpha\langle w,x\rangle+b_{\alpha}(y,w,u)\big)\mathrm{d}y\mathrm{d}w\mathrm{d}u
\end{align}
\end{linenomath}
uniformly for every \(x\in K\), where \(K(\Omega):=K\times[-\Omega,\Omega]^N\times[-\frac{\pi}{2}(2L+1),\frac{\pi}{2}(2L+1)]\).
\end{lemma}

The next step in the proof of Theorem~\ref{thm: IP95} is to approximate the integral in~\eqref{eqn: final lim-int rep} using the Monte-Carlo method. Define
\OPtext{$v_k:=\frac{\mathrm{vol}(K(\Omega))}{n}F_{\alpha,\Omega}\Big(y_k,\frac{w_k}{\alpha},u_k\Big)$} for \(k=1,\ldots,n\), and the random variables \(\{f_n\}_{n=1}^{\infty}\) by
\begin{linenomath}
\begin{align}\label{eqn: I-P RVFLs}
    f_n(x)
    :=\sum_{k=1}^nv_k\rho\big(\langle w_k,x\rangle+b_k\big).
\end{align}
\end{linenomath}

Then, we have the following lemma that is proven in Section \ref{sec: step 3}.
\begin{lemma}\label{lem: part 3}
Let \(f\in C_c(\mathbb{R}^N)\) and \OPtext{$\rho\in L^1(\mathbb{R})\cap L^\infty(\mathbb{R})$} with \(K:=\mathrm{supp}(f)\) and \(\int_{\mathbb{R}}\rho(z)\mathrm{d}z=1\).
Then, as $n\to\infty$, we have
\begin{linenomath}
\begin{align}\label{eqn: RVFL rep}
   \mathbb{E}\int_K
    \left|\int_{K(\Omega)}
    F_{\alpha,\Omega}(y,w,u)\rho\big(\alpha\langle w,x\rangle+b_\alpha(y,w,u)\big)\mathrm{d}y\mathrm{d}w\mathrm{d}u
    -f_n(x)\right|^2\mathrm{d}x
    =O(1/n),
\end{align}
\end{linenomath}
where \(K(\Omega):=K\times[-\Omega,\Omega]^N\times[-\frac{\pi}{2}(2L+1),\frac{\pi}{2}(2L+1)]\) and \(L:=\lceil\frac{2N}{\pi}\mathrm{rad}(K)\Omega-\frac{1}{2}\rceil\).
% with convergence rate \(O(1/n)\).
\end{lemma}

To complete the proof of Theorem~\ref{thm: IP95} we combine the limit representation~\eqref{eqn: final lim-int rep} with the Monte-Carlo error guarantee~\eqref{eqn: RVFL rep} and show that, given any \(\varepsilon>0\), there exist \(\alpha,\Omega>0\) such that \[\mathbb{E}\int_K|f(x)-f_n(x)|^2\mathrm{d}x\leq\varepsilon+O(1/n)\]
\OPtext{as $n\to\infty.$} To this end, let \(\varepsilon^{\prime}>0\) be arbitrary and consider the integral \(I(x;p)\) given by
\begin{linenomath}
\begin{align}\label{eqn: MC int}
    I(x;p)
    :=\int_{K(\Omega)}
    \Big(F_{\alpha,\Omega}(y,w,u)\rho\big(\alpha\langle w,x\rangle+b_{\alpha}(y,w,u)\big)\Big)^p\mathrm{d}y\mathrm{d}w\mathrm{d}u
\end{align}
\end{linenomath}
for \(x\in K\) and \(p\in\mathbb{N}\).
By~\eqref{eqn: final lim-int rep}, there \OPtext{exist} \(\alpha,\Omega>0\) such that \(|f(x)-I(x;1)|<\varepsilon^{\prime}\) holds \if{uniformly}\fi for every \(x\in K\), and so it follows that \[\big|f(x)-f_n(x)\big|
    <\varepsilon^{\prime}+\big|I(x;1)-f_n(x)\big|\]
for every \(x\in K\).
% Hence, we have
\OPtext{Jensen's inequality now yields that}
\begin{linenomath}
\begin{align}\label{eqn: asympt error expansion}
    \mathbb{E}\int_K|f(x)-f_n(x)|^2\mathrm{d}x
    \leq2\mathrm{vol}(K)(\varepsilon^{\prime})^2
    +2\mathbb{E}\int_K\big|I(x;1)-f_n(x)\big|^2\mathrm{d}x.
\end{align}
\end{linenomath}
By~\eqref{eqn: RVFL rep}, we know that the second term on the right-hand side of~\eqref{eqn: asympt error expansion} % vanishes at a rate proportional to \(1/n\).
\OPtext{is $O(1/n).$}
% On the other hand, the third term on the right-hand side of~\eqref{eqn: asympt error expansion} vanishes by applying Fubini's Theorem\footnote{We show that we may use Fubini's Theorem in Section \ref{sec: step 4}.} and observing that \(\mathbb{E}f_n(x)=I(x;1)\) for all \(n\in\mathbb{N}\) and \(x\in K\).
Therefore, we have \[\mathbb{E}\int_K|f(x)-f_n(x)|^2\mathrm{d}x\leq2\mathrm{vol}(K)(\varepsilon^{\prime})^2+O(1/n),\]
% with convergence rate \(O(1/n)\),
and so the proof is completed by taking \(\varepsilon^{\prime}=\sqrt{\varepsilon/2\mathrm{vol}(K)}\) and choosing \(\alpha,\Omega>0\) accordingly.
\end{proof}

\subsubsection{Proof of Theorem~\ref{thm: IP95_short} when $\rho' \in L^1(\mathbb{R})\cap L^\infty(\mathbb{R})$}
The full statement of the theorem is identical to that of Theorem~\ref{thm: IP95} albeit now with ${\rho' \in L^1(\mathbb{R})\cap L^\infty(\mathbb{R})}$, so we omit it for brevity. Its proof is also similar to the proof of the case where $\rho\in L^1(\mathbb{R}) \cap L^\infty(\mathbb{R})$ with some key modifications. Namely, one 
% uses Lemma~\ref{lem: Step 1} with $\rho'$ in place of $\rho$, then
uses an integration by parts argument to modify the part of the proof corresponding to Lemma \ref{lem: step 2}. The details of this argument are presented in the appendix,  Section  \ref{sec: rho_prime_proof}.

\subsection{Proof of Theorem~\ref{thm: probabilistic Igelnik-Pao_short}}\label{sec: proof cor IP95 thm probabilistic IP}

In this section we prove the non-asymptotic result for RVFL networks in \(\mathbb{R}^N\), and we begin with a more precise statement of the theorem that makes all the dimensional dependencies explicit.

\begin{theorem}\label{thm: probabilistic Igelnik-Pao}
\if{Let \(f\in C_c(\mathbb{R}^N)\) with \(K:=\mathrm{supp}(f)\) and fix any activation function \(\rho\in L^1(\mathbb{R})\cap L^2(\mathbb{R})\).
Suppose further that \(\rho\) is \(\kappa\)-Lipschitz on~\(\mathbb{R}\) for some \(\kappa>0\).
For any \(\varepsilon>0\) and \(\eta\in(0,1)\), there exist constants \(\alpha,\Omega>0\) and hidden-to-output layer weights \(\{v_k\}_{k=1}^n\subset\mathbb{R}\) such that the following holds:
Suppose, for $k=1,...,n$, the random variables
\begin{align*}
    w_k&\sim \mathrm{Unif}([-\alpha\Omega,\alpha\Omega])^N;\\
    y_k&\sim \mathrm{Unif}(K);\\
    u_k&\sim \mathrm{Unif}([-\tfrac{\pi}{2}(2L+1),\tfrac{\pi}{2}(2L+1)]),
    \quad\text{where \(L:=\lceil\tfrac{2N}{\pi}\mathrm{rad}(K)\Omega-\tfrac{1}{2}\rceil\)},
\end{align*}
 are independently drawn from their associated distributions, and  $$b_k:=-\langle w_k,y_k\rangle-\alpha u_k.$$}\fi
 Consider the hypotheses of Theorem \ref{thm: IP95} and suppose further that \(\rho\) is \(\kappa\)-Lipschitz on \(\mathbb{R}\) for some \(\kappa>0\).
For any \[0
    <\delta
    <\frac{\sqrt{\varepsilon}}{8\sqrt{2N}\kappa\alpha^2M\Omega(\Omega/\pi)^N\mathrm{vol}^{3/2}(K)(\pi+2N\mathrm{rad}(K)\Omega)},\]
Suppose \[n \geq\frac{c\Sigma\alpha(\Omega/\pi)^N(\pi+2N\mathrm{rad}(K)\Omega)\log(3\eta^{-1}\mathcal{N}(\delta,K))}{\sqrt{\varepsilon}\log\big(1+\frac{\sqrt{\varepsilon}}{\Sigma\alpha(\Omega/\pi)^N(\pi+2N\mathrm{rad}(K)\Omega)}\big)},\]
where \(M:=\sup_{x\in K}|f(x)|\), \(c>0\) is a numerical constant, and \OPtext{$\Sigma$ is a constant} depending on \(f\) and \(\rho\), and let parameters \(\{w_k\}_{k=1}^n\), \(\{b_k\}_{k=1}^n\), and \(\{v_k\}_{k=1}^n\) be as in Theorem \ref{thm: IP95}. Then the RVFL network defined by \[f_n(x):=\sum_{k=1}^nv_k\rho(\langle w_k,x\rangle+b_k)
    \quad\text{ for \(x\in K\)}\]
satisfies \[\int_K|f(x)-f_n(x)|^2\mathrm{d}x
    <\varepsilon\]
with probability at least \(1-\eta\).
\end{theorem}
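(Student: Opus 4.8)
The plan is to derive the high-probability bound from the two ingredients already assembled for the asymptotic theorem: the deterministic limit-integral representation of Lemma~\ref{lem: step 2} and the Monte-Carlo concentration inequality of Lemma~\ref{lem: MC concentration ineq}. First I would fix $\alpha,\Omega>0$ by invoking Lemma~\ref{lem: step 2} at finite (large) $\alpha,\Omega$ rather than in the limit, so that the integral $I(x;1)$ from \eqref{eqn: MC int} satisfies $|f(x)-I(x;1)|<\varepsilon'$ uniformly on $K$, where $\varepsilon'$ is a fraction of $\sqrt{\varepsilon/\mathrm{vol}(K)}$ to be fixed at the end. The key observation, already implicit in the asymptotic proof, is that for each fixed $x$ the random value $f_n(x)$ from \eqref{eqn: I-P RVFLs} is exactly the equal-weight Monte-Carlo estimator of $I(x;1)$ over the sample space $K(\Omega)$, and in particular $\mathbb{E}f_n(x)=I(x;1)$. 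Thus the whole problem reduces to controlling the Monte-Carlo error $|f_n(x)-I(x;1)|$ uniformly over $x\in K$.

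Next I would apply Lemma~\ref{lem: MC concentration ineq} pointwise. For fixed $x$ the integrand is $g_x(y,w,u):=F_{\alpha,\Omega}(y,w,u)\rho(\alpha\langle w,x\rangle+b_\alpha(y,w,u))$; using the bound $|F_{\alpha,\Omega}|\le \alpha M/2^N$, the boundedness of $\rho$ (which follows from $\rho\in L^1(\mathbb{R})$ together with $\kappa$-Lipschitzness), and the explicit value of $\mathrm{vol}(K(\Omega))$, one computes an almost-sure bound (the constant $C$ in the statement) on $|\mathrm{vol}(K(\Omega))g_x-I(x;1)|$ and a variance proxy $\Sigma=\sigma(g_x)^2$, both depending only on $f$, $\rho$, and the now-fixed quantities $\alpha,\Omega,N$. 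This produces a tail bound of the form $\mathbb{P}(|f_n(x)-I(x;1)|\ge t)\le 3\exp(-\tfrac{nt}{cC}\log(1+\tfrac{Ct}{\mathrm{vol}^2(K(\Omega))\Sigma}))$ for each individual $x$.

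Because this estimate is only pointwise, the central step is to upgrade it to a statement uniform over the continuum $K$. I would take a minimal $\delta$-net $\mathcal{C}(\delta,K)$ of cardinality $\mathcal{N}(\delta,K)$, apply the pointwise tail bound at each net point, and union-bound; setting the resulting failure probability equal to $\eta$ and solving for $n$ yields exactly the stated requirement, with its $\log(3\eta^{-1}\mathcal{N}(\delta,K))$ numerator and the logarithmic denominator coming from the $\log(1+\cdots)$ term. To pass from the net back to all of $K$, I would use that both $x\mapsto f_n(x)$ and $x\mapsto I(x;1)$ are Lipschitz: since $\rho$ is $\kappa$-Lipschitz, $f_n$ has Lipschitz constant at most $\kappa\sum_k|v_k|\,\|w_k\|_2$, and bounding $|v_k|\lesssim \mathrm{vol}(K(\Omega))\alpha M/(n2^N)$ and $\|w_k\|_2\le\sqrt{N}\alpha\Omega$ gives a constant of order $\sqrt{N}\kappa\alpha^2 M\Omega^{N+2}\mathrm{vol}(K)(1+2N\mathrm{rad}(K))$, with $I(x;1)$ Lipschitz of the same order. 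The hypothesis on $\delta$ is precisely calibrated so that this Lipschitz constant times $\delta$, after the extra $\sqrt{\mathrm{vol}(K)}$ factor incurred in passing from the sup-norm to the $L^2(K)$ norm, contributes less than the allotted share of $\sqrt{\varepsilon}$.

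Finally I would assemble the three error contributions. For every $x\in K$, choosing the nearest net point $x_i$ gives, on the high-probability event, $|f(x)-f_n(x)|\le |f(x)-I(x;1)|+|I(x_i;1)-f_n(x_i)|+(\mathrm{Lip}(f_n)+\mathrm{Lip}(I))\delta<\varepsilon'+t+(\mathrm{Lip}(f_n)+\mathrm{Lip}(I))\delta$. Integrating and using $\int_K|f-f_n|^2\le\mathrm{vol}(K)\sup_{x\in K}|f-f_n|^2$, I would choose $\varepsilon'$, $t$, and $\delta$ so that the total stays below $\sqrt{\varepsilon/\mathrm{vol}(K)}$, giving $\int_K|f-f_n|^2<\varepsilon$ with probability at least $1-\eta$. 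I expect the main obstacle to be bookkeeping rather than any new idea: producing exactly the stated $\delta$- and $n$-thresholds requires carefully tracking how the concentration parameters $C,\Sigma$ and the two Lipschitz constants depend on $\alpha,\Omega,N,M$, and splitting the error budget among the approximation, concentration, and discretization terms. The conceptual crux is the covering-plus-Lipschitz passage from the pointwise concentration lemma to a bound holding simultaneously for all $x\in K$, since Lemma~\ref{lem: MC concentration ineq} cannot be applied directly over a continuum.
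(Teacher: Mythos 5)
Your proposal is correct and follows essentially the same route as the paper's proof: fix $\alpha,\Omega$ via the uniform limit-integral representation, decompose $|I_n(x)-I(x;1)|$ over a minimal $\delta$-net, bound the two net-to-point terms deterministically via the Lipschitz constants of $f_n$ and $I(\cdot;1)$ (which is exactly how the paper obtains the stated $\delta$-threshold), and control the net-point deviations with Lemma~\ref{lem: MC concentration ineq} plus a union bound to get the stated lower bound on $n$. The error-budget split at the end matches the paper's choice $\varepsilon'=t=\tfrac{1}{2}\sqrt{\varepsilon/2\mathrm{vol}(K)}$ up to bookkeeping.
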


\begin{proof}
Let \(f\in C_c(\mathbb{R}^N)\) with \(K:=\mathrm{supp}(f)\) and suppose \(\varepsilon>0\), \(\eta\in(0,1)\) are fixed.
Take an arbitrarily \(\kappa\)-Lipschitz activation function \OPtext{$\rho\in L^1(\mathbb{R})\cap L^\infty(\mathbb{R}).$}
We wish to show that there exists an RVFL network \(\{f_n\}_{n=1}^{\infty}\) defined on \(K\) that \OPtext{satisfies} the \[\int_K|f(x)-f_n(x)|^2\mathrm{d}x<\varepsilon\]
with probability at least \(1-\eta\) when \(n\) is chosen sufficiently large. %\PSnote{Why do we talk about a sequence here? Should we say instead: We wish to show that there exists an RVFL network \(f_n\) defined on \(K\) that satisfies the error bound
%\begin{align*}
%    \int_K|f(x)-f_n(x)|^2\mathrm{d}x<\varepsilon
%\end{align*}
%with probability at least \(1-\eta\) when \(n\) is sufficiently large.}\RSnote{yes.}
The proof is obtained by modifying the proof of Theorem~\ref{thm: IP95} for the asymptotic case.

We begin by repeating the first two steps in the proof of Theorem~\ref{thm: IP95} from Sections~\ref{sec: step 1} and~\ref{sec: step 2}.
In particular, by Lemma~\ref{lem: step 2} we have the representation~\eqref{eqn: final lim-int rep}, namely, \[f(x)=\lim_{\Omega\rightarrow\infty}\lim_{\alpha\rightarrow\infty}\int_{K(\Omega)}F_{\alpha,\Omega}(y,w,u)\rho\big(\alpha\langle w,x\rangle+b_{\alpha}(y,w,u)\big)\mathrm{d}y\mathrm{d}w\mathrm{d}u\]
holds uniformly for all \(x\in K\).
Hence, if we define the random variables \(f_n\) and \(I_n\) from Section~\ref{sec: step 3} as in~\eqref{eqn: I-P RVFLs} and~\eqref{eqn: MC sum}, respectively, we seek a uniform bound on the quantity \[|f(x)-f_n(x)|
    \leq|f(x)-I(x;1)|+|I_n(x)-I(x;1)|\]
over the compact set \(K\), where \(I(x;1)\) is given by~\eqref{eqn: MC int} for all \(x\in K\).
Since equation~\eqref{eqn: final lim-int rep} allows us to fix \(\alpha,\Omega>0\) such that \[|f(x)-I(x;1)|
    =\Big|f(x)-\int_{K(\Omega)}F_{\alpha,\Omega}(y,w,u)\rho\big(\alpha\langle w,x\rangle+b_{\alpha}(y,w,u)\big)\mathrm{d}y\mathrm{d}w\mathrm{d}u\Big|
    <\sqrt{\frac{\varepsilon}{2\mathrm{vol}(K)}}\]
holds for every \(x\in K\) simultaneously, the result would follow if we show that, with high probability, \hfill \\ \(|I_n(x)-I(x;1)|<\sqrt{\varepsilon/2\mathrm{vol}(K)}\) uniformly for all \(x\in K\). Indeed, this would yield
\[\int_K|f(x)-f_n(x)|^2\mathrm{d}x
    \leq2\int_K|f(x)-I(x;1)|^2\mathrm{d}x+2\int_K|I_n(x)-I(x;1)|^2\mathrm{d}x
    <\varepsilon\]
with high probability.
To this end, for \(\delta>0\) let \(\mathcal{C}(\delta,K)\subset K\) denote a minimal \(\delta\)-net for \(K\), with cardinality \(\mathcal{N}(\delta,K)\).
Now, fix \(x\in K\) and consider the inequality

\begin{linenomath}
\begin{align}\label{eqn: non asmpt bound}
    |I_n(x)-I(x;1)|
    &\leq\underbrace{|I_n(x)-I_n(z)|}_{(*)}+\underbrace{|I_n(z)-I(z;1)|}_{(**)}+\underbrace{|I(x;1)-I(z;1)|}_{(***)},
\end{align}
\end{linenomath}
where \(z\in\mathcal{C}(\delta,K)\) is such that \(\Vert x-z\Vert_2<\delta\).
We will obtain the desired bound on~\eqref{eqn: non asmpt bound} by bounding each of the terms \((*)\), \((**)\), and \((***)\) separately.

First, we consider the term \((*)\).
Recalling the definition of \(I_n\), observe that we have
\begin{linenomath}
\begin{align*}
    (*)
    &=\frac{\mathrm{vol}(K(\Omega))}{n}
    \Big|\sum_{k=1}^nF_{\alpha,\Omega}(y_k,w_k,u_k)\Big(\rho\big(\alpha\langle w_k,x\rangle+b_\alpha(y_k, w_k, u_k)\big)\\
    &\qquad\qquad\qquad\qquad\qquad\qquad\qquad\qquad\qquad-\rho\big(\alpha\langle w_k,z\rangle+b_\alpha(y_k,w_k,u_k)\big)\Big)\Big|\\
    &\leq\frac{\alpha M\mathrm{vol}(K(\Omega))}{(2\pi)^N n}\sum_{k=1}^n\big|\rho\big(\alpha\langle w_k,x\rangle+b_\alpha(y_k, w_k, u_k)\big)-\rho\big(\alpha\langle w_k,z\rangle+b_\alpha(y_k, w_k, u_k)\big)\Big|\\
    &\leq\alpha M(2\pi)^{-N}\mathrm{vol}(K(\Omega))R_{\alpha,\Omega}(x,z),
\end{align*}
\end{linenomath}
where \(M:=\sup_{x\in K}|f(x)|\) and we define \[R_{\alpha,\Omega}(x,z)
    :=\sup_{\substack{y\in K\\w\in[-\Omega,\Omega]^N\\u\in[-(L+\frac{1}{2})\pi,(L+\frac{1}{2})\pi]}}\big|\rho\big(\alpha\langle w,x\rangle+b_\alpha(y,w,u)\big)-\rho\big(\alpha\langle w,z\rangle+b_\alpha(y,w,u)\big)\Big|.\]
Now, since \(\rho\) is assumed to be \(\kappa\)-Lipschitz, we have
\begin{linenomath}
\begin{align*}
    &\big|\rho\big(\alpha\langle w,x\rangle+b_\alpha(y,w,u)\big)-\rho\big(\alpha\langle w,z\rangle+b_\alpha(y,w,u)\big)\Big|\\
    &\qquad=\Big|\rho\Big(\alpha\big(\langle w,x-y\rangle-u\big)\Big)-\rho\Big(\alpha\big(\langle w,z-y\rangle-u\big)\Big)\Big|
    \leq\kappa\alpha\big|\langle w,x-z\rangle\big|
\end{align*}
\end{linenomath}
for any \(y\in K\), \(w\in[-\Omega,\Omega]^N\), and \OPtext{$u\in[-(L+\tfrac{1}{2})\pi,(L+\tfrac{1}{2})\pi].$}
Hence, an application of the Cauchy--Schwarz inequality yields \(R_{\alpha,\Omega}(x,z)\leq\kappa\alpha\Omega\delta\sqrt{N}\) for all \(x\in K\), from which it follows that
\begin{linenomath}
\begin{align}\label{eqn: * bound}
    (*)
    \leq M\sqrt{N}\kappa\delta\alpha^2\Omega(2\pi)^{-N}\mathrm{vol}(K(\Omega))
\end{align}
\end{linenomath}
holds for all \(x\in K\).

Next, we bound \((***)\) using a similar approach. %to that used to bound \((*)\).
Indeed, by the definition of \(I({}\cdot{};1)\) we have
\begin{linenomath}
\begin{align*}
    (***)
    &=\Big|\int_{K(\Omega)}F_{\alpha,\Omega}(y,w,u)\Big(\rho\big(\alpha\langle w,x\rangle+b_\alpha(y,w,u)\big)-\rho\big(\alpha\langle w,z\rangle+b_\alpha(y,w,u)\big)\Big)\mathrm{d}y\mathrm{d}w\mathrm{d}u\Big|\\
    &\leq\frac{\alpha M\lVert\phi\rVert_\infty^N}{(2\pi)^N}\int_{K(\Omega)}\big|\rho\big(\alpha\langle w,x\rangle+b_\alpha(y,w,u)\big)-\rho\big(\alpha\langle w,z\rangle+b_\alpha(y,w,u)\big)\Big|\mathrm{d}y\mathrm{d}w\mathrm{d}u\\
    &\leq\alpha M(2\pi)^{-N}\mathrm{vol}(K(\Omega))R_{\alpha,\Omega}(x,z).
\end{align*}
\end{linenomath}
Using the fact that \(R_{\alpha,\Omega}(x,z)\leq\kappa\alpha\Omega\delta\sqrt{N}\) for al \(x\in K\), it follows that
\begin{linenomath}
\begin{align}\label{eqn: *** bound}
    (***)
    \leq M\sqrt{N}\kappa\delta\alpha^2\Omega(2\pi)^{-N}\mathrm{vol}(K(\Omega))
\end{align}
\end{linenomath}
holds for all \(x\in K\), \OPtext{just like~\eqref{eqn: * bound}.}

Notice that the inequalities~\eqref{eqn: * bound} and~\eqref{eqn: *** bound} are deterministic.
In fact, both can be controlled by choosing an appropriate value for \(\delta\) in the net \(\mathcal{C}(\delta,K)\).
To see this, fix \(\varepsilon'>0\) arbitrarily and recall that \(\mathrm{vol}(K(\Omega))=(2\Omega)^N\pi(2L+1)\mathrm{vol}(K)\).
A simple computation then shows that  \((*)+(***)<\varepsilon'\) whenever
\begin{linenomath}
\begin{align}\label{eqn: net bound}
    \delta
    &<\frac{\varepsilon'}{4\sqrt{N}\kappa\alpha^2M\Omega(\Omega/\pi)^N\mathrm{vol}(K)(\pi+2N\mathrm{rad}(K)\Omega)}\\
    &<\frac{\varepsilon'}{2\sqrt{N}\kappa\alpha^2M\Omega(\Omega/\pi)^N\pi(2L+1)\mathrm{vol}(K)}.\notag
\end{align}
\end{linenomath}

We now bound \((**)\) uniformly for \(x\in K\). 
% Unlike \((*)\) and \((***)\), we cannot bound this term deterministically.\RSnote{I suggest we define $g_z$ before we argue that it belongs to $L^2(K(\Omega))$.}
% However, since \OPtext{$g_z\in L^2(K(\Omega)),$ (as $F_{\alpha,\Omega}\in L^2(K(\Omega))$ and $\rho\in L^\infty(\mathbb{R});$ see below)} we may apply Lemma~\ref{lem: MC concentration ineq} to obtain the tail bound
% \OPtext{%
% \begin{align*}
%     \mathbb{P}\big((**)\geq t\big)&=\mathbb{P}\Big(\lvert I_n(g_z,K(\Omega))-I(g_z,K(\Omega))\rvert\geq t\Big)\\
%     &\leq3\exp\Big(-\frac{nt}{Bc}\log\big(1+\frac{Bt}{\mathrm{vol}(K(\Omega))I(g_z^2,K(\Omega))}\big)\Big)\\
%     &=3\exp\Big(-\frac{nt}{Bc}\log\big(1+\frac{Bt}{\mathrm{vol}(K(\Omega))I(z;2)}\big)\Big)
% \end{align*}
% }%
% for all \(t>0\), where \(c>0\) is a numerical constant and
% \OPtext{%
% \begin{align*}
%     g_z(y,w,u)&:=F_{\alpha,\Omega}(y,w,u)\rho\big(\alpha\langle w,z\rangle+b_\alpha(y,w,u)\big),\\
%     B&:=2\alpha M(\Omega/\pi)^N(\pi+2N\mathrm{rad}(K)\Omega)\lVert\rho\rVert_\infty\mathrm{vol}(K)\\
%     &\geq\alpha M(\Omega/\pi)^N\pi(2L+1)\lVert\rho\rVert_\infty\mathrm{vol}(K)\\
%     &=\alpha M(2\pi)^{-N}\lVert\rho\rVert_\infty\mathrm{vol}(K(\Omega))\\
%     &\geq\max_{z\in\mathcal{C}(\delta,K)}\lVert g_z\rVert_\infty\mathrm{vol}(K(\Omega))
% \end{align*}
% }%
% for all \(z\in\mathcal{C}(\delta,K)\).
Unlike \((*)\) and \((***)\), we cannot bound this term deterministically.
In this case, however, we may apply Lemma~\ref{lem: MC concentration ineq} to \[g_z(y,w,u):=F_{\alpha,\Omega}(y,w,u)\rho\big(\alpha\langle w,z\rangle+b_\alpha(y,w,u)\big),\]
for any $z\in\mathcal{C}(\delta,K)$. Indeed, $g_z\in L^2(K(\Omega))$ because $F_{\alpha,\Omega}\in L^2(K(\Omega))$ and $\rho\in L^\infty(\mathbb{R})$. Then Lemma~\ref{lem: MC concentration ineq} yields the tail bound
\begin{linenomath}
\begin{align*}
    \mathbb{P}\big((**)\geq t\big)&=\mathbb{P}\Big(\lvert I_n(g_z,K(\Omega))-I(g_z,K(\Omega))\rvert\geq t\Big)\\
    &\leq3\exp\Big(-\frac{nt}{Bc}\log\big(1+\frac{Bt}{\mathrm{vol}(K(\Omega))I(g_z^2,K(\Omega))}\big)\Big)\\
    &=3\exp\Big(-\frac{nt}{Bc}\log\big(1+\frac{Bt}{\mathrm{vol}(K(\Omega))I(z;2)}\big)\Big)
\end{align*}
\end{linenomath}
for all \(t>0\), where \(c>0\) is a numerical constant and
\begin{linenomath}
\begin{align*}
    B&:=2\alpha M(\Omega/\pi)^N(\pi+2N\mathrm{rad}(K)\Omega)\lVert\rho\rVert_\infty\mathrm{vol}(K)\\
    &\geq\alpha M(\Omega/\pi)^N\pi(2L+1)\lVert\rho\rVert_\infty\mathrm{vol}(K)\\
    &=\alpha M(2\pi)^{-N}\lVert\rho\rVert_\infty\mathrm{vol}(K(\Omega))\\
    &\geq\max_{z\in\mathcal{C}(\delta,K)}\lVert g_z\rVert_\infty\mathrm{vol}(K(\Omega)).
\end{align*}
\end{linenomath}
By taking \[C:=2M\lVert\rho\rVert_\infty\mathrm{vol}(K)
    \quad\text{ and }\quad
    \Sigma:=2C\sqrt{2\mathrm{vol}(K)},
\]
we obtain $B=C\alpha(\Omega/\pi)^N(\pi+2N\mathrm{rad}(K)\Omega)$ and \[\max_{z\in\mathcal{C}(\delta,K)}\mathrm{vol}(K(\Omega))I(z;2)\leq\Big(\alpha M(2\pi)^{-N}\lVert\rho\rVert_\infty\mathrm{vol}(K(\Omega))\Big)^2\leq B^2.
\]
% which are now fixed constants describing the complexity of the function \(F_{\alpha,\Omega}\rho\).
If we choose the number of nodes such that
\begin{linenomath}
\begin{align}\label{eqn: node bound}
    n
    \geq\frac{Bc\log(3\eta^{-1}\mathcal{N}(\delta,K))}{t\log(1+t/B)},
    %\geq\frac{Cc\log(6\eta^{-1}\mathcal{N}(\delta,K))}{t\log(1+\frac{2^N\mathrm{vol}(K)Ct}{\alpha^2M^2\mathrm{vol}^2(K(\Omega))\Vert\rho\Vert_2^2})},
\end{align}
\end{linenomath}
%where the final inequality is obtained by deriving a bound on \(\Sigma\) from~\eqref{eqn: sigma bound},
then a union bound yields \((**)<t\) simultaneously for all \(z\in\mathcal{C}(\delta,K)\) with probability at least \(1-\eta\).
Combined with the bounds~\eqref{eqn: * bound} and~\eqref{eqn: *** bound}, it follows from~\eqref{eqn: non asmpt bound} that \[|I_n(x)-I(x;1)|
    <\varepsilon'+t\]
simultaneously for all \(x\in K\) with probability at least \(1-\eta\), provided \(\delta\) and \(n\) satisfy~\eqref{eqn: net bound} and~\eqref{eqn: node bound}, respectively.
Since we require \(|I_n(x)-I(x;1)|<\sqrt{\varepsilon/2\mathrm{vol}(K)}\), the proof is then completed by setting \(\varepsilon'+t=\sqrt{\varepsilon/2\mathrm{vol}(K)}\) and choosing \(\delta\) and \(n\) accordingly.
In particular, it suffices to choose \OPtext{$\varepsilon'=t=\tfrac{1}{2}\sqrt{\varepsilon/2\mathrm{vol}(K)}=C\sqrt{\varepsilon}/\Sigma,$} so that~\eqref{eqn: net bound} and~\eqref{eqn: node bound} become
\begin{linenomath}
\begin{align*}
    \delta
    &<\frac{\sqrt{\varepsilon}}{8\sqrt{2N}\kappa\alpha^2M\Omega(\Omega/\pi)^N\mathrm{vol}^{3/2}(K)(\pi+2N\mathrm{rad}(K)\Omega)},\\
    n
    &\geq\frac{c\Sigma\alpha(\Omega/\pi)^N(\pi+2N\mathrm{rad}(K)\Omega)\log(3\eta^{-1}\mathcal{N}(\delta,K))}{\sqrt{\varepsilon}\log\big(1+\frac{\sqrt{\varepsilon}}{\Sigma\alpha(\Omega/\pi)^N(\pi+2N\mathrm{rad}(K)\Omega)}\big)},
\end{align*}
\end{linenomath}
as desired.
\end{proof}

\begin{remark} The implication of Theorem~\ref{thm: probabilistic Igelnik-Pao} is that, given a desired accuracy level \(\varepsilon>0\), one can construct a RVFL network \(f_n\) that is \(\varepsilon\)-close to \(f\) with high probability, provided the number of nodes \(n\) in the neural network is sufficiently large.
In fact, if we assume that the ambient dimension $N$ is fixed here, then $\delta$ and $n$ depend on the accuracy $\varepsilon$ and probability $\eta$ as \[\delta\lesssim\sqrt{\varepsilon}
    \quad\text{ and }\quad
    n\gtrsim\frac{\log(\eta^{-1}\mathcal{N}(\delta,K))}{\sqrt{\varepsilon}\log\big(1+\sqrt{\varepsilon}\big)}.\]
%\begin{align*}
%    \delta\lesssim\sqrt{\frac{\varepsilon}{\mathrm{vol}(K)}}
%    \quad\text{ and }\quad
%    n\gtrsim\frac{\log(\eta^{-1}\mathcal{N}(\delta,K))}{\sqrt{\varepsilon/\mathrm{vol}(K)}\log\big(1+\sqrt{\varepsilon/\mathrm{vol}(K)}\big)},
%\end{align*} 
%\PSnote{I am a bit confused about this coarse estimates: the degree in which $\mathrm{vol}(K)$ participates in both these estimates is different from the original bounds we get in the theorem. How this change is justified? Also, making these coarse estimates, dependence on which parameter do we want to emphasise, and which do we want to hide in the implicit constant? In principle, we can just show the dependence of $\varepsilon$ and $\eta$. But later on we talk about improving the dependence on the ambient dimension in the manifold case (by replacing it with the manifold dimension). So do we want to keep the dependence on $n$ here explicit as well? Let me know what do you think.}
%\RSnote{Good question. Aaron did this calculation a while ago, and now I'm not sure. It might be coming directly from \eqref{eqn: net bound} and \eqref{eqn: node bound}.  Let's go over this together?} %Still, here are my mental gymnastics for justifying some version of this: If $Vol(K)\geq 1$, take $\delta \lesssim \frac{\epsilon^{3/2}}{\mathrm{vol}^{3/2}(K)}$ and take $n$, more or less, as Aaron did (powers inside logs don't matter, so you can use $\epsilon^{5/2}$ which is smaller, instead of $\sqrt{\epsilon}$ in the log}
Using that \(\log(1+x)=x+O(x^2)\) for small values of \(x\), the requirement on the number of nodes behaves like \[n\gtrsim\frac{\log\big(\eta^{-1}\mathcal{N}\big(\sqrt{\varepsilon},K\big)\big)}{\varepsilon}\]
whenever \(\varepsilon\) is sufficiently small.
Using a simple bound on the covering number, this yields a coarse estimate of \(n\gtrsim \varepsilon^{-1}\log(\eta^{-1}/\varepsilon)\).\end{remark}

\begin{remark}
If we instead assume that $N$ is variable, then, under the assumption that  $f$ is H{\"o}lder continuous with exponent $\beta$, one should expect that $n=\omega(N^{2\beta N})$ as $N\to\infty$ (in light of  Remark~\ref{rem: hoelder} and in conjunction with Theorem~\ref{thm: probabilistic Igelnik-Pao} with $\log(1+1/x)\approx 1/x$ for large $x$). In other words, the number of nodes required in the hidden layer is superexponential in the dimension. This dependence of $n$ on $N$ may be improved by means of more refined proof techniques. As for $\alpha,$ if follows from Remark \ref{rem: alpha} that $\alpha=\Theta(1)$ as $N\to\infty$ provided $\int_\mathbb{R}\lvert v\rho(v)\rvert\mathrm{d}v<\infty.$ 
\end{remark}%\RSnote{edited this remark lightly}

\begin{remark}
The \(\kappa\)-Lipschitz assumption on the activation function \(\rho\) may likely be removed.
Indeed, since \((***)\) in~\eqref{eqn: non asmpt bound} can be bounded instead by leveraging continuity of the \(L^1\) norm with respect to translation, the only term whose bound depends on the Lipschitz property of \(\rho\) is \((*)\).
However, the randomness in \(I_n\) (that we did not use to obtain the bound~\eqref{eqn: * bound}) may be enough to control \((*)\) in most cases.
Indeed, to bound \((*)\) we require control over quantities of the form
$
    \Big|\rho\Big(\alpha\big(\langle w_k,x-y_k\rangle-u_k\big)\Big)-\rho\Big(\alpha\big(\langle w_k,z-y_k\rangle-u_k\big)\Big)\Big|.
$
For most practical realizations of \(\rho\), this difference will be small with high probability (on the draws of \(y_k,w_k,u_k\)) whenever \(\Vert x-z\Vert_2\) is sufficiently small.
\end{remark}

\longhide{We are interested in obtaining a uniform bound on \(|f_n(x)-I(x;1)|\) for all \(x\in K\), with high probability.
For this, we write
\begin{align*}
    |f_n(x)-I(x;1)|
    &\leq\underbrace{|(I_n(x)-I_n(z))-(I(z;1)-I(x;1))|}_{(*)}+\underbrace{|I_n(z)-I(z;1)|}_{(**)}
\end{align*}
and seek to bound the two terms simultaneously with high probability.
The \((**)\) term argument is unchanged and given below.
For the \((*)\) term, note that \(x\in K\) is fixed and \(z\in\mathcal{C}(\delta,K)\) is fixed such that \(\Vert x-z\Vert_2<\delta\).
With this in mind, we define
\begin{align*}
    S_n(x,z)&:=I_n(x)-I_n(z),\\
    E_n(x,z)&:=I(z;1)-I(x;1)
\end{align*}
for all \(x\in K\), \(z\in\mathcal{C}(\delta,K)\) and observe via Lemma~\ref{lem: MC concentration ineq} that
\begin{align*}
    \mathbb{P}\big(|S_n(x,z)-E_n(x,z)|\geq t\big)
    \leq3\exp\Big(-\frac{nt}{C(x,z)c}\log\big(1+\frac{C(x,z)t}{\mathrm{vol}^2(K(\Omega))\Sigma(x,z)}\big)\Big)
\end{align*}
for any \(t>0\), where \(c>0\) is a numerical constant,
\begin{align*}
    C(x,z)&:=\esssup_{k\in\{1,\ldots,n\}}\Big|\mathrm{vol}(K(\Omega))F_{\alpha,\Omega}\big(y_k,\frac{w_k}{\alpha^N},u_k\big)\Big(\rho\big(\langle w_k,z\rangle+b_k\big)-\rho\big(\langle w_k,x\rangle+b_k\big)\Big)-I(x,z;1)\Big|,\\
    \Sigma(x,z)&:=\frac{I(x,z;2)}{\mathrm{vol}(K(\Omega))}-\frac{I(x,z;1)^2}{\mathrm{vol}^2(K(\Omega))},
\end{align*}
and we have now defined
\begin{align*}
    I(x,z;p)
    :=\int_{K(\Omega)}\bigg(F_{\alpha,\Omega}(y,w,u)\Big(\rho\big(\alpha\langle w,z\rangle+b_{\alpha}(y,w,u)\big)-\rho\big(\alpha\langle w,x\rangle+b_{\alpha}(y,w,u)\big)\Big)\bigg)^p\mathrm{d}y\mathrm{d}w\mathrm{d}u
\end{align*}
for all \(p\in\mathbb{N}\).
Now, since \(\rho\in L^1(\mathbb{R})\), we have
\begin{align*}
    C(x,z)
    \leq2\max\{C_x,C_z\}
    \leq2\max\{\sup_{x\in K}C_x,C\}
    =:\widetilde{C}
    <\infty.
\end{align*}
Moreover, since \(\rho\in L^2(\mathbb{R})\), using~\eqref{eqn: sigma bound} yields
\begin{align*}
    \Sigma(x,z)
    &\leq\frac{\alpha^2M^2}{2^N\mathrm{vol}(K)}\Vert T_{x-z}\rho-\rho\Vert_2^2
\end{align*}
where \(T_{\psi}\colon L^2(\mathbb{R}^N)\rightarrow L^2(\mathbb{R}^N)\) denotes the translation operator which maps \(g(y)\mapsto g(y+\psi)\) for all \(y\in\mathbb{R}^N\), \(g\in L^2(\mathbb{R}^N)\).
To see how we obtain this expression, observe that
\begin{align*}
    \rho\big(\alpha\langle w,z\rangle+b_{\alpha}(y,w,u)\big)
    &=\rho\big(\alpha\langle w,z\rangle-\alpha\langle w,y\rangle-\alpha u\big)\\
    &=\rho\big(\alpha\langle w,x\rangle-\alpha\langle w,y+(x-z)\rangle-\alpha u\big)\\
    &=T_{x-z}\rho\big(\alpha\langle w,z\rangle+b_{\alpha}(y,w,u)\big)
\end{align*}
so that
\begin{align*}
    &\int_K\Big|\rho\big(\alpha\langle w,z\rangle+b_{\alpha}(y,w,u)\big)-\rho\big(\alpha\langle w,x\rangle+b_{\alpha}(y,w,u)\big)\Big|^2\mathrm{d}y\\
    &\qquad\leq\int_{\mathbb{R}^N}\Big|\rho\big(\alpha\langle w,z\rangle+b_{\alpha}(y,w,u)\big)-T_{x-z}\rho\big(\alpha\langle w,z\rangle+b_{\alpha}(y,w,u)\big)\Big|^2\mathrm{d}y
    =\Vert T_{x-z}\rho-\rho\Vert_2^2.
\end{align*}
Now, since the \(L^2\) norm is continuous with respect to translation, for any \(\varepsilon^{\prime}>0\) we may choose our \(\delta>0\) for the net \(\mathcal{C}(\delta,K)\) small enough to ensure that \(\Vert T_{x-z}\rho-\rho\Vert_2<\varepsilon^{\prime}\) for all \(x\in K\).
Hence, if we choose
\begin{align*}
    n
    \geq\frac{\widetilde{C}c\log(6\eta^{-1})}{t\log(1+\frac{2^N\mathrm{vol}(K)\widetilde{C}t}{\alpha^2M^2\mathrm{vol}^2(K(\Omega))(\varepsilon^{\prime})^2})},
\end{align*}
then \((*)<t\) for any \(x\in K\) with probability at least \(1-\eta/2\).
Therefore, combining our results, we have that \((*)+(**)<2t\) with probability at least \(1-\eta\) whenever
\begin{align*}
    %n\geq\max\bigg\{\frac{\widetilde{C}c\log(6\eta^{-1})}{t\log(1+\frac{2^N\mathrm{vol}(K)\widetilde{C}t}{\alpha^2M^2\mathrm{vol}^2(K(\Omega))(\varepsilon^{\prime})^2})},\frac{Cc\log(6\eta^{-1}\mathcal{N}(\delta,K))}{t\log(1+\frac{2^N\mathrm{vol}(K)Ct}{\alpha^2M^2\mathrm{vol}^2(K(\Omega))\Vert\rho\Vert_2^2})}\bigg\}.
    n\geq\max\bigg\{\frac{\widetilde{C}c\log(6\eta^{-1})}{t\log(1+\frac{2^N\mathrm{vol}(K)\widetilde{C}t}{\alpha^2M^2\mathrm{vol}^2(K(\Omega))(\varepsilon^{\prime})^2})},\frac{Cc\log(6\eta^{-1}\mathcal{N}(\delta,K))}{t\log(1+\frac{Ct}{\mathrm{vol}^2(K(\Omega))\Sigma})}\bigg\}.
\end{align*}
Since the first term in this max decreases and the second term increases as \(\varepsilon^{\prime}\) decreases, we may fix \(\varepsilon^{\prime}>0\) and \(\delta>0\) in such a way that
\begin{align*}
    %\frac{Cc\log(6\eta^{-1}\mathcal{N}(\delta,K))}{t\log(1+\frac{2^N\mathrm{vol}(K)Ct}{\alpha^2M^2\mathrm{vol}^2(K(\Omega))\Vert\rho\Vert_2^2})}
    %\geq\frac{\widetilde{C}c\log(6\eta^{-1})}{t\log(1+\frac{2^N\mathrm{vol}(K)\widetilde{C}t}{\alpha^2M^2\mathrm{vol}^2(K(\Omega))(\varepsilon^{\prime})^2})},
    \frac{Cc\log(6\eta^{-1}\mathcal{N}(\delta,K))}{t\log(1+\frac{Ct}{\mathrm{vol}^2(K(\Omega))\Sigma})}
    \geq\frac{\widetilde{C}c\log(6\eta^{-1})}{t\log(1+\frac{2^N\mathrm{vol}(K)\widetilde{C}t}{\alpha^2M^2\mathrm{vol}^2(K(\Omega))(\varepsilon^{\prime})^2})},
\end{align*}
from which it follows that \((*)+(**)<2t\) with probability at least \(1-\eta\) whenever
\begin{align*}
    %n\geq\frac{Cc\log(6\eta^{-1}\mathcal{N}(\delta,K))}{t\log(1+\frac{2^N\mathrm{vol}(K)Ct}{\alpha^2M^2\mathrm{vol}^2(K(\Omega))\Vert\rho\Vert_2^2})}.
    n\geq\frac{Cc\log(6\eta^{-1}\mathcal{N}(\delta,K))}{t\log(1+\frac{Ct}{\mathrm{vol}^2(K(\Omega))\Sigma})}.
\end{align*}
Choosing \(t=\frac{\varepsilon}{4\sqrt{\mathrm{vol}(K)}}\) then gives the final result.
The bound on \(n\) may be expressed
\begin{align*}
    %n\gtrsim\frac{C\sqrt{\mathrm{vol}(K)}\log(\eta^{-1}\mathcal{N}(\delta,K))}{\varepsilon\log(1+\frac{C\varepsilon}{2^{N+5}\Omega^{2(N+1)}\alpha^2M^2N^2\mathrm{rad}^2(K)\mathrm{vol}^{7/2}(K)\Vert\rho\Vert_2^2})}.
    n\geq\frac{4Cc\sqrt{\mathrm{vol}(K)}\log(6\eta^{-1}\mathcal{N}(\delta,K))}{\varepsilon\log(1+\frac{C\varepsilon}{4N^2\Sigma(2\Omega)^{2(N+1)}\mathrm{rad}^2(K)\mathrm{vol}^{5/2}(K)})}.
\end{align*}
THIS DOESN'T QUITE WORK! It works pointwise, but the union bound fails. See below.}

\longhide{
We are interested in obtaining a uniform bound on \(|I_n(x)-I(x;1)|\) for all \(x\in K\), with high probability.
For this, we write
\begin{align*}
    |I_n(x)-I(x;1)|
    &\leq\underbrace{|(I_n(x)-I_n(z))-(I(x;1)-I(z;1))|}_{(*)}+\underbrace{|I_n(z)-I(z;1)|}_{(**)}
\end{align*}
and seek to bound the two terms simultaneously with high probability.
The \((**)\) term argument is unchanged and given below.
For the \((*)\) term, note that \(x\in K\) is fixed and \(z\in\mathcal{C}(\delta,K)\) is fixed such that \(\Vert x-z\Vert_2<\delta\).
With this in mind, we define
\begin{align*}
    S_n(x,z)&:=I_n(x)-I_n(z),\\
    E(x,z)&:=I(x;1)-I(z;1)
\end{align*}
for all \(x\in K\), \(z\in\mathcal{C}(\delta,K)\) and observe via Lemma~\ref{lem: MC concentration ineq} that, for any \(t>0\),
\begin{align*}
    \mathbb{P}\big(|S_n(x,z)-E(x,z)|\geq t\big)
    \leq3\exp\Big(-\frac{nt}{C(x,z)c}\log\big(1+\frac{C(x,z)t}{\mathrm{vol}^2(K(\Omega))\Sigma(x,z)}\big)\Big),
\end{align*}
where \(c>0\) is a numerical constant,
\begin{align*}
    C(x,z)&:=\esssup_{(y,w, u)\in K(\Omega)}\Big|\mathrm{vol}(K(\Omega))F_{\alpha,\Omega}\big(y,w,u\big)\Big(\rho\big(\alpha\langle w,x\rangle+b_{\alpha}(y,w,u)\big)-\rho\big(\alpha\langle w,z\rangle+b_{\alpha}(y,w,u)\big)\Big)\\
    & -I(x,z;1)\Big|,\\
    \Sigma(x,z)&:=\frac{I(x,z;2)}{\mathrm{vol}(K(\Omega))}-\frac{I(x,z;1)^2}{\mathrm{vol}^2(K(\Omega))},
\end{align*}
and we have now defined
\begin{align*}
    I(x,z;p)
    :=\int_{K(\Omega)}\bigg(F_{\alpha,\Omega}(y,w,u)\Big(\rho\big(\alpha\langle w,x\rangle+b_{\alpha}(y,w,u)\big)-\rho\big(\alpha\langle w,z\rangle+b_{\alpha}(y,w,u)\big)\Big)\bigg)^p\mathrm{d}y\mathrm{d}w\mathrm{d}u
\end{align*}
for all \(p\in\mathbb{N}\).
Now, since \(\rho\in L^1(\mathbb{R})\), we have
\begin{align*}
    C(x,z)
    \leq2\max\{C_x,C_z\}
    \leq2\max\{\sup_{x\in K}C_x,C\}
    =:\widetilde{C}
    <\infty.
\end{align*}
Let us define $R_z(y) = \rho\big(\alpha\langle w,z\rangle+b_{\alpha}(y,w,u)\big)=\rho\big(\alpha\langle w,z-y\rangle-\alpha u\big)$. Then we have that $$R_x(y)=\rho\big(\alpha\langle w,z-(y+(z-x))\rangle-\alpha u\big) = T_{z-x}R_z(y),$$ where \(T_{\psi}\colon L^2(\mathbb{R}^N)\rightarrow L^2(\mathbb{R}^N)\) denotes the translation operator which maps \(g(y)\mapsto g(y+\psi)\) for all \(y\in\mathbb{R}^N\), \(g\in L^2(\mathbb{R}^N)\). Moreover, since \(\rho\in L^2(\mathbb{R})\), we have that $R_z\in L^2(\mathbb{R})$ for all $z\in \mathcal{C}(\delta,K)$. Thus, 
using~\eqref{eqn: sigma bound} yields
\begin{align*}
    \Sigma(x,z)
    &\leq\frac{\alpha^2M^2}{2^N\mathrm{vol}(K)}\Vert T_{z-x}R_z-R_z\Vert_2^2
\end{align*}
Now, since the \(L^2\) norm is continuous with respect to translation, for any \(\varepsilon^{\prime}>0\) we may choose our \(\delta>0\) for the net \(\mathcal{C}(\delta,K)\) small enough to ensure that \(\Vert T_{z-x}R_z-R_z\Vert_2<\varepsilon^{\prime}\) for all \(z\in \mathcal{C}(\delta,K)\).
BUT: how do we then do the union bound to obtain uniform for all $x\in K$ probability bound?
}

\longhide{moreover, by continuity of \(f\) we may choose our \(\delta>0\) such that
\begin{align*}
    |f(x)-f(z)|<\frac{\varepsilon}{16\alpha(2\Omega)^{N+1}\mathrm{vol}^{3/2}(K)\Vert\rho\Vert_1},
\end{align*}
from which we obtain
\begin{align*}
    (*)+(***)<\frac{\varepsilon}{4\sqrt{\mathrm{vol}(K)}}.
\end{align*}}

\longhide{if we choose \(t=\varepsilon/4\sqrt{\mathrm{vol}(K)}\) and set
\begin{align*}
    \eta=3\mathcal{N}(\delta,K)\exp\Big(-\frac{n\varepsilon}{4Cc\sqrt{\mathrm{vol}(K)}}\log\big(1+\frac{C\varepsilon}{4(2\Omega)^{2(N+1)}\mathrm{vol}^{5/2}(K)\Sigma}\big)\Big),
\end{align*}
then a union bound yields
\begin{align*}
    (**)
    <\frac{\varepsilon}{4\sqrt{\mathrm{vol}(K)}}
\end{align*}
simultaneously for all \(y\in\mathcal{C}(\delta,K)\) with probability at least \(1-\eta\) whenever
\begin{align*}
    n\geq\frac{4Cc\sqrt{\mathrm{vol}(K)}\log(3\eta^{-1}\mathcal{N}(\delta,K))}{\varepsilon\log(1+\frac{C\varepsilon}{4(2\Omega)^{2(N+1)}\mathrm{vol}^{5/2}(K)\Sigma})},
\end{align*}
that is,
\begin{align*}
    n\gtrsim\frac{\sqrt{\mathrm{vol}(K)}\log(\eta^{-1}\mathcal{N}(\delta,K))}{\varepsilon\log(1+\frac{\varepsilon}{\Omega^N\mathrm{vol}^{5/2}(K)})}.
\end{align*}
With this choice of \(n\), we conclude that
\begin{align*}
    |I_n(x)-I(x;1)|
    \leq(*)+(**)+(***)
    <\frac{\varepsilon}{2\sqrt{\mathrm{vol}(K)}}
\end{align*}
holds uniformly for all \(x\in K\) with probability at least \(1-\eta\).}

\subsection{Results on submanifolds of Euclidean space}\label{sec: manifold results}

The constructions of RVFL networks presented in Theorems~\ref{thm: IP95} and~\ref{thm: probabilistic Igelnik-Pao} depend heavily on the dimension of the ambient space \(\mathbb{R}^N\).
Indeed, the random variables used to construct the input-to-hidden layer weights and biases for these neural networks are \(N\)-dimensional objects; moreover, it follows from \eqref{eqn: net bound} and \eqref{eqn: node bound} that the lower bound on the number $n$ of nodes in the hidden layer depends \OPtext{superexponentially} on the ambient dimension $N$.
If the ambient dimension is small, these dependencies do not present much of a problem.
However, many modern applications require the ambient dimension to be large.
Fortunately, a common assumption in practice is that signals of interest have (e.g., manifold) structure that effectively reduces their complexity.
Good theoretical results and algorithms in a number of settings typically depend on this induced smaller dimension rather than the ambient dimension.
For this reason, it is desirable to obtain approximation results for RVFL networks that leverage the underlying structure of the signal class of interest, namely, the domain of \(f\in C_c(\mathbb{R}^N)\).

One way to introduce lower-dimensional structure in the context of RVFL networks is to assume that \(\mathrm{supp}(f)\) lies on a subspace of \(\mathbb{R}^N\).
More generally, and motivated by applications, we may consider the case where \(\mathrm{supp}(f)\) is actually a submanifold of  \(\mathbb{R}^N\).
To this end, for the remainder of this section, we assume \(\mathcal{M}\subset\mathbb{R}^N\) to be a smooth, compact  \(d\)-dimensional manifold and consider the problem of approximating functions \(f\in C(\mathcal{M})\) using RVFL networks.
As we are going to see, RVFL networks in this setting yield theoretical guarantees that replace the dependencies of Theorems~\ref{thm: IP95} and~\ref{thm: probabilistic Igelnik-Pao} on the ambient dimension \(N\) with dependencies on the manifold dimension \(d\).
% Indeed, one might expect to see the random variables \(\{w_k\}_{k=1}^n\), \(\{b_k\}_{k=1}^n\) being \(d\)-dimensional objects (rather than \linebreak \(N\)-dimensional) and that the lower bound on the number of network nodes in Theorem~\ref{thm: probabilistic Igelnik-Pao} scales like \({n\gtrsim d\mathrm{vol}(M)\varepsilon^{-1}\log(\mathrm{vol}(M)/\varepsilon)}\). \OPnote{wrong (the dependence on the dimension should be superexponential) but how do you want me to change this?}\DNnote{I vote re-work the entire sentence, since I think it is both wrong and also confusingly worded.}\RSnote{agreed, we could just replace the bound by something like "... scales as a (superexponential) function of $d$ rather than $N$."}
\OPtext{%
Indeed, one should expect that the random variables \(\{w_k\}_{k=1}^n\), \(\{b_k\}_{k=1}^n\) are essentially \(d\)-dimensional objects (rather than \(N\)-dimensional) and that the lower bound on the number of network nodes in Theorem~\ref{thm: probabilistic Igelnik-Pao} scales as a (superexponential) function of $d$ rather than $N$.
}%

\subsubsection{Adapting RVFL networks to \(d\)-manifolds}\label{sec: man construction}

As in Section~\ref{sec: manifold basics}, let \(\{(U_j,\phi_j)\}_{j\in J}\) be an atlas for the smooth, compact \(d\)-dimensional manifold \(\mathcal{M}\subset\mathbb{R}^N\) with the corresponding compactly supported partition of unity \(\{\eta_j\}_{j\in J}\).
Since \(\mathcal{M}\) is compact, we assume without loss of generality that \(|J|<\infty\). Indeed, if we additionally assume that $\mathcal{M}$ satisfies the property that there exists an $r>0$ such that, for each \(x\in\mathcal{M}\), \(\mathcal{M}\cap B_2^N(x,r)\) is diffeomorphic to an \(\ell_2\) ball in \(\mathbb{R}^d\) with diffeomorphism close to the identity. Then one can choose an atlas \(\{(U_j,\phi_j)\}_{j\in J}\) with \(|J|\lesssim2^d T_d \mathrm{vol}(\mathcal{M})r^{-d}\) by intersecting \(\mathcal{M}\) with \(\ell_2\) balls in \(\mathbb{R}^N\) of radii \(r/2\)~\citep{shaham2018provable}. Here $T_d$ is the so-called thickness of the covering and there exist coverings such that $T_d\lesssim d\log(d)$.

Now, for \(f\in C(\mathcal{M})\), Lemma~\ref{lem: manifold funct rep} implies that
\begin{linenomath}
\begin{align}\label{eqn: manifold funct rep}
    f(x)=\sum_{\{j\in J\colon x\in U_j\}}(\hat{f}_j\circ\phi_j)(x)
\end{align}
\end{linenomath}
for all \(x\in\mathcal{M}\), where \[\hat{f}_j(z):=\begin{cases}
    f(\phi_j^{-1}(z))\,\eta_j(\phi_j^{-1}(z))\quad&z\in\phi_j(U_j)\\
    0&\text{otherwise}.
    \end{cases}\]

As we will see, the fact that \(\mathcal{M}\) is smooth and compact implies \(\hat{f}_j\in C_c(\mathbb{R}^d)\) for each \(j\in J\), and so we may approximate each \(\hat{f}_j\) using RVFL networks on \(\mathbb{R}^d\) as in Theorems~\ref{thm: IP95} and~\ref{thm: probabilistic Igelnik-Pao}.
In this way, it is reasonable to expect that \(f\) can be approximated on \(\mathcal{M}\) using a linear combination of these low-dimensional RVFL networks.
More precisely, we propose approximating \(f\) on \(\mathcal{M}\) via the following process:
\begin{enumerate}
    \item
    For each \(j\in J\), approximate \(\hat{f}_j\) uniformly on \(\phi_j(U_j)\subset\mathbb{R}^d\) using a RVFL network \(\tilde{f}_{n_j}\) as in Theorems~\ref{thm: IP95} and~\ref{thm: probabilistic Igelnik-Pao};
    \item
    Approximate \(f\) uniformly on \(\mathcal{M}\) by summing these RVFL networks over \(J\), i.e.,
    \begin{align*}
        f(x)\approx\sum_{\{j\in J\colon x\in U_j\}}(\tilde{f}_{n_j}\circ\phi_j)(x)
    \end{align*}
    for all \(x\in\mathcal{M}\).
\end{enumerate}

\longhide{In order to prove approximation results like Theorems~\ref{thm: IP95} and~\ref{thm: probabilistic Igelnik-Pao} in this setting, we must first verify a few minor technical points.
To this end, define
\begin{align*}
    h_w(y):=\prod_{k=1}^\mathrm{d}w(k)\rho\big(w(k)y(k)\big)
\end{align*}
for all \(y,w\in\mathbb{R}^d\), as in~\eqref{eqn: window transform}.
Since it is assumed that \(\mathrm{supp}(\eta_j)\subset U_j\) is compact for each \(j\in J\), it follows that \(\phi_j(\mathrm{supp}(\eta_j))\) is a compact subset of \(\mathbb{R}^d\).
Moreover, because \(\hat{f}_j(z)\neq0\) if and only if \(z\in\phi_j(U_j)\) and \(\phi_j^{-1}(z)\in\mathrm{supp}(\eta_j)\subset U_j\), we have that \(\hat{f}_j=\hat{f}_j|_{\phi_j(\mathrm{supp}(\eta_j)}\) is supported on a compact set.
Hence, \(\hat{f}_j\in C_c(\mathbb{R}^d)\) for each \(j\in J\), and so we may apply Lemma~\ref{lem: Step 1} to obtain
\begin{align}\label{eqn: manifold lim-int rep 1}
    \hat{f}_j(z)
    =\lim_{\Omega_j\rightarrow\infty}\frac{1}{\Omega_j^d}\int_{[0,\Omega_j]^d}(\hat{f}_j*h_w)(z)\mathrm{d}w
\end{align}
for all \(z\in\mathbb{R}^d\).
With the representation~\eqref{eqn: manifold lim-int rep 1} in hand, applying Lemma~\ref{lem: step 2} yields
\begin{align}\label{eqn: manifold final lim-int rep}
    \hat{f}_j(z)
    =\lim_{\Omega_j\rightarrow\infty}\lim_{\alpha_j\rightarrow\infty}
    \int_{K(\Omega_j)}
    F_{\alpha_j,\Omega_j}(y,w,u)\rho\big(\alpha_j\langle w,z\rangle+b_{\alpha_j}(y,w,u)\big)\mathrm{d}y\mathrm{d}w\mathrm{d}u
\end{align}
for all \(z\in\phi_j(U_j)\), where
\begin{align*}
    K(\Omega_j):=\phi_j(U_j)\times[-\Omega_j,\Omega_j]^d\times[-\frac{\pi}{2}(2L_j+1),\frac{\pi}{2}(2L_j+1)]
\end{align*}
for \(L_j:=\lceil\frac{2d}{\pi}\mathrm{rad}(\phi_j(U_j))\Omega_j-\frac{1}{2}\rceil\), and \(F_{\alpha_j,\Omega_j},b_{\alpha_j}\) are defined as in~\eqref{eqn: F-b}; that is,
\begin{align*}
    F_{\alpha_j,\Omega_j}(y,w,u)
    &:=\frac{\alpha_j}{\Omega_j^d}\Big|\prod_{k=1}^\mathrm{d}w(k)\Big|\hat{f}_j(y)\cos_{\Omega_j}(u),\\
    b_{\alpha_j}&:=-\alpha_j(\langle w,y\rangle+u).
\end{align*}
Hence, for each \(j\in J\) we obtain a Monte Carlo approximation via~\eqref{eqn: I-P RVFLs}:
\begin{align*}
    \hat{f}_j(z)\approx\tilde{f}_{n_j}(z)
    :=\sum_{k=1}^{n_j}v_k^{(j)}\rho\big(\langle w_k^{(j)},z\rangle+b_k^{(j)}\big)
\end{align*}
for all \(z\in\phi_j(U_j)\), where \(\{y_k^{(j)}\}_{k=1}^n\), \(\{w_k^{(j)}\}_{k=1}^n\), and \(\{u_k^{(j)}\}_{k=1}^n\) are independent samples drawn uniformly from \(\phi_j(U_j)\), \([-\alpha_j\Omega_j,\alpha_j\Omega_j]^d\), and \([-\frac{\pi}{2}(2L_j+1),\frac{\pi}{2}(2L_j+1)]\), respectively, and \(b_k^{(j)}:=-(\langle w_k^{(j)},y_k^{(j)}\rangle+\alpha_ju_k^{(j)})\) for each \(k=1,\ldots,n_j\).
In particular, by Lemma~\ref{lem: part 3} we have
\begin{align*}
    \lim_{n_j\rightarrow\infty}\mathbb{E}\int_{\phi_j(U_j)}
    \left|\int_{K(\Omega_j)}
    F_{\alpha_j,\Omega_j}(y,w,u)\rho\big(\alpha_j\langle w,z\rangle+b_{\alpha_j}(y,w,u)\big)\mathrm{d}y\mathrm{d}w\mathrm{d}u
    -\tilde{f}_{n_j}(z)\right|^2\mathrm{d}z
    =0
\end{align*}
for each \(j\in J\), with convergence rate \(O(1/n_j)\).
We now approximate \(f\) on \(\mathcal{M}\) by the sum
\begin{align}\label{eqn: manifold RVFL}
    f(x)
    \approx\sum_{\{j\in J\colon x\in U_j\}}(\tilde{f}_{n_j}\circ\phi_j)(x)
    =\sum_{\{j\in J\colon x\in U_j\}}\sum_{k=1}^{n_j}v_k^{(j)}\rho\big(\langle w_k^{(j)},\phi_j(x)\rangle+b_k^{(j)}\big).
\end{align}
}

\subsubsection{Main results on \(d\)-manifolds}\label{sec: manifold IP95 proof}

We now prove approximation results for the manifold RVFL network architecture described in Section~\ref{sec: man construction}.
For notational clarity, from here onward we use \(\lim_{\{n_j\}_{j\in J}\rightarrow\infty}\) to denote the limit as each \(n_j\) tends to infinity simultaneously.
The first theorem that we prove is an asymptotic approximation result for continuous functions on manifolds using the RVFL network construction presented in Section~\ref{sec: man construction}.
This theorem is the manifold-equivalent of Theorem~\ref{thm: IP95}.

\begin{theorem}\label{thm: manifold Igelnik-Pao}
Let \(\mathcal{M}\subset\mathbb{R}^N\) be a smooth, compact \(d\)-dimensional manifold with finite atlas \(\{(U_j,\phi_j)\}_{j\in J}\) and \(f\in C(\mathcal{M})\).
Fix any activation function \OPtext{$\rho\in L^1(\mathbb{R})\cap L^\infty(\mathbb{R})$} with \(\int_{\mathbb{R}}\rho(z)\mathrm{d}z=1\).
For any \(\varepsilon>0\), there exist constants \(\alpha_j,\Omega_j>0\)  for each \(j\in J\) such that the following holds. 
If, for each \(j\in J\) and for $k\in  \mathbb{N}$, the random variables
\begin{linenomath}
\begin{align*}
    w_k^{(j)}&\sim \mathrm{Unif}([-\alpha_j\Omega_j,\alpha_j\Omega_j]^d);\\
    y_k^{(j)}&\sim \mathrm{Unif}(\phi_j(U_j));\\
    u_k^{(j)}&\sim \mathrm{Unif}([-\tfrac{\pi}{2}(2L_j+1),\tfrac{\pi}{2}(2L_j+1)]),
    \quad\text{where \(L_j:=\lceil\tfrac{2d}{\pi}\mathrm{rad}(\phi_j(U_j))\Omega_j-\tfrac{1}{2}\rceil\)},
\end{align*}
\end{linenomath}
are independently drawn from their associated distributions, and \[b_k^{(j)}:=-\langle w_k^{(j)},y_k^{(j)}\rangle-\alpha_ju_k^{(j)},\] then there exist hidden-to-output layer weights \(\{v_k^{(j)}\}_{k=1}^{n_j}\subset\mathbb{R}\) such that the sequences of RVFL networks \(\{\tilde{f}_{n_j}\}_{n_j=1}^{\infty}\) defined by \[\tilde{f}_{n_j}(z):=\sum_{k=1}^{n_j}v_k^{(j)}\rho\big(\langle w_k^{(j)},z\rangle+b_k^{(j)}\big),\quad\text{ for \(z\in\phi_j(U_j)\)}\]
satisfy \[\mathbb{E}\int_{\mathcal{M}}\bigg|f(x)\quad-\sum_{\{j\in J\colon x\in U_j\}}(\tilde{f}_{n_j}\circ\phi_j)(x)\bigg|^2\mathrm{d}x\leq\varepsilon+O(1/\min_{j\in J}n_j)\]
as $\{n_j\}_{j\in J}\rightarrow\infty.$
\end{theorem}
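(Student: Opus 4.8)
The plan is to reduce the manifold problem to $|J|$ independent applications of the Euclidean result (Theorem~\ref{thm: IP95}), one in each chart, and then reassemble the chart-wise errors through the partition-of-unity representation of Lemma~\ref{lem: manifold funct rep}. First I would verify the technical point that each $\hat f_j\in C_c(\mathbb{R}^d)$, so that Theorem~\ref{thm: IP95} (stated for $\mathbb{R}^d$) applies. Since $\{\eta_j\}_{j\in J}$ is a compactly supported partition of unity subordinate to $\{U_j\}_{j\in J}$, the set $\mathrm{supp}(\eta_j)$ is a compact subset of $U_j$, so $\phi_j(\mathrm{supp}(\eta_j))$ is compact in $\mathbb{R}^d$ and contains $\mathrm{supp}(\hat f_j)$; continuity of $\hat f_j=(f\eta_j)\circ\phi_j^{-1}$ on $\phi_j(U_j)$, together with the fact that $\eta_j$ vanishes near the edge of $U_j$, lets one extend $\hat f_j$ by zero to a function in $C_c(\mathbb{R}^d)$.

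Next I would apply the construction underlying Theorem~\ref{thm: IP95} to each $\hat f_j$ separately in $d$-dimensional coordinate space, using the parameters $\alpha_j,\Omega_j$ and the random variables $w_k^{(j)},y_k^{(j)},u_k^{(j)}$ prescribed in the statement. The one point that needs care is that I must control the chart-wise $L^2$ error over the \emph{whole} of $\phi_j(U_j)$ rather than merely over $\mathrm{supp}(\hat f_j)$, because the reassembled approximation sums $\tilde f_{n_j}\circ\phi_j$ over every chart containing $x$, including charts where $\hat f_j\circ\phi_j$ vanishes but $\tilde f_{n_j}\circ\phi_j$ need not. This is exactly why the truncation level is taken to be $L_j=\lceil\tfrac{2d}{\pi}\mathrm{rad}(\phi_j(U_j))\Omega_j-\tfrac12\rceil$, governed by $\mathrm{rad}(\phi_j(U_j))$ rather than by $\mathrm{rad}(\mathrm{supp}(\hat f_j))$: with this choice the limit--integral representation of Lemma~\ref{lem: step 2} (and the Monte-Carlo bound of Lemma~\ref{lem: part 3}) holds uniformly for all $z\in\phi_j(U_j)$, so for any $\varepsilon_j>0$ one obtains $\lim_{n_j\to\infty}\mathbb{E}\int_{\phi_j(U_j)}|\hat f_j(z)-\tilde f_{n_j}(z)|^2\mathrm{d}z<\varepsilon_j$ with rate $O(1/n_j)$.

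To assemble the global bound I would start from the identity $f(x)-\sum_{\{j:x\in U_j\}}(\tilde f_{n_j}\circ\phi_j)(x)=\sum_{\{j:x\in U_j\}}((\hat f_j-\tilde f_{n_j})\circ\phi_j)(x)$ furnished by Lemma~\ref{lem: manifold funct rep}, apply Cauchy--Schwarz to move the square inside at the cost of a factor $|J|$, and integrate over $\mathcal{M}$:
\begin{align*}
\int_{\mathcal{M}}\Big|f(x)-\!\!\!\sum_{\{j:x\in U_j\}}\!\!\!(\tilde f_{n_j}\circ\phi_j)(x)\Big|^2\mathrm{d}x
&\le|J|\sum_{j\in J}\int_{U_j}\big|((\hat f_j-\tilde f_{n_j})\circ\phi_j)(x)\big|^2\mathrm{d}x.
\end{align*}
The change of variables $z=\phi_j(x)$ (cf.~\eqref{eqn: man CoVs}) turns each summand into $\int_{\phi_j(U_j)}|\hat f_j-\tilde f_{n_j}|^2\,|\det D\phi_j(\phi_j^{-1}(z))|^{-1}\mathrm{d}z$, and since $\mathcal{M}$ is smooth and compact the Jacobian factor is bounded by a finite constant $G_j:=\sup_{z\in\phi_j(U_j)}|\det D\phi_j(\phi_j^{-1}(z))|^{-1}$. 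Taking expectations and letting every $n_j\to\infty$, the chart-wise bounds yield $\lim_{\{n_j\}_{j\in J}\to\infty}\mathbb{E}\int_{\mathcal{M}}|\cdots|^2\mathrm{d}x\le|J|\sum_{j\in J}G_j\varepsilon_j$; choosing each $\varepsilon_j$ (and hence each $\alpha_j,\Omega_j$) so that $|J|\sum_{j\in J}G_j\varepsilon_j<\varepsilon$ closes the estimate, and since each term decays like $O(1/n_j)$ the overall rate is $O(1/\min_{j\in J}n_j)$.

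The main obstacle is the chart-wise step: I must re-run the three-lemma machinery behind Theorem~\ref{thm: IP95} so that the approximation is valid on all of $\phi_j(U_j)$, not just on $\mathrm{supp}(\hat f_j)$, which is precisely what forces the truncation $L_j$ to be tied to $\mathrm{rad}(\phi_j(U_j))$. A secondary technical point is ensuring that the change-of-variables Jacobian $|\det D\phi_j|^{-1}$ is uniformly bounded on $\phi_j(U_j)$; this is where smoothness and compactness of $\mathcal{M}$, together with the nondegeneracy of the charts up to the boundary guaranteed by the standard atlas construction, enter.
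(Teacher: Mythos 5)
Your proposal is correct, and it follows the paper's overall architecture (verify $\hat f_j\in C_c(\mathbb{R}^d)$, run the Euclidean construction of Theorem~\ref{thm: IP95} chart-by-chart with $\phi_j(U_j)$ playing the role of $K$, then reassemble via Lemma~\ref{lem: manifold funct rep} and the change of variables~\eqref{eqn: man CoVs}), but it diverges in how the squared sum over charts is handled, and the divergence is worth noting. The paper expands $\big|\sum_j((\hat f_j-\tilde f_{n_j})\circ\phi_j)\big|^2$ into diagonal terms plus cross terms; the cross terms are then controlled by invoking the \emph{independence} of the networks $\tilde f_{n_j}$ and $\tilde f_{n_k}$ for $j\neq k$, so that the expectation factors into the deterministic biases $\hat f_j\circ\phi_j-I^{(j)}(\phi_j(\cdot);1)$, which are bounded by the uniform limit--integral guarantee; justifying the interchange of expectation, limit, and integral there requires a separate Fubini/dominated-convergence argument using the variance bound~\eqref{eqn: sigma bound}. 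You sidestep all of this with the elementary inequality $\big|\sum_{j\in S}a_j\big|^2\leq|S|\sum_{j\in S}a_j^2$, paying a factor $|J|$ that is harmlessly absorbed into the choice of the free parameters $\varepsilon_j$. Your route is shorter and avoids both the independence assumption and the Fubini justification; the paper's route yields the marginally sharper constant $\big(\sum_j\sqrt{\varepsilon_j/\beta_j}\big)^2$ in place of $|J|\sum_j\varepsilon_j/\beta_j$, but since the $\varepsilon_j$ are chosen after the fact, both close the argument and both give the claimed $O(1/\min_{j\in J}n_j)$ rate. Your observation that the chart-wise error must be controlled on all of $\phi_j(U_j)$ rather than on $\mathrm{supp}(\hat f_j)$, which is what ties $L_j$ to $\mathrm{rad}(\phi_j(U_j))$, is correct and matches the paper's parameter choices.
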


%\PSnote{\begin{enumerate}
%    \item Same as for Theorem~\ref{thm: IP95}, should we draw an infinite sequences of \(\{w_k^{(j)}\}_{k=1}^{\infty}\) and \(\{b_k^{(j)}\}_{k=1}^{\infty}\), and reuse then for all $n_j$?
%    \item What we obtain here is not quite a single RVFL network construction. Rather, for each $j\in J$ such that $x\in U_j$, we compute $\phi_j(x)$, and then put $\phi_j(x)$ through the respective RVFL. I added a remark about it in the Main results section in the introduction? I also corrected statement of Theorem \ref{thm: manifold Igelnik-Pao_short} accordingly.
%\end{enumerate}}\RSnote{For 1) Same response as III.1, for 2) great! }

\begin{proof}
We wish to show that there exist sequences of RVFL networks \(\{\tilde{f}_{n_j}\}_{n_j=1}^{\infty}\) defined on \(\phi_j(U_j)\) for each \(j\in J\) which together satisfy the asymptotic error bound \[\mathbb{E}\int_{\mathcal{M}}\bigg|f(x)\quad-\sum_{\{j\in J\colon x\in U_j\}}(\tilde{f}_{n_j}\circ\phi_j)(x)\bigg|^2\mathrm{d}x\leq\varepsilon+O(1/\min_{j\in J}n_j)\] as $\{n_j\}_{j\in J}\rightarrow\infty.$
We will do so by leveraging the result of Theorem~\ref{thm: IP95} on each \(\phi_j(U_j)\subset\mathbb{R}^d\).

To begin, recall that we may apply the representation~\eqref{eqn: manifold funct rep} for \(f\) on each chart \((U_j,\phi_j)\);
the RVFL networks \(\tilde{f}_{n_j}\) we seek are approximations of the functions \(\hat{f}_j\) in this expansion.
Now, as \(\mathrm{supp}(\eta_j)\subset U_j\) is compact for each \(j\in J\), it follows that each set \(\phi_j(\mathrm{supp}(\eta_j))\) is a compact subset of \(\mathbb{R}^d\).
Moreover, because \(\hat{f}_j(z)\neq0\) if and only if \(z\in\phi_j(U_j)\) and \(\phi_j^{-1}(z)\in\mathrm{supp}(\eta_j)\subset U_j\), we have that \(\hat{f}_j=\hat{f}_j|_{\phi_j(\mathrm{supp}(\eta_j)}\) is supported on a compact set.
Hence, \(\hat{f}_j\in C_c(\mathbb{R}^d)\) for each \(j\in J\), and so we may apply Lemma~\ref{lem: step 2} to obtain the uniform limit representation~\eqref{eqn: final lim-int rep} on \(\phi_j(U_j)\), that is, \[\hat{f}_j(z)
    =\lim_{\Omega_j\rightarrow\infty}\lim_{\alpha_j\rightarrow\infty}
    \int_{K(\Omega_j)}
    F_{\alpha_j,\Omega_j}(y,w,u)\rho\big(\alpha_j\langle w,z\rangle+b_{\alpha_j}(y,w,u)\big)\mathrm{d}y\mathrm{d}w\mathrm{d}u,\]
where we define \[K(\Omega_j):=\phi_j(U_j)\times[-\Omega_j,\Omega_j]^d\times[-\tfrac{\pi}{2}(2L_j+1),\tfrac{\pi}{2}(2L_j+1)].\]
In this way,
% as in Section~\ref{sec: step 4}, by~\eqref{eqn: final lim-int rep} we know that for any \(\varepsilon_j>0\) there exist \(\alpha_j,\Omega_j>0\) such that
% \begin{align}\label{eqn: man lim int guarantee}
%     \big|\hat{f}_j(z)-I^{(j)}(z;1)\big|
%     <\sqrt{\frac{\varepsilon_j}{\mathrm{vol}(\phi_j(U_j))}}
% \end{align}
% holds for each \(z\in\phi_j(U_j)\) simultaneously, where \(I^{(j)}(\cdot;p)\) is as in~\eqref{eqn: MC int}, as well as
the asymptotic error bound that is the final result of Theorem~\ref{thm: IP95}, namely
\begin{linenomath}
\begin{align}\label{eqn: man asympt bound}
    \mathbb{E}\int_{\phi_j(U_j)}\big|\hat{f}_j(z)-\tilde{f}_{n_j}(z)\big|^2\mathrm{d}z
    \leq\varepsilon_j+O(1/n_j)\OPtext{,}
\end{align}
\end{linenomath}
\OPtext{holds.} With these results in hand, we may now continue with the main body of the proof.

Since the representation~\eqref{eqn: manifold funct rep} for \(f\) on each chart \((U_j,\phi_j)\) yields \[\bigg|f(x)\quad-\sum_{\{j\in J\colon x\in U_j\}}(\tilde{f}_{n_j}\circ\phi_j)(x)\bigg|
    \leq\sum_{\{j\in J\colon x\in U_j\}}\Big|(\hat{f}_j\circ\phi_j)(x)-(\tilde{f}_{n_j}\circ\phi_j)(x)\Big|\]
for all \(x\in\mathcal{M}\), \OPtext{Jensen's inequality allows us to bound} the mean square error of our RVFL approximation
% may be bounded
by
\begin{linenomath}
\begin{align}\label{eqn: manifold bound 1}
    \begin{split}
    &\mathbb{E}\int_{\mathcal{M}}\bigg|f(x)\quad-\sum_{\{j\in J\colon x\in U_j\}}(\tilde{f}_{n_j}\circ\phi_j)(x)\bigg|^2\mathrm{d}x\\
    &\leq\lvert J\rvert\cdot\underbrace{\mathbb{E}\int_{\mathcal{M}}\sum_{\{j\in J\colon x\in U_j\}}\Big|(\hat{f}_j\circ\phi_j)(x)-(\tilde{f}_{n_j}\circ\phi_j)(x)\Big|^2\mathrm{d}x}_{(*)}
    % &+2\underbrace{\mathbb{E}\int_{\mathcal{M}}\sum_{\substack{\{j\neq k\in J\colon\\ x\in U_j\cap U_k\}}}\Big((\hat{f}_j\circ\phi_j)(x)-(\tilde{f}_{n_j}\circ\phi_j)(x)\Big)\Big((\hat{f}_k\circ\phi_k)(x)-(\tilde{f}_{n_k}\circ\phi_k)(x)\Big)\mathrm{d}x}_{(**)}.
    \end{split}
\end{align}
\end{linenomath}
To bound \((*)\), note that the change of variables~\eqref{eqn: man CoVs} implies \[\int_{\mathcal{M}}\sum_{\{j\in J\colon x\in U_j\}}\Big|(\hat{f}_j\circ\phi_j)(x)-(\tilde{f}_{n_j}\circ\phi_j)(x)\Big|^2\mathrm{d}x
    =\sum_{j\in J}\int_{\phi_j(U_j)}\frac{\big|\hat{f}_j(z)-\tilde{f}_{n_j}(z)\big|^2}{|\det(D\phi_j(\phi_j^{-1}(z)))|}\mathrm{d}z\]
for each \(j\in J\).
Defining \(\beta_j:=\inf_{y\in U_j}|\det(D\phi_j(y))|\), which is necessarily bounded away from zero for each \(j\in J\) by compactness of \(\mathcal{M}\), %\PSnote{Maybe, we should specify a bit more what ``nice'' coordinate maps means (e.g. their smoothness), so that we can make the argument here a bit more precise.}\DNnote{Agreed.}
we therefore have\[
    (*)
    \leq\sum_{j\in J}\beta_j^{-1}\mathbb{E}\int_{\phi_j(U_j)}\big|\hat{f}_j(z)-\tilde{f}_{n_j}(z)\big|^2\mathrm{d}z.\]
Hence, applying~\eqref{eqn: man asympt bound} for each \(j\in J\) yields
\begin{linenomath}
\begin{align}\label{eqn: manifold intermediate bound 1}
   (*)
    % \leq\sum_{j\in J}\beta_j^{-1}\lim_{n_j\rightarrow\infty}\mathbb{E}\int_{\phi_j(U_j)}\big|\hat{f}_j(z)-\tilde{f}_{n_j}(z)\big|^2\mathrm{d}z
    \leq\sum_{j\in J}\beta_j^{-1}\big(\varepsilon_j+O(1/n_j)\big)=\sum_{j\in J}\frac{\varepsilon_j}{\beta_j}+O(1/\min_{j\in J}n_j)
\end{align}
\end{linenomath}
because $\sum_{j\in J}1/n_j\leq\lvert J\rvert/\min_{j\in J}n_j.$
With the \OPtext{bound}~\eqref{eqn: manifold intermediate bound 1}
% and~\eqref{eqn: manifold intermediate bound 2}
in hand,
% taking limits in~
\eqref{eqn: manifold bound 1} \OPtext{becomes} % yields
\[\mathbb{E}\int_{\mathcal{M}}\bigg|f(x)-\sum_{\substack{\{j\in J\colon\\ x\in U_j\}}}(\tilde{f}_{n_j}\circ\phi_j)(x)\bigg|^2\mathrm{d}x
    \leq\lvert J\rvert\sum_{j\in J}\frac{\varepsilon_j}{\beta_j}%+2\sum_{j\neq k\in J}\sqrt{\frac{\varepsilon_j\varepsilon_k}{\beta_j\beta_k}}
    % =\Big(\sum_{j\in J}\sqrt{\frac{\varepsilon_j}{\beta_j}}\Big)^2
    +O(1/\min_{j\in J}n_j)\]
as $\{n_j\}_{j\in J}\rightarrow\infty,$
% with convergence rate \(O(1/\min_{j\in J}n_j)\),
and so the proof is completed by taking each \(\varepsilon_j>0\) in such a way that \[\varepsilon=\lvert J\rvert\sum_{j\in J}\frac{\varepsilon_j}{\beta_j},\]
and choosing \(\alpha_j,\Omega_j>0\) accordingly for each \(j\in J\).
\end{proof}

\begin{remark}
Note that the neural-network architecture obtained in Theorem~\ref{thm: manifold Igelnik-Pao} %does not have the form of a single neural network,
has the following form 
in the case of a generic atlas. To obtain the estimate of $f(x)$, the input $x$ is first ``pre-processed'' by computing $\phi_j(x)$ for each $j\in J$ such that $x\in U_j$, and then put  through the corresponding RVFL network. However, using the Geometric Multi-Resolution Analysis approach from \cite{Allard2011MultiscaleGM} (as we do in Section~\ref{sec: numerics}), one can construct an approximation (in an appropriate sense) of the atlas, with maps $\phi_j$ being linear. In this way, the pre-processing step can be replaced by the layer computing $\phi_j(x)$, followed by the RVFL layer~$f_j$. We refer the reader to Section~\ref{sec: numerics} for the details.
\end{remark}

The biggest takeaway from Theorem~\ref{thm: manifold Igelnik-Pao} is that the same asymptotic mean-square error behavior we saw in the RVFL network architecture of Theorem~\ref{thm: IP95} holds for our RVFL-like construction on manifolds, with the added benefit that the input-to-hidden layer weights and biases are now \(d\)-dimensional random variables rather than \(N\)-dimensional.
Provided the size of the atlas \(|J|\) isn't too large, this significantly reduces the number of random variables that must be generated to produce a uniform approximation of \(f\in C(\mathcal{M})\).

One might expect to see a similar reduction in dimension dependence for the non-asymptotic case if the RVFL network construction of Section~\ref{sec: man construction} is used.
Indeed, our next theorem, which is the manifold-equivalent of Theorem~\ref{thm: probabilistic Igelnik-Pao}, makes this explicit:

%\RSnote{Thoughts on making this human readable? For example, in the spirit of my comments on Theorem III.4? }\DNnote{I like it, but  how? Do we keep the technical version and give a human version as a corollary or just let the reader dive into the proof for details?}
\begin{theorem}\label{thm: probabilistic manifold Igelnik-Pao}

Let \(\mathcal{M}\subset\mathbb{R}^N\) be a smooth, compact \(d\)-dimensional manifold with finite atlas \(\{(U_j,\phi_j)\}_{j\in J}\) and \(f\in C(\mathcal{M})\).
Fix any activation function \OPtext{$\rho\in L^1(\mathbb{R})\cap L^\infty(\mathbb{R})$} such that \(\rho\) is \(\kappa\)-Lipschitz on \(\mathbb{R}\) for some \(\kappa>0\) and $\int_\mathbb{R}\rho(z)\mathrm{d}z = 1$.
For any \(\varepsilon>0\), there exist constants \(\alpha_j,\Omega_j>0\)  for each \(j\in J\) such that the following holds. 
Suppose, for each \(j\in J\) and for $k=1,...,n_j$, the random variables
\begin{linenomath}
\begin{align*}
    w_k^{(j)}&\sim \mathrm{Unif}([-\alpha_j\Omega_j,\alpha_j\Omega_j]^d);\\
    y_k^{(j)}&\sim \mathrm{Unif}(\phi_j(U_j));\\
    u_k^{(j)}&\sim \mathrm{Unif}([-\tfrac{\pi}{2}(2L_j+1),\tfrac{\pi}{2}(2L_j+1)]),
    \quad\text{where \(L_j:=\lceil\tfrac{2d}{\pi}\mathrm{rad}(\phi_j(U_j))\Omega_j-\tfrac{1}{2}\rceil\)},
\end{align*}
\end{linenomath}
are independently drawn from their associated distributions, and \[b_k^{(j)}:=-\langle w_k^{(j)},y_k^{(j)}\rangle-\alpha_ju_k^{(j)}.\] Then there exist hidden-to-output layer weights \(\{v_k^{(j)}\}_{k=1}^{n_j}\subset\mathbb{R}\) such that, for any \[0
    <\delta_j
    <\frac{\sqrt{\varepsilon}}{8\lvert J\rvert\sqrt{d\mathrm{vol}(\mathcal{M})}\kappa\alpha_j^2M_j\Omega_j(\Omega_j/\pi)^d\mathrm{vol}(\phi_j(U_j))(\pi+2d\mathrm{rad}(\phi_j(U_j))\Omega)},\]
and \[n_j
    \geq\frac{2c\lvert J\rvert\sqrt{\mathrm{vol}(\mathcal{M})}C^{(j)}\alpha_j(\Omega_j/\pi)^d(\pi+2d\mathrm{rad}(\phi_j(U_j))\Omega_j)\log(3\lvert J\rvert\eta^{-1}\mathcal{N}(\delta_j,\phi_j(U_j)))}{\sqrt{\varepsilon}\log\big(1+\frac{\sqrt{\varepsilon}}{2\lvert J\rvert\sqrt{\mathrm{vol}(\mathcal{M})}C^{(j)} \alpha_j(\Omega_j/\pi)^d(\pi+2d\mathrm{rad}(\phi_j(U_j))\Omega_j)}\big)},\]
where \(M_j:=\sup_{z\in\phi_j(U_j)}|\hat{f}_j(z)|\), \(c>0\) is a numerical constant, and \OPtext{$C^{(j)}:=2M_j\lVert\rho\rVert_\infty\mathrm{vol}(\phi_j(U_j)),$}
% \(C^{(j)},\Sigma^{(j)}\) are constants depending on \(\hat{f}_j\) and \(\rho\) for each \(j\in J\),
the sequences of RVFL networks \(\{\tilde{f}_{n_j}\}_{n_j=1}^{\infty}\) defined by \[\tilde{f}_{n_j}(z):=\sum_{k=1}^{n_j}v_k^{(j)}\rho\big(\langle w_k^{(j)},z\rangle+b_k^{(j)}\big),\quad\text{ for \(z\in\phi_j(U_j)\)}\]
satisfy \[\int_{\mathcal{M}}\bigg|f(x)\quad-\sum_{\{j\in J\colon x\in U_j\}}(\tilde{f}_{n_j}\circ\phi_j)(x)\bigg|^2\mathrm{d}x<\varepsilon\]
with probability at least \(1-\eta\).
\end{theorem}

\begin{proof}
See Section \ref{sec: Manifold proof}.
\end{proof}

As alluded to earlier, an important implication of Theorems~\ref{thm: manifold Igelnik-Pao} and~\ref{thm: probabilistic manifold Igelnik-Pao} is that the random variables \(\{w_k^{(j)}\}_{k=1}^{n_j}\) and \(\{b_k^{(j)}\}_{k=1}^{n_j}\) are \(d\)-dimensional objects for each \(j\in J\).
%Likewise, for small \(\varepsilon\), Theorem~\ref{thm: probabilistic manifold Igelnik-Pao} shows that the number of nodes behaves roughly like
%\begin{align*}
 %   n_j\gtrsim d|J|\mathrm{vol}(\mathcal{M})\varepsilon^{-1}\log(\mathrm{vol}(\mathcal{M})/\varepsilon)
%\end{align*}
%for each \(j\in J\).
Moreover, bounds for $\delta_j$ and $n_j$ now have \OPtext{superexponential} dependence on the manifold dimension $d$ instead of the ambient dimension $N$.
Thus, introducing the manifold structure removes the dependencies on the ambient dimension, replacing them instead with the intrinsic dimension of \(\mathcal{M}\) and the complexity of the atlas \(\{(U_j,\phi_j)\}_{j\in J}\).

\begin{remark}
The bounds on the covering radii \(\delta_j\) and hidden layer nodes \(n_j\) needed for each chart in Theorem~\ref{thm: probabilistic manifold Igelnik-Pao} are not optimal.
Indeed, these bounds may be further improved if one uses the local structure of the manifold, through quantities such as its curvature and reach.
In particular, the appearance of \(|J|\) in both bounds may be significantly improved upon if the manifold is locally well-behaved.
\end{remark}

\longhide{
\section{Learning the output weights}

Since the results of Theorems~\ref{thm: IP95}, \ref{thm: probabilistic Igelnik-Pao}, \ref{thm: manifold Igelnik-Pao}, and~\ref{thm: probabilistic manifold Igelnik-Pao} are purely existence results, a reasonable question remains as to how to efficiently and effectively learn the weights \(\{v_k\}_{k=1}^n\) in the RVFL networks \(f_n\).
In this section, we seek a means of doing so given a training set \(\{x_k\}_{k=1}^p\subset K\) for some \(p\in\mathbb{N}\) and associated labels \(\{y(k)\}_{k=1}^p\subset\mathbb{R}\).
With some abuse of notation, we may write
\begin{align*}
    y(j)
    =f(x_j)\approx f_n(x_j)
    =\sum_{k=1}^nv_kg\big(\langle w_k,x_j\rangle+b_k\big)
    =a^Tg\big(W^Tx_j+b\big)
    =:a^Tz_{w,b}(x_j)
\end{align*}
for each \(j\in\{1,\ldots,p\}\), where now \(a\in\mathbb{R}^{n\times 1}\) and \(z_{w,b}(\cdot)\in\mathbb{R}^{n\times 1}\).
Hence, we have
\begin{align*}
    y=f(X)\approx f_n(X)=Z_{w,b}(X)^Ta,
\end{align*}
where \(Z_{w,b}(X)\in\mathbb{R}^{n\times p}\) is the matrix whose \(j\)th column is \(z_{w,b}(x_j)\).
We are now left with the solving of the system \(y=Z_{w,b}(X)^Ta\) for the vector \(a\) (note that such \(a\) exists with high probability by Theorem~\ref{thm: probabilistic Igelnik-Pao}).
In the underparameterized case where \(p\geq n\) and \(\mathrm{rank}(Z_{w,b}(X))=n\) (this will depend on the distribution of the training data), the vector \(a\) is unique and given by the least squares solution
\begin{align*}
    a=\big(Z_{w,b}(X)Z_{w,b}(X)^T\big)^{-1}Z_{w,b}(X)y.
\end{align*}
(It should be remarked that the invertibility of \(Z_{w,b}(X)Z_{w,b}(X)^T\) is also highly dependent on the properties of the activation function \(g\).)
We therefore seek to design our training set so that
\begin{align*}
    \big\Vert I-Z_{w,b}(X)^T\big(Z_{w,b}(X)Z_{w,b}(X)^T\big)^{-1}Z_{w,b}(X)\big\Vert_F
\end{align*}
is small.

The remaining (overparameterized) case is when \(p<n\), where the vector \(a\) is not unique (the matrix \(Z_{w,b}(X)^T\) has a null-space).
In this case, the solution given based on the training data \((X,y)\) may or may not be a good solution in general.
In fact, such a solution is just one of many:
\begin{align*}
    a=Z_{w,b}(X)\big(Z_{w,b}(X)^TZ_{w,b}(X)\big)^{-1}y+\Big(I-Z_{w,b}(X)\big(Z_{w,b}(X)^TZ_{w,b}(X)\big)^{-1}Z_{w,b}(X)^T\Big)\xi
\end{align*}
for arbitrary \(\xi\in\mathbb{R}^n\).
Hence, an interesting problem is the following:
Given an initialization \(a_0\in\mathbb{R}^n\), find the closest point \(a\in\mathbb{R}^n\) satisfying \(y=Z_{w,b}(X)^Ta\).
Formally, we have the optimization problem
\begin{align*}
    &\text{minimize}&&\tfrac{1}{2}\Vert h\Vert_2^2\\
    &\text{subject to}&&Z_{w,b}(X)^Th=y-Z_{w,b}(X)^Ta_0,
\end{align*}
where \(h:=a-a_0\).
Using the method of Lagrange multipliers to solve for \(h\),
%\begin{align*}
%    0=\nabla_{h,\lambda}\big(\tfrac{1}{2}\Vert h\Vert_2^2+\lambda^T(Z_{w,b}(X)^Th-y+Z_{w,b}(X)^Ta_0)\big)
%\end{align*}
%\begin{align*}
%    0&=h^T+\lambda^TZ_{w,b}(X)^T,\\
%    0&=Z_{w,b}(X)^Th-y+Z_{w,b}(X)^Ta_0
%\end{align*}
%\begin{align*}
%    h&=-Z_{w,b}(X)\lambda,\\
%    Z_{w,b}(X)^Ta_0-y&=Z_{w,b}(X)^TZ_{w,b}(X)\lambda
%\end{align*}
%\begin{align*}
%    \lambda&=\big(Z_{w,b}(X)^TZ_{w,b}(X)\big)^{-1}\big(Z_{w,b}(X)^Ta_0-y\big),\\
%    h&=Z_{w,b}(X)\big(Z_{w,b}(X)^TZ_{w,b}(X)\big)^{-1}\big(y-Z_{w,b}(X)^Ta_0\big)
%\end{align*}
the updated weight vector is given  by
\begin{align*}
    a=a_0+Z_{w,b}(X)\big(Z_{w,b}(X)^TZ_{w,b}(X)\big)^{-1}\big(y-Z_{w,b}(X)^Ta_0\big)
\end{align*}
and we seek to design our training set so that
\begin{align*}
    \big\Vert I-Z_{w,b}(X)\big(Z_{w,b}(X)^TZ_{w,b}(X)\big)^{-1}Z_{w,b}(X)^T\big\Vert_F
\end{align*}
is small.

Now, a reasonable question to ask is the following: does gradient descent yield a solution \(a\) consistent with that obtained via simple linear regression?
As such regression (essentially kernel learning from the random features \(\{z_{w,b}(x_j)\}_{j=1}^p\)) can be difficult computationally (computing inverses is hard!), gradient descent could offer a (potentially) more efficient option.
Specifically, we seek to apply gradient descent to find the minimizer(s) of the function \(f(a):=\frac{1}{2}\Vert Z_{w,b}(X)^Ta-y\Vert_2^2\).
For this, the gradient descent iterates are given by
\begin{align*}
    a^{(k+1)}
    &=a^{(k)}-\eta\nabla_af(a^{(k)})\\
    &=a^{(k)}-\eta Z_{w,b}(X)\big(Z_{w,b}(X)^Ta^{(k)}-y\big)
    =\big(I-\eta Z_{w,b}(X)Z_{w,b}(X)^T\big)a^{(k)}+\eta Z_{w,b}(X)y
\end{align*}
for each \(k\in\mathbb{N}\), where the initial iterate \(a^{(0)}\) is given arbitrarily and the step size \(\eta>0\) is chosen for convergence.
By an inductive argument, it can be shown that these iterates behave in the following manner:
\begin{align}\label{eqn: GD iterates}
    a^{(k)}
    =\big(I-\eta Z_{w,b}(X)Z_{w,b}(X)^T\big)^ka^{(0)}+\eta\Big(\sum_{j=1}^k\big(I-\eta Z_{w,b}(X)Z_{w,b}(X)^T\big)^{j-1}\Big)Z_{w,b}(X)y.
\end{align}
We now appeal to the following lemma concerning Neumann series:
\begin{lemma}\label{lem: Neumann series inverse}
Let \(Z_{w,b}(X)\) be a matrix as above.
If \(p\geq n\) and the step size \(\eta>0\) is chosen such that \(\Vert I-\eta Z_{w,b}(X)Z_{w,b}(X)^T\Vert<1\), then
\begin{align*}
    \sum_{j=1}^{\infty}\big(I-\eta Z_{w,b}(X)Z_{w,b}(X)^T\big)^{j-1}
    =\frac{1}{\eta}\big(Z_{w,b}(X)Z_{w,b}(X)^T\big)^{-1}.
\end{align*}
Likewise, if \(p<n\) and the step size \(\eta>0\) is chosen such that \(\Vert I-\eta Z_{w,b}(X)^TZ_{w,b}(X)\Vert<1\), then
\begin{align*}
    \sum_{j=1}^{\infty}\big(I-\eta Z_{w,b}(X)^TZ_{w,b}(X)\big)^{j-1}
    =\frac{1}{\eta}\big(Z_{w,b}(X)^TZ_{w,b}(X)\big)^{-1}.
\end{align*}
\end{lemma}
Applying Lemma~\ref{lem: Neumann series inverse} in the gradient descent iterate~\eqref{eqn: GD iterates}, it follows that (with the appropriate choices of the step size \(\eta\))
\begin{align*}
    \lim_{k\rightarrow\infty}a^{(k)}
    =\big(Z_{w,b}(X)Z_{w,b}(X)^T\big)^{-1}Z_{w,b}(X)y
\end{align*}
when \(p\geq n\) and
\begin{align*}
    \lim_{k\rightarrow\infty}a^{(k)}
    =\mathcal{P}a^{(0)}
    +Z_{w,b}(X)\big(Z_{w,b}(X)^TZ_{w,b}(X)\big)^{-1}y
\end{align*}
when \(p<n\), where \(\mathcal{P}\) is orthogonal projection onto the null space of \(Z_{w,b}(X)^T\).
Thus, if \(a^{(0)}\) is chosen to be zero, as is typically the case (anything not in the null space of \(Z_{w,b}(X)^T\) suffices), gradient descent converges to the least squares solution in  both the underparameterized and overparameterized cases.
}

\subsection{Numerical Simulations}\label{sec: numerics}

In this section, we provide numerical evidence to support the result of Theorem~\ref{thm: probabilistic manifold Igelnik-Pao}.
Let \(\mathcal{M}\subset\mathbb{R}^N\) be a smooth, compact \(d\)-dimensional manifold.
Since having access to an atlas for \(\mathcal{M}\) is not necessarily practical, we assume instead that we have a suitable approximation to \(\mathcal{M}\).
For our purposes, we will use a Geometric Multi-Resolution Analysis (GMRA) approximation of \(\mathcal{M}\) ~(see \cite{Allard2011MultiscaleGM}; and also, e.g., \cite{iwen2018recovery} for a complete definition).

A GMRA approximation of \(\mathcal{M}\) provides a collection \(\{(\mathcal{C}_j,\mathcal{P}_j)\}_{j\in\{1,\ldots J\}}\) of centers \({\mathcal{C}_j=\{c_{j,k}\}_{k=1}^{K_j}\subset\mathbb{R}^N}\) and affine projections \(\mathcal{P}_j=\{P_{j,k}\}_{k=1}^{K_j}\) on \(\mathbb{R}^N\) such that, for each \(j\in\{1,\ldots,J\}\), the pairs \(\{(c_{j,k},P_{j,k})\}_{k=1}^{K_j}\) define \(d\)-dimensional affine spaces that approximate \(\mathcal{M}\) with increasing accuracy in the following sense.
For every \(x\in\mathcal{M}\), there exists \(\widetilde{C}_x>0\) and \(k'\in\{1,\ldots,K_j\}\) such that
\begin{linenomath}
\begin{align}\label{eqn: GMRA accuracy}
    \Vert x-P_{j,k'}x\Vert_2\leq \widetilde{C}_x2^{-j}
\end{align}
\end{linenomath}
holds whenever \(\Vert x-c_{j,k'}\Vert_2\) is sufficiently small.
In this way, a GMRA approximation of \(\mathcal{M}\) essentially provides a collection of approximate tangent spaces to \(\mathcal{M}\).
Hence, a GMRA approximation having fine enough resolution (i.e., large enough \(j\)) is a good substitution for an atlas. In practice, one must often first construct a GMRA from empirical data, assumed to be sampled from appropriate distributions on the manifold. Indeed, this is possible, and yields the so-called empirical GMRA, studied in \cite{maggioni2016multiscale}, where finite-sample error bounds are provided. The main point is that given enough samples on the manifold, one can construct a good GMRA approximation of the manifold. % \PSnote{Do we maybe want to mention here what do we need to know about the manifold in order to construct GMRA (with given resolution)? can we do it from training data only?}\RSnote{yes, and yes :).}

Let \(\{(c_{j,k}, P_{j,k})\}_{k=1}^{K_j}\) be a GMRA approximation of \(\mathcal{M}\) for refinement level \(j\).
Since the affine spaces defined by \((c_{j,k},P_{j,k})\) for each \(k\in\{1,\ldots,K_j\}\) are \(d\)-dimensional, we will approximate \(f\) on \(\mathcal{M}\) by projecting it (in an appropriate sense) onto these affine spaces and approximating each projection using an RVFL network on \(\mathbb{R}^d\).
To make this more precise, observe that, since each affine projection acts on \(x\in\mathcal{M}\) as \(P_{j,k}x=c_{j,k}+\Phi_{j,k}(x-c_{j,k})\) for some othogonal projection \(\Phi_{j,k}\colon\mathbb{R}^N\rightarrow\mathbb{R}^N\), for each \(k\in\{1,\ldots K_j\}\) we have \[f(P_{j,k}x)
    =f\big(c_{j,k}+\Phi_{j,k}(x-c_{j,k})\big)
    =f\big((I_N-\Phi_{j,k})c_{j,k}+U_{j,k}D_{j,k}V_{j,k}^Tx\big),\]
where \(\Phi_{j,k}=U_{j,k}D_{j,k}V_{j,k}^T\) is the compact singular value decomposition (SVD) of \(\Phi_{j,k}\) (i.e., only the left and right singular vectors corresponding to nonzero singular values are computed).
In particular, the matrix of right-singular vectors \(V_{j,k}\colon\mathbb{R}^d\rightarrow\mathbb{R}^N\) enables us to define a function \(\hat{f}_{j,k}\colon\mathbb{R}^d\rightarrow\mathbb{R}\), given by
\begin{linenomath}
\begin{align}\label{eqn: GMRA funct approx}
    \hat{f}_{j,k}(z)
    :=f\big((I_N-\Phi_{j,k})c_{j,k}+U_{j,k}D_{j,k}z\big),
    \qquad z\in\mathbb{R}^d,
\end{align}
\end{linenomath}
which satisfies \(\hat{f}_{j,k}(V_{j,k}^Tx)=f(P_{j,k}x)\) for all \(x\in\mathcal{M}\).
By continuity of \(f\) and~\eqref{eqn: GMRA accuracy}, this means that for any \(\varepsilon>0\) there exists \(j\in\mathbb{N}\) such that \(|f(x)-\hat{f}_{j,k}(V_{j,k}^Tx)|<\varepsilon\) for some \(k\in \{1,\ldots,K_j\}\). %\PSnote{I think, there is a typo here, and it should read \(|f(x)-\hat{f}_{j,k}(V_{j,k}^Tx)|<\varepsilon\)}\RSnote{Agree.}.
For such \(k\in \{1,\ldots,K_j\}\), we may therefore approximate \(f\) on the affine space associated with \((c_{j,k},P_{j,k})\) by approximating \(\hat{f}_{j,k}\) using a RFVL network \(\tilde{f}_{n_{j,k}}\colon\mathbb{R}^d\rightarrow\mathbb{R}\) of the form

\begin{linenomath}
\begin{align}\label{eqn: GMRA RVFL}
    \tilde{f}_{n_{j,k}}(z):=\sum_{\ell=1}^{n_{j,k}}v_{\ell}^{(j,k)}\rho\big(\langle w_{\ell}^{(j,k)},z\rangle+b_{\ell}^{(j,k)}\big),
\end{align}
\end{linenomath}
where \(\{w_{\ell}^{(j,k)}\}_{\ell=1}^{n_{j,k}}\subset\mathbb{R}^d\) and \(\{b_{\ell}^{(j,k)}\}_{\ell=1}^{n_{j,k}}\subset\mathbb{R}\) are random input-to-hidden layer weights and biases (resp.) and the hidden-to-output layer weights \(\{v_{\ell}^{(j,k)}\}_{\ell=1}^{n_{j,k}}\subset\mathbb{R}\) are learned.
Choosing the activation function \(\rho\) and random input-to-hidden layer weights and biases as in Theorem~\ref{thm: probabilistic manifold Igelnik-Pao} then guarantees that \(|f(P_{j,k}x)-\tilde{f}_{n_{j,k}}(V_{j,k}^Tx)|\) is small with high probability whenever \(n_{j,k}\) is sufficiently large.

In light of the above discussion, we propose the following RVFL network construction for approximating functions \(f\in C(\mathcal{M})\):
Given a GMRA approximation of \(\mathcal{M}\) with sufficiently high resolution \(j\), construct and train RVFL networks of the form~\eqref{eqn: GMRA RVFL} for each \(k\in \{1,\ldots,K_j\}\).
Then, given \(x\in\mathcal{M}\) and \(\varepsilon>0\), choose \(k'\in\{1,\ldots,K_j\}\) such that \[c_{j,k'}\in\argmin_{c_{j,k}\in\mathcal{C}_j}\Vert x-c_{j,k}\Vert_2\]
and evaluate \(\tilde{f}_{n_{j,k'}}(x)\) to approximate \(f(x)\).
We summarize this algorithm in Algorithm~\ref{alg: RVFL main}.
Since the structure of the GMRA approximation implies \(\Vert x-P_{j,k'}x\Vert_2\leq C_x2^{-2j}\) holds for our choice of \linebreak \({k'\in\{1,\ldots,K_j\}}\)~\citep[see][]{iwen2018recovery}, continuity of \(f\) and Lemma~\ref{lem: part 3} imply that, for any $\varepsilon > 0$ and \(j\) large enough, \[|f(x)-\tilde{f}_{n_{j,k'}}(V_{j,k'}^Tx)|
    \leq |f(x)-\hat{f}_{j,k'}(V_{j,k'}^Tx)|+|\hat{f}_{j,k'}(V_{j,k'}^Tx)-\tilde{f}_{n_{j,k'}}(V_{j,k'}^Tx)|
    <\varepsilon\]
holds with high probability, provided \(n_{j,k'}\) satisfies the requirements of Theorem~\ref{thm: probabilistic manifold Igelnik-Pao}.

\begin{algorithm}[ht]
\caption{Approximation Algorithm}\label{alg: RVFL main}
    \begin{algorithmic}
        \LState \textbf{Given:} \(f\in C(\mathcal{M})\); GMRA approximation \(\{(c_{j,k}, P_{j,k})\}_{k=1}^{K_j}\) of \(\mathcal{M}\) at scale \(j\)
        
        \LState \textbf{Output:} \(y^{\sharp}\approx f(x)\) for any \(x\in\mathcal{M}\)
        %\DNnote{Unusual to use `Ensure' in an algorithm description, should we comment? I don't know if it is clear whether it is a command or an output here.}\ANnote{I'm not attached to the wording. It can be changed to "Output" for clarity.}
        \vspace{5pt}
        
        \LState \textbf{Step 1:} For each \(k\in \{1,\ldots,K_j\}\), construct and train\footnotemark a RVFL network \(\tilde{f}_{n_{j,k}}\) of the form~\eqref{eqn: GMRA RVFL}
        % AN Comment for review: The intent of this step was to allow the reader/user the freedom to construct and train a RVFL network by any means (i.e., to not specify an activation function or training method). In light of the reviewer commenting about precision here, should we expand the statement to include choosing appropriate activation and random input weights and learning (via gradient descent, back-propagation, or other preferred method) the remaining output weights?
        
        \LState \textbf{Step 2:} For any \(x\in\mathcal{M}\), find \(c_{j,k'}\in\argmin_{c_{j,k}\in\mathcal{C}_j}\Vert x-c_{j,k}\Vert_2\)
        
        \LState \textbf{Step 3:} Set \(y^{\sharp}=\tilde{f}_{n_{j,k'}}(x)\)
    \end{algorithmic}
\end{algorithm}
\footnotetext{The construction and training of RVFL networks is left as a ''black box'' procedure. How to best choose a specific activation function \(\rho(z)\) and train each RVFL network~\eqref{eqn: GMRA RVFL} is outside of the scope of this paper. The reader may, for instance, select from the range of methods available for training neural networks.}

%\PSnote{A follow up on my comments after Theorem~\ref{thm: manifold Igelnik-Pao}: Now with the use of GMRA for construction of the approximate charts, and after finding $k$ for the given $x$, our construction actually is a neural net, with the first layer computing $V_{j,k}^Tx$, and the second layer being the corresponding RVFL \(\tilde{f}_{n_{j,k}}\). With this in mind, do we wand to move this GMRA construction to Section~\ref{sec: manifold results} (probably, as a separate subsection), so that we can actually right away claim that we constructed an RVFL with additional layer approximation for a function on a manifold? And then also change the statement of Theorem~\ref{thm: manifold Igelnik-Pao_short}, accordingly?}\RSnote{I don't think so. Merging the GMRA might get pretty messy. }

\begin{remark}
In the RVFL network construction proposed above we require that the function \(f\) be defined in a sufficiently large region around the manifold.
Essentially, we need to ensure that \(f\) is continuously defined on the set \(S:=\mathcal{M}\cup\widehat{\mathcal{M}}_j\), where \(\widehat{\mathcal{M}}_j\) is the scale-\(j\) GMRA approximation \[\widehat{\mathcal{M}}_{j} := \{P_{j, k_j(z)}z\; \colon \; \|z\|_2 \leq \mathrm{rad}(\mathcal{M}) \} \cap B_2^N(0,\mathrm{rad}(\mathcal{M})).\]
This ensures that \(f\) can be evaluated on the affine subspaces given by the GMRA. 
\end{remark}

To simulate Algorithm~\ref{alg: RVFL main}, we take \(\mathcal{M}=\mathbb{S}^2\) embedded in \(\mathbb{R}^{20}\) and construct a GMRA up to level \(j_{\max}=15\) using 20,000 data points sampled uniformly from \(\mathcal{M}\).
Given \(j\leq j_{\max}\), we generate RVFL networks \(\hat{f}_{n_{j,k}}\colon\mathbb{R}^2\rightarrow\mathbb{R}\) as in~\eqref{eqn: GMRA RVFL} and train them on \(V_{j,k}^T(B_2^N(c_{j,k},r)\cap T_{j,k})\) using the training pairs \(\{(V_{k,j}^Tx_{\ell},f(P_{j,k}x_{\ell}))\}_{\ell=1}^p\), where \(T_{k,j}\) is the affine space generated by \((c_{j,k},P_{j,k})\).
For simplicity, we fix \(n_{j,k}=n\) to be constant for all \(k\in\{1,\ldots,K_j\}\) and use a single, fixed pair of parameters \(\alpha,\Omega>0\) when constructing all RVFL networks.
We then randomly select a test set of 200 points \(x\in\mathcal{M}\) for use throughout all experiments.
In each experiment (i.e., point in Figure~\ref{fig: RVFL error 2}), we use Algorithm~\ref{alg: RVFL main} to produce an approximation \(y^{\sharp}=\tilde{f}_{n_{j,k'}}(x)\) of \(f(x)\).
Figure~\ref{fig: RVFL error 2} displays the mean relative error in these approximations for varying numbers of nodes \(n\); to construct this plot, \(f\) is taken to be the exponential \(f(x)=\exp(\sum_{k=1}^{N}x(k))\) and \(\rho\) the hyperbolic secant function.
Notice that for small numbers of nodes the RVFL networks are not very good at approximating \(f\), regardless of the choice of \(\alpha,\Omega>0\).
However, the error decays as the number of nodes increases until reaching a floor due to error inherent in the GMRA approximation.
Hence, as suggested by Theorem~\ref{thm: manifold Igelnik-Pao_short}, to achieve a desired error bound of \(\varepsilon>0\), one needs to only choose a GMRA scale \(j\) such that the inherent error in the GMRA (which scales like \(2^{-j}\)) is less than \(\varepsilon\), then adjust the parameters \(\alpha_j\), \(\Omega_j\), and \(n_{j,k}\) accordingly.
\OPtext{%
\begin{remark}
As we just mentioned, the error can only decay so far due to the resolution of the GMRA approximation. However, that is not the only floor in our simulation; indeed, the $\varepsilon$ in Theorem~\ref{thm: manifold Igelnik-Pao_short} is determined by the $\alpha_j$'s and $\Omega_j$'s, which we keep fixed (see the caption of Figure~\ref{fig: RVFL error 2}). Consequently, the stagnating accuracy as $n$ increases, as seen in Figure~\ref{fig: RVFL error 2}, is also predicted by Theorem~\ref{thm: manifold Igelnik-Pao_short}. Since the solid and dashed lines seem to reach the same floor, the floor due to error inherent in the GMRA approximation seems to be the limiting error term for RVFL networks with large numbers of nodes. 
\end{remark}
\begin{remark}
Utilizing random inner weights and biases resulted in us needing to approximate the atlas to the manifold. To this end, knowing the computational complexity of the GMRA approximation would be useful in practice. As it turns out in \cite{gmrarate}, calculating the GMRA approximation has computational complexity $O(C^d N m\log(m)),$ where $m$ is the number of training data points and $C>0$ is a numerical constant.
\end{remark}
}%

% \longhide{\begin{figure}[h] 
%     \centering
%     \includegraphics[width=\linewidth]{j12_j9_j6_refined.png}
%     \caption{Log-scale plot of average relative error from Algorithm~\ref{alg: RVFL main} as a function of the number of nodes \(n\) in each RVFL.
%     Black, blue, and red lines correspond to GMRA refinement level \(j=12\), \(j=9\), and \(j=6\) (resp.).
%     For each \(j\), reconstruction error decays as a function of \(n\) until reaching a floor due to error in the GMRA approximation of \(\mathcal{M}\).
%     }
%     \label{fig: RVFL error}
% \end{figure}}

\begin{figure}[h]
    \centering
   \includegraphics[scale=0.65]{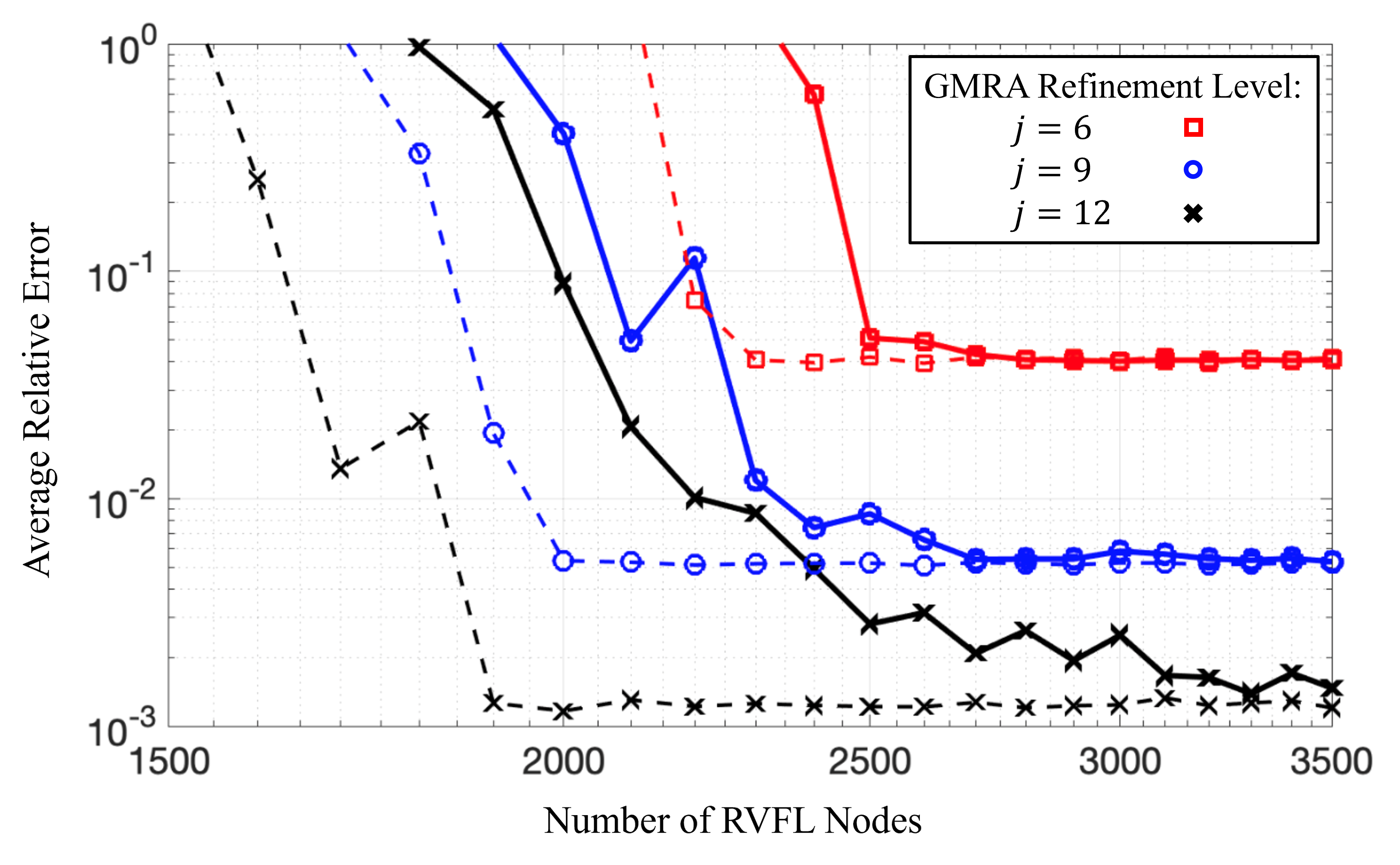}
    \caption{Log-scale plot of average relative error for Algorithm~\ref{alg: RVFL main} as a function of the number of nodes \(n\) in each RVFL network.
    Black (cross), blue (circle), and red (square) lines correspond to GMRA refinement levels \(j=12\), \(j=9\), and \(j=6\) (resp.).
    For each \(j\), we fix \OPtext{$\alpha_j=2$} and vary \OPtext{$\Omega_j=10,15$} (solid and dashed lines, resp.).
   Reconstruction error decays as a function of \(n\) until reaching a floor due to error in the GMRA approximation of \(\mathcal{M}\).
   The code used to obtain these numerical results is available upon direct request sent to the corresponding author.
   }
    \label{fig: RVFL error 2}
\end{figure}

%\appendix
\subsection{Proofs of technical lemmas}\label{sec:app}
\subsubsection{Proof of Lemma \ref{lem: Step 1}}\label{sec: step 1}

%The first step in the proof of Theorem~\ref{thm: IP95} is to represent \(f\) using a special convolution identity.
%To this end, we assume without loss of generality that \(\rho\in L^1(\mathbb{R})\) such that \(\int_{\mathbb{R}}g(x)\mathrm{d}x=1\) and consider the function \(h_w\colon\mathbb{R}^N\rightarrow\mathbb{R}\) defined by
%\begin{align}\label{eqn: window transform}     h_w(y):=\prod_{j=1}^Nw(j)\rho\big(w(j)y(j)\big) \end{align}
%for all \(y,w\in\mathbb{R}^N\).
Observe that \OPtext{$h_\Omega$} defined in \eqref{eqn: window transform} may be viewed as a multidimensional bump function\OPtext{;} %formed by taking Cartesian products of \(\rho\);
indeed, the parameter \OPtext{$\Omega>0$} controls the width of the bump\OPtext{.} %in each of the \(N\) coordinate directions.
In particular, if \OPtext{$\Omega$} is allowed to grow very large, then \OPtext{$h_\Omega$} becomes very localized near the origin.
Objects that behave in this way are known in the functional analysis literature as approximate \(\delta\)-functions:
%\PSnote{I think, for the sake of brevity, we can cut everything below up to ``one can show a similar identity for the function \(h_w\)'' paragraph, and just add references to \cite{stein1971introduction, rudin1991functional}.}\RSnote{Sure, but it's only a couple of paragraphs and it adds a little context.}
\begin{definition}\label{def: approx delta}
A sequence of functions \(\{\varphi_t\}_{t>0}\subset L^1(\mathbb{R}^N)\) are called \emph{approximate} (or \emph{nascent}) \emph{\(\delta\)-functions} if \[\lim_{t\rightarrow\infty}\int_{\mathbb{R}^N}\varphi_t(x)f(x)\mathrm{d}x
    =f(0)\]
for all \(f\in C_c(\mathbb{R}^N)\).
For such functions, we write \(\delta_0(x)=\lim_{t\rightarrow\infty}\varphi_t(x)\) for all \(x\in\mathbb{R}^N\), where \(\delta_0\) denotes the \(N\)-dimensional Dirac \(\delta\)-function centered at the origin.
\end{definition}

Given \(\varphi\in L^1(\mathbb{R}^N)\) with \(\int_{\mathbb{R}^N}\varphi(x)\mathrm{d}x=1\), one may construct approximate \(\delta\)-functions for \(t>0\) by defining \(\varphi_t(x):=t^N\varphi(tx)\) for all \(x\in\mathbb{R}^N\) \citep{stein1971introduction}.
Such sequences of approximate \(\delta\)-functions are also called \emph{approximate identity sequences}~\citep{rudin1991functional} since they satisfy a particularly nice identity with respect to convolution, namely, \(\lim_{t\rightarrow\infty}\Vert f*\varphi_t-f\Vert_1=0\) for all \(f\in C_c(\mathbb{R}^N)\) \citep[see][Theorem~6.32]{rudin1991functional}.
In fact, such an identity holds much more generally.

\begin{lemma}\cite[Theorem~1.18]{stein1971introduction}\label{lem: approx delta conv}
Let \(\varphi\in L^1(\mathbb{R}^N)\) with \(\int_{\mathbb{R}^N}\varphi(x)\mathrm{d}x=1\) and for \(t>0\) define \(\varphi_t(x):=t^N\varphi(tx)\) for all \(x\in\mathbb{R}^N\).
If \(f\in L^p(\mathbb{R}^N)\) for \(1\leq p<\infty\) (or \(f\in C_0(\mathbb{R}^N)\subset L^{\infty}(\mathbb{R}^N)\) for \(p=\infty\)), then \(\lim_{t\rightarrow\infty}\Vert f*\varphi_t-f\Vert_p=0\).
\end{lemma}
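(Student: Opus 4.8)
The plan is to establish this classical approximate-identity theorem by reducing the $L^p$ convergence to the continuity of translation in $L^p$, and then closing the argument with Minkowski's integral inequality and dominated convergence. Since the statement is standard (and here cited from Stein--Weiss), I would either invoke it directly or reconstruct the short proof along the following lines.

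First I would verify that dilation preserves the normalization: the change of variables $u=tx$ gives $\int_{\mathbb{R}^N}\varphi_t(u)\,\mathrm{d}u=\int_{\mathbb{R}^N}\varphi(x)\,\mathrm{d}x=1$ for every $t>0$. This allows me to subtract $f(x)=f(x)\int\varphi_t$ inside the convolution and then rescale by $z=ty$, obtaining the exact identity
\[
(f*\varphi_t)(x)-f(x)=\int_{\mathbb{R}^N}\big(f(x-z/t)-f(x)\big)\varphi(z)\,\mathrm{d}z,
\]
in which the only $t$-dependence sits in the shrinking translation $z/t\to 0$. Writing $\tau_h f(x):=f(x-h)$ for the translation operator and taking $L^p$ norms in $x$, Minkowski's integral inequality moves the norm inside the $z$-integral to yield
\[
\|f*\varphi_t-f\|_p\leq\int_{\mathbb{R}^N}\|\tau_{z/t}f-f\|_p\,|\varphi(z)|\,\mathrm{d}z,
\]
so the whole problem reduces to controlling $\|\tau_{z/t}f-f\|_p$ as $t\to\infty$.

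The decisive ingredient — and the step I expect to be the main obstacle — is the continuity of translation: for $1\leq p<\infty$ one has $\|\tau_h f-f\|_p\to 0$ as $h\to 0$, while for $p=\infty$ the same holds because $f\in C_0(\mathbb{R}^N)$ is uniformly continuous. Consequently, for each fixed $z$ the integrand tends to $0$ as $t\to\infty$. The standard route to translation continuity is to approximate $f$ in $L^p$ by a function in $C_c(\mathbb{R}^N)$ (using density of $C_c$ in $L^p$), for which the claim is immediate from uniform continuity and compact support, and then pass to the limit; I would cite this or include it as a one-line sublemma.

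Finally, to promote pointwise-in-$z$ convergence of the integrand to convergence of the integral, I would dominate it uniformly in $t$ by $\|\tau_{z/t}f-f\|_p\,|\varphi(z)|\leq 2\|f\|_p\,|\varphi(z)|$, which lies in $L^1(\mathbb{R}^N)$ since $\varphi\in L^1$ and $\|f\|_p<\infty$ in every case (including the sup norm when $f\in C_0$). The dominated convergence theorem then gives $\|f*\varphi_t-f\|_p\to 0$, completing the proof.
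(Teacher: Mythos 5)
Your proposal is correct and is the standard argument; the paper itself does not prove this lemma (it cites Stein--Weiss Theorem~1.18), but its proof of the closely analogous Lemma~\ref{lem: conv equality} follows exactly the same route you describe: normalization under dilation, Minkowski's integral inequality to reduce to $\Vert \tau_h f - f\Vert_p$, continuity of translation (via uniform continuity and the vanishing-at-infinity property in the sup-norm case), and dominated convergence with the bound $2\Vert f\Vert_p\,|\varphi(z)|$. No gaps.
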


%Motivated by this result, it is reasonable to suppose that the function \(h_w\) satisfies a similar identity.
%In particular, for any \(f\in C_0(\mathbb{R}^N)\) one might suspect that \PSnote{maybe change to:
\OPtext{To prove~\eqref{eqn: lim-int rep 2}, it would suffice to have $\lim_{\Omega\to\infty}\lVert f*h_\Omega-f\rVert_{\infty}=0,$ which is really just Lemma~\ref{lem: approx delta conv} in case $p=\infty.$ Nonetheless, we present a proof by mimicking~\cite{stein1971introduction} for completeness. Moreover, we will use a part of proof in Remark~\ref{rem: hoelder} below.}
% Generalizing the argument, one can show a similar identity for the function \(h_w\). Namely, for any \(f\in C_0(\mathbb{R}^N)\), we have that %}
% \begin{align}\label{eqn: lim-int rep}
%     f(x)=\lim_{|w|\rightarrow\infty}(f*h_w)(x)
% \end{align}
% holds uniformly for all \(x\in\mathbb{R}^N\); here, we write \(\lim_{|w|\rightarrow\infty}\) to mean the limit as each coordinate \(\{w(j)\}_{j=1}^N\) grows to infinity simultaneously.
% To prove~\eqref{eqn: lim-int rep}, it would suffice to have \(\lim_{|w|\rightarrow\infty}\Vert f*h_w-f\Vert_{\infty}=0\) for all \(f\in C_0(\mathbb{R}^N)\).
% Indeed, since convolutions of \(L^1(\mathbb{R}^N)\) and \(L^{\infty}(\mathbb{R}^N)\) functions are uniformly continuous and bounded, this identity implies~\eqref{eqn: lim-int rep} by simply observing that \(h_w\in L^1(\mathbb{R}^N)\) and \(f\in C_0(\mathbb{R}^N)\subset L^{\infty}(\mathbb{R}^N)\).
% Unfortunately, such an identity does not immediately follow from Lemma~\ref{lem: approx delta conv} as \(h_w\) is not constructed in the same way as the approximate identity \(\varphi_t\).
% We can, however, prove the identity using the same proof technique from~\cite{stein1971introduction}.

\begin{lemma}\label{lem: conv equality}
Let \OPtext{$h\in L^1(\mathbb{R}^n)$} with \OPtext{$\int_{\mathbb{R}^N}h(x)\mathrm{d}x=1$} and define \OPtext{$h_\Omega\in L^1(\mathbb{R}^N)$} as in~\eqref{eqn: window transform}
for all \OPtext{$\Omega>0.$}
Then, for all \(f\in C_0(\mathbb{R}^N)\), we have \[\lim_{\Omega\to\infty}\sup_{x\in\mathbb{R}^N}\big\lvert(f*h_\Omega)(x)-f(x)\big\rvert=0.\]
\end{lemma}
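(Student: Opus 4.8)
The plan is to adapt the classical approximate-identity argument (the proof technique of Lemma~\ref{lem: approx delta conv}, from~\citet{stein1971introduction}) to the anisotropic, tensor-product scaling that defines $h_w$. First I would record the two elementary normalizations that make $h_w$ behave like an approximate identity: the one-dimensional change of variables $s = w(j)y(j)$ in each coordinate gives $\int_{\mathbb{R}^N} h_w(y)\,\mathrm{d}y = \prod_{j=1}^N \int_{\mathbb{R}} \rho(s)\,\mathrm{d}s = 1$ and $\int_{\mathbb{R}^N} |h_w(y)|\,\mathrm{d}y = \|\rho\|_1^N$, both valid once each $w(j) > 0$, which is precisely the regime relevant to $|w|\to\infty$. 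Using $\int h_w = 1$, I would rewrite the difference as
\[
    (f*h_w)(x) - f(x) = \int_{\mathbb{R}^N} h_w(y)\bigl(f(x-y) - f(x)\bigr)\,\mathrm{d}y,
\]
so that
\[
    \bigl|(f*h_w)(x) - f(x)\bigr| \le \int_{\mathbb{R}^N} |h_w(y)|\,\bigl|f(x-y) - f(x)\bigr|\,\mathrm{d}y.
\]

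Next I would split the domain of integration into a ball $\{\|y\|_2 < \eta\}$ about the origin and its complement. Since $f\in C_0(\mathbb{R}^N)$ is bounded (write $M:=\|f\|_\infty$) and uniformly continuous, for any $\varepsilon>0$ there is $\eta>0$, \emph{independent of $x$}, with $|f(x-y)-f(x)|<\varepsilon$ whenever $\|y\|_2<\eta$. On the inner region the integrand is at most $\varepsilon|h_w(y)|$, contributing no more than $\varepsilon\|\rho\|_1^N$; on the outer region I bound $|f(x-y)-f(x)|\le 2M$, reducing matters to controlling the tail mass $\int_{\|y\|_2\ge\eta}|h_w(y)|\,\mathrm{d}y$. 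Both estimates are uniform in $x$, which is what ultimately delivers the supremum bound.

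The main step, and the only substantive one, is showing this tail mass vanishes as $|w|\to\infty$. Here I would exploit the product structure: since $\{\|y\|_2\ge\eta\}\subset\bigcup_{j=1}^N\{|y(j)|\ge\eta/\sqrt{N}\}$, a union bound together with Fubini and the one-dimensional change of variables yields
\[
    \int_{\|y\|_2\ge\eta}|h_w(y)|\,\mathrm{d}y \le \|\rho\|_1^{N-1}\sum_{j=1}^N \int_{|s|\ge w(j)\eta/\sqrt{N}}|\rho(s)|\,\mathrm{d}s.
\]
Because $\rho\in L^1(\mathbb{R})$, each one-dimensional tail $\int_{|s|\ge R}|\rho(s)|\,\mathrm{d}s\to 0$ as $R\to\infty$, and since $w(j)\eta/\sqrt{N}\to\infty$ as $w(j)\to\infty$, the whole sum tends to $0$. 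This is where the real content lies: the tensor-product form of $h_w$ is exactly what lets the $N$-dimensional tail reduce to $N$ genuinely one-dimensional $L^1$-tails, each killed by sending the corresponding $w(j)\to\infty$.

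Combining the pieces, for $|w|$ large enough the right-hand side is at most $\varepsilon\|\rho\|_1^N + 2M\cdot(\text{small})$, uniformly in $x$; taking the supremum over $x$ and then letting $\varepsilon\to 0$ gives $\lim_{|w|\to\infty}\sup_{x\in\mathbb{R}^N}|(f*h_w)(x)-f(x)|=0$. The argument uses nothing beyond boundedness and uniform continuity of $f$ together with the $L^1$-integrability of $\rho$, so no additional hypotheses are required. I expect the only mild technical care to be in tracking the uniformity in $x$ (handled automatically by uniform continuity) and in justifying the coordinatewise splitting of the tail, both of which are routine.
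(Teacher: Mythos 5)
Your proof is correct, but it executes the limit differently from the paper. Both arguments begin the same way, using $\int_{\mathbb{R}^N}h_w=1$ to write the error as $\int_{\mathbb{R}^N}|h_w(y)|\,|f(x-y)-f(x)|\,\mathrm{d}y$, uniformly in $x$. From there the paper substitutes $z=y\circ w$, invokes the Dominated Convergence Theorem to push $\lim_{|w|\to\infty}$ inside the integral, and then for each fixed $z$ shows $\sup_x|f(x)-f(x-z\circ w^{-1})|\to0$ by splitting the \emph{$x$-domain} into a large compact ball (uniform continuity there) and its complement (where $|f|$ is small because $f\in C_0$). You instead run the classical approximate-identity argument: split the \emph{$y$-domain} into $\{\|y\|_2<\eta\}$, controlled by the global uniform continuity of $f$, and its complement, controlled by the tail mass $\int_{\|y\|_2\ge\eta}|h_w|$, which you reduce to $N$ one-dimensional $L^1$-tails via the union bound $\{\|y\|_2\ge\eta\}\subset\bigcup_j\{|y(j)|\ge\eta/\sqrt N\}$ and Fubini. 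Your route avoids the dominated-convergence step and the $w$-dependent ball $\mathcal{B}$ appearing in the paper, uses only boundedness and uniform continuity of $f$ (so it in fact covers all bounded uniformly continuous $f$, not just $C_0$), and makes the quantitative dependence on $\varepsilon$, $\eta$, and the $L^1$-tails of $\rho$ explicit; the paper's version is slightly shorter at the final step because it leans directly on the vanishing of $f$ at infinity. Both are complete proofs of the stated lemma.
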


\begin{proof}
By symmetry of the convolution operator in its arguments, we have
\begin{linenomath}
\begin{align*}   \sup_{x\in\mathbb{R}^N}\big\lvert(f*h_\Omega)(x)-f(x)\big\rvert
    &=\sup_{x\in\mathbb{R}^N}\Big\lvert\int_{\mathbb{R}^N}f(y)h_\Omega(x-y)\mathrm{d}y-f(x)\Big\rvert\\
    &=\sup_{x\in\mathbb{R}^N}\Big\lvert\int_{\mathbb{R}^N}f(x-y)h_\Omega(y)\mathrm{d}y-f(x)\Big\rvert.
\end{align*}
\end{linenomath}
Since a simple substitution yields \OPtext{$1=\int_{\mathbb{R}^N}h(x)\mathrm{d}x=\int_{\mathbb{R}^N}h_\Omega(x)\mathrm{d}x,$ it follows that}
\begin{linenomath}
\begin{align*}
    \sup_{x\in\mathbb{R}^N}\big\lvert(f*h_\Omega)(x)-f(x)\big\rvert
    &=\sup_{x\in\mathbb{R}^N}\Big\lvert\int_{\mathbb{R}^N}\big(f(x-y)-f(x)\big)h_\Omega(y)\mathrm{d}y\Big\rvert\\
    &\leq\int_{\mathbb{R}^N}\lvert h_\Omega(y)\rvert\sup_{x\in\mathbb{R}^N}\big\lvert f(x)-f(x-y)\big\rvert\mathrm{d}y.
\end{align*}
\end{linenomath}
Finally, expanding the function \OPtext{$h_\Omega,$} we obtain
\begin{linenomath}
\begin{align*}
\sup_{x\in\mathbb{R}^N}\big\lvert(f*h_\Omega)(x)-f(x)\big\rvert
    &\leq\int_{\mathbb{R}^N}\Big(\Omega^N\lvert h(\Omega y)\rvert\Big)\sup_{x\in\mathbb{R}^N}\big\lvert f(x)-f(x-y)\big\rvert\mathrm{d}y\\
    &=\int_{\mathbb{R}^N}\lvert h(z)\rvert\sup_{x\in\mathbb{R}^N}\big\lvert f(x)-f(x-z/\Omega)\big\rvert\mathrm{d}z,
\end{align*}
\end{linenomath}
where we have used the substitution \OPtext{$z=\Omega y.$}
Taking limits on both sides of this expression and observing that \[\int_{\mathbb{R}^N}\lvert h(z)\rvert\sup_{x\in\mathbb{R}^N}\big\lvert f(x)-f(x-z/\Omega)\big\rvert\mathrm{d}z
    \leq2\lVert h\rVert_1\sup_{x\in\mathbb{R}^N}\lvert f(x)\rvert
    <\infty,\]
using the Dominated Convergence Theorem, we obtain \[\lim_{\Omega\to\infty}\sup_{x\in\mathbb{R}^N}\big\lvert(f*h_\Omega)(x)-f(x)\big\rvert
    \leq\int_{\mathbb{R}^N}\lvert h(z)\rvert\lim_{\Omega\to\infty}\sup_{x\in\mathbb{R}^N}\big\lvert f(x)-f(x-z/\Omega)\big\rvert\mathrm{d}z.\]
So, it suffices to show that, for all \(z\in\mathbb{R}^N\), \[\lim_{\Omega\to\infty}\sup_{x\in\mathbb{R}^N}\big\lvert f(x)-f(x-z/\Omega)\big\rvert=0.\]
To this end, let \(\varepsilon>0\) and \(z\in\mathbb{R}^N\) be arbitrary.
Since \(f\in C_0(\mathbb{R}^N)\), there exists \(r>0\) sufficiently large such that \(|f(x)|<\varepsilon/2\) for all \(x\in\mathbb{R}^N\setminus\overline{B(0,r)}\), where \(\overline{B(0,r)}\subset\mathbb{R}^N\) is the closed ball of radius \(r\) centered at the origin.
Let \OPtext{$\mathcal{B}:=\overline{B(0,r+\lVert z/\Omega\rVert_2)},$} so that for each \(x\in\mathbb{R}^N\setminus\mathcal{B}\) we have both \(x\) and \OPtext{$x-z/\Omega$} in \(\mathbb{R}^N\setminus\overline{B(0,r)}\).
Thus, both \(|f(x)|<\varepsilon/2\) and \OPtext{$\lvert f(x-z/\Omega)\rvert<\varepsilon/2,$} implying that
\[\sup_{x\in\mathbb{R}^N\setminus\mathcal{B}}\big\lvert f(x)-f(x-z/\Omega)\big\rvert
    <\varepsilon.\]
Hence, we obtain
\begin{linenomath}
\begin{align*}
&\lim_{\Omega\to\infty}\sup_{x\in\mathbb{R}^N}\big\lvert f(x)-f(x-z/\Omega)\big\rvert\\
    &\quad\leq\lim_{\Omega\to\infty}\max\Big\{\sup_{x\in\mathcal{B}}\big\lvert f(x)-f(x-z/\Omega)\big\rvert,\sup_{x\in\mathbb{R}^N\setminus\mathcal{B}}\big\lvert f(x)-f(x-z/\Omega)\big\rvert\Big\}\\
    &\quad\leq\max\Big\{\varepsilon,\lim_{\Omega\to\infty}\sup_{x\in\mathcal{B}}\big\lvert f(x)-f(x-z/\Omega)\big\rvert\Big\}.
\end{align*}
\end{linenomath}
Now, as \(\mathcal{B}\) is a compact subset of \(\mathbb{R}^N\), the continuous function \(f\) is uniformly continuous on \(\mathcal{B}\), and so the remaining limit and supremum may be freely interchanged, whereby continuity of \(f\) yields \[\lim_{\Omega\to\infty}\sup_{x\in\mathcal{B}}\big\lvert f(x)-f(x-z/\Omega)\big\rvert
    =\sup_{x\in\mathcal{B}}\lim_{\Omega\to\infty}\big\lvert f(x)-f(x-z/\Omega)\big\rvert
    =0.\]
Since \(\varepsilon>0\) may be taken arbitrarily small, we have proved the result.
\end{proof}

\begin{remark}\label{rem: hoelder}
While Lemma~\ref{lem: conv equality} does the approximation we aim for, it gives no indication of how fast \[\sup_{x\in\mathbb{R}^N}\big\lvert(f*h_\Omega)(x)-f(x)\big\rvert\]
decays in terms of $\Omega$ or the dimension $N.$ Assuming $h(z)=g(z(1))\cdots g(z(N))$ for some nonnegative $g$ (which is how we will choose $h$ in Section~\ref{sec: step 2}) and $f$ to be $\beta$-H{\"o}lder continuous for some fixed $\beta\in(0,1)$ yields that
\begin{linenomath}
\begin{align*}
    \sup_{x\in\mathbb{R}^N}\big\lvert(f*h_\Omega)(x)-f(x)\big\rvert&\leq\int_{\mathbb{R}^N}\lvert h(z)\rvert\sup_{x\in\mathbb{R}^N}\big\lvert f(x)-f(x-z/\Omega)\big\rvert\mathrm{d}z\\
    &\lesssim\Omega^{-\beta}\int_{\mathbb{R}^N}\lVert z\rVert_2^\beta h(z)\mathrm{d}z\\
    &\leq\Omega^{-\beta}\bigg(\int_{\mathbb{R}^N}\Big(z(1)^2+\cdots+z(N)^2\Big) h(z)\mathrm{d}z\bigg)^{\beta/2}\\
    &\leq\Omega^{-\beta}\bigg(N\max_{j\in\{1,\ldots,N\}}\int_{\mathbb{R}^N}z(j)^2h(z)\mathrm{d}z\bigg)^{\beta/2}\\
    &=(\sqrt{N}/\Omega)^\beta\bigg(\max_{j\in\{1,\ldots,N\}}\int_{\mathbb{R}}z(j)^2g(z(j))\mathrm{d}z(j)\bigg)^{\beta/2}\\
    &=(\sqrt{N}/\Omega)^\beta\bigg(\int_{\mathbb{R}}z(1)^2g(z(1))\mathrm{d}z(1)\bigg)^{\beta/2}\\
    &\lesssim(\sqrt{N}/\Omega)^\beta
\end{align*}
\end{linenomath}
where the third inequality follows from Jensen's inequality. 
% One can, with appropriate changes, extend this result to homogeneous H{\"o}lder--Zygmund functions of any order.
\end{remark}

\subsubsection{Proof of \ref{lem: step 2}: 
The limit-integral representation}\label{sec: step 2}

Let $A\in C^\infty(\mathbb{R})$ be \emph{any} even function supported on $[-\tfrac{1}{2},\tfrac{1}{2}]$ s.t.\ $\lVert A\rVert_2=1.$ Then $\phi=A*A\in C^\infty(\mathbb{R})$ is an even function supported on $[-1,1]$ s.t.\ $\phi(0)=1.$ Lemma~\ref{lem: Step 1} implies that
\begin{linenomath}
\begin{align}\label{eq: conv}
    f(x)=\lim_{\Omega\to\infty}(f*h_\Omega)(x)
\end{align}
\end{linenomath}
uniformly in $x\in K$ for \emph{any} $h\in L^1(\mathbb{R}^N)$ satisfying $\int_{\mathbb{R}^N}h(z)\mathrm{d}z=1.$ We choose \[h(z)=\frac{1}{(2\pi)^N}\int_{\mathbb{R}^N}\exp(i\langle w,z\rangle)\prod_{j=1}^N\phi(w(j))\mathrm{d}w\]
which the reader may recognize as the (inverse) Fourier transform of $\prod_{j=1}^N\phi(w(j))$. As we announced in Remark~\ref{rem: hoelder}, $h(z)=g(z(1))\cdots g(z(N)),$ where (using the convolution theorem)
\begin{linenomath}
\begin{align*}
    g(z(j))&=\frac{1}{2\pi}\int_\mathbb{R}\exp(iw(j)z(j))\phi(w(j))\mathrm{d}w(j)\\
    &=\frac{1}{2\pi}\int_{\mathbb{R}}\exp(iw(j)z(j))(A*A)(w(j))\mathrm{d}w(j)\\
    &=2\pi\bigg(\frac{1}{2\pi}\int_{\mathbb{R}}\exp(iw(j)z(j))A(w(j))\mathrm{d}w(j)\bigg)^2\geq0
\end{align*}
\end{linenomath}
Moreover, since $g$ is the Fourier transform of an even function, $h$ is real-valued and also even. In addition, since $\phi$ is smooth, $h$ decays faster than the reciprocal of any polynomial (as follows from repeated integration by parts and the Riemann--Lebesgue lemma), so $h\in L^1(\mathbb{R}^N).$ Thus, Fourier inversion yields \[\int_{\mathbb{R}^N} h(z)\mathrm{d}z=\int_{\mathbb{R}^N}\exp(-i\langle w,z\rangle)h(z)\mathrm{d}z\Big\rvert_{w=0}=\prod_{j=1}^N\phi(0)=1,\]
which justifies our application of Lemma~\ref{lem: Step 1}. Expanding the right-hand side of \eqref{eq: conv} (using the scaling property of the Fourier transform) yields that
\begin{linenomath}
\begin{align}
    (f*h_\Omega)(x)&=\int_{\mathbb{R}^N}f(y)h_\Omega(x-y)\mathrm{d}y=\frac{1}{(2\pi)^N}\int_{K}f(y)\int_{\mathbb{R}^N}\exp(i\langle w,x-y\rangle)\prod_{j=1}^N\phi(w(j)/\Omega)\mathrm{d}w\mathrm{d}y\notag\\
    &=\frac{1}{(2\pi)^N}\int_{K}\int_{[-\Omega,\Omega]^N}f(y)\cos(\langle w,x-y\rangle)\prod_{j=1}^N\phi(w(j)/\Omega)\mathrm{d}w\mathrm{d}y\label{eq: double int}
\end{align}
\end{linenomath}
because $\phi$ is even and supported on $[-1,1].$ Since \eqref{eq: double int} is an iterated integral of a continuous function over a compact set, Fubini's theorem readily applies, yielding \[f(x)=\lim_{\Omega\to\infty}(f*h_\Omega)(x)=\lim_{\Omega\to\infty}\frac{1}{(2\pi)^N}\int_{K\times[-\Omega,\Omega]^N}f(y)\cos(\langle w,x-y\rangle)\prod_{j=1}^N\phi(w(j)/\Omega)\mathrm{d}y\mathrm{d}w.\]
Since $\lvert\langle w,x-y\rangle\rvert\leq\lVert x-y\rVert_1\lVert w\rVert_\infty\leq 2N\mathrm{rad}(K)\Omega\leq(L+\tfrac{1}{2})\pi,$ it follows that
\begin{linenomath}
\begin{align}\label{eqn: cos-int rep}
    f(x)=\lim_{\Omega\to\infty}\frac{1}{(2\pi)^N}\int_{K\times[-\Omega,\Omega]^N}f(y)\cos_\Omega(\langle w,x-y\rangle)\prod_{j=1}^N\phi(w(j)/\Omega)\mathrm{d}y\mathrm{d}w
\end{align}
\end{linenomath}
where $\cos_\Omega$ is defined in \eqref{eqn: trunc cos}.

With the representation~\eqref{eqn: cos-int rep} in hand, we now seek to reintroduce the general activation function~\(\rho\).
To this end, since \(\cos_{\Omega}\in C_c(\mathbb{R})\subset C_0(\mathbb{R})\) we may apply the convolution identity~\eqref{eqn: lim-int rep 2} with \(f\) replaced by \(\cos_{\Omega}\) to obtain \(\cos_{\Omega}(z)=\lim_{\alpha\rightarrow\infty}(\cos_{\Omega}*h_{\alpha})(z)\) uniformly for all \(z\in\mathbb{R}\), where
% \(h_{\alpha}\) is the one-dimensional version of \(h_w\) as defined in~\eqref{eqn: window transform}.
\OPtext{$h_\alpha(z)=\alpha\rho(\alpha z).$}
Using this representation of \(\cos_{\Omega}\) in~\eqref{eqn: cos-int rep}, it follows that \[f(x)
    =\lim_{\Omega\rightarrow\infty}\frac{1}{(2\pi)^N}
    \int_{K\times[-\Omega,\Omega]^N}
    f(y)\Big(\lim_{\alpha\rightarrow\infty}\big(\cos_{\Omega}*h_{\alpha}\big)\big(\langle w,x-y\rangle\big)\Big)\prod_{j=1}^N\phi(w(j)/\Omega)\mathrm{d}y\mathrm{d}w\]
holds uniformly for all \(x\in K\).
Since \(f\) is continuous and the convolution \(\cos_{\Omega}*h_{\alpha}\) is uniformly continuous and \OPtext{uniformly bounded in $\alpha$ by $\lVert\rho\rVert_1$ (see below),} the fact that the domain \(K\times[-\Omega,\Omega]^N\) is compact then allows us to bring the limit as \(\alpha\) tends to infinity outside the integral in this expression via the Dominated Convergence Theorem, which gives us
\begin{linenomath}
\begin{align}\label{eqn: intermediate lim-int rep}
    f(x)
    =\lim_{\Omega\rightarrow\infty}\lim_{\alpha\rightarrow\infty}\frac{1}{(2\pi)^N}
    \int_{K\times[-\Omega,\Omega]^N}
    f(y)\big(\cos_{\Omega}*h_{\alpha}\big)\big(\langle w,x-y\rangle\big)\prod_{j=1}^N\phi(w(j)/\Omega)\mathrm{d}y\mathrm{d}w
\end{align}
\end{linenomath}
uniformly for every \(x\in K\).
\OPtext{%
The uniform boundedness of the convolution follows from the fact that
\begin{linenomath}
\begin{align}\label{eq: cos conv}
  (\cos_\Omega*h_\alpha)(z)=\int_{\mathbb{R}}\cos_\Omega(z-u)h_\alpha(u)\mathrm{d}u=\int_{\mathbb{R}}\cos_\Omega(z-v/\alpha)\rho(v)\mathrm{d}v,  
\end{align}
\end{linenomath}
where $v=\alpha u.$
}%
\begin{remark}
It should be noted that we are unable to swap the order of the limits in~\eqref{eqn: intermediate lim-int rep}
% ; indeed, our use of~\eqref{eqn: part 1 uniform convergence} is no longer valid in this case, as
\OPtext{since} \(\cos_{\Omega}\) is not in \(C_0(\mathbb{R})\) when \(\Omega\) is allowed to be infinite.
\end{remark}

\begin{remark}\label{rem: alpha}
Complementing Remark \ref{rem: hoelder}, we will now elucidate how fast
\begin{linenomath}
\begin{align*}
    \lvert\cos_{\Omega}(z)-(\cos_{\Omega}*h_{\alpha})(z)\rvert
\end{align*}
\end{linenomath}
decays in terms of $\alpha.$ Using the fact that $\int_\mathbb{R}\rho(z)\mathrm{d}z=1,$ \eqref{eq: cos conv} and the triangle inequality allows us to bound the absolute difference above by
\begin{linenomath}
\begin{align*}
    \int_{\mathbb{R}}\lvert\cos_\Omega(z)-\cos_\Omega(z-v/\alpha)\rvert\cdot\lvert\rho(v)\rvert\mathrm{d}v.
\end{align*}
\end{linenomath}
Since $\cos_\Omega$ is 1-Lipschitz, it follows that the above integral is bounded by $\int_\mathbb{R}\lvert v\rho(v)\rvert\,dv/\alpha.$
\end{remark}

To complete this step of the proof, observe that the definition of \(\cos_{\Omega}\) allows us to write
\begin{linenomath}
\begin{align}\label{eqn: unif cos rep}
    (\cos_{\Omega}*h_{\alpha})(z)
    =\alpha\int_{\mathbb{R}}\cos_{\Omega}(u)\rho\big(\alpha(z-u)\big)\mathrm{d}u
    =\alpha\int_{-\frac{\pi}{2}(2L+1)}^{\frac{\pi}{2}(2L+1)}\cos_{\Omega}(u)\rho\big(\alpha(z-u)\big)\mathrm{d}u
\end{align}
\end{linenomath}
% uniformly for all \(z\in\mathbb{R}\).
By substituting~\eqref{eqn: unif cos rep} into~\eqref{eqn: intermediate lim-int rep}, we then obtain \[f(x)
    =\lim_{\Omega\rightarrow\infty}\lim_{\alpha\rightarrow\infty}
    \frac{\alpha}{(2\pi)^N}
    \int_{K(\Omega)}
    f(y)\cos_{\Omega}(u)\rho\Big(\alpha\big(\langle w,x-y\rangle-u\big)\Big)\prod_{j=1}^N\phi(w(j)/\Omega)\mathrm{d}y\mathrm{d}w\mathrm{d}u\]
uniformly for all \(x\in K\), where \(K(\Omega):=K\times[-\Omega,\Omega]^N\times[-\frac{\pi}{2}(2L+1),\frac{\pi}{2}(2L+1)]\).
In this way, recalling that 
%\begin{align}\label{eqn: F-b}
 %   \begin{split}
  \OPtext{$  F_{\alpha,\Omega}(y,w,u)
    :=\frac{\alpha}{(2\pi)^N}f(y)\cos_{\Omega}(u)\prod_{j=1}^N\phi(w(j)/\Omega),$} and 
    $b_{\alpha}(y,w,u)
    :=-\alpha(\langle w,y\rangle+u)$
    %\end{split}
%\end{align}
for \(y,w\in\mathbb{R}^N\) and \(u\in\mathbb{R}\), we conclude the proof.
\if{
\begin{lemma}\label{lem: step 2}
Let \(f\in C_c(\mathbb{R}^N)\) and \(\rho\in L^1(\mathbb{R})\) with \(K:=\mathrm{supp}(f)\) and \(\int_{\mathbb{R}}\rho(z)\mathrm{d}z=1\).
Define \(F_{\alpha,\Omega}\) and \(b_{\alpha}\) as in~\eqref{eqn: F-b} for all \(\Omega\in\mathbb{R}^N\) and \(\alpha\in\mathbb{R}\).
Then we have
\begin{align}\label{eqn: final lim-int rep}
    f(x)
    =\lim_{\Omega\rightarrow\infty}\lim_{\alpha\rightarrow\infty}
    \int_{K(\Omega)}
    F_{\alpha,\Omega}(y,w,u)\rho\big(\alpha\langle w,x\rangle+b_{\alpha}(y,w,u)\big)\mathrm{d}y\mathrm{d}w\mathrm{d}u
\end{align}
uniformly for every \(x\in K\), where \(K(\Omega):=K\times[-\Omega,\Omega]^N\times[-\frac{\pi}{2}(2L+1),\frac{\pi}{2}(2L+1)]\) and \(L:=\lceil\frac{2N}{\pi}\mathrm{rad}(K)\Omega-\frac{1}{2}\rceil\).
\end{lemma}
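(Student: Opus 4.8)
The plan is to begin from the convolution identity of Lemma~\ref{lem: Step 1} and reshape the tensor-product bump \(h_w\) of \eqref{eqn: window transform} into a genuine \emph{ridge} function of \(x\), i.e. an activation applied to a single affine form \(\alpha\langle w,x\rangle+b\). The obstacle is immediate: the convolution \((f*h_w)(x)=\int f(y)\prod_{j=1}^N w(j)\rho\big(w(j)(x(j)-y(j))\big)\,dy\) is a product over coordinates, not a function of \(\langle w,x\rangle\), so it carries no RVFL structure. My strategy is the trigonometric detour suggested by the definitions \eqref{eqn: F-b} and \eqref{eqn: trunc cos}: temporarily take the truncated cosine \(\cos_\Omega\) as the activation, because the product-to-sum identity turns a product of cosines into a sum of cosines of linear forms, and only afterward reintroduce the general \(\rho\).

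First I would run the argument of Lemma~\ref{lem: Step 1} with \(\cos_\Omega\) in the role of \(\rho\), producing a representation of \(f(x)\) as a limit of \(\frac{1}{\Omega^N}\int f(y)\Delta(w,x-y)\prod_j w(j)\,dy\,dw\), where \(\Delta(w,z):=\prod_{j=1}^N\cos_\Omega(w(j)z(j))\). The crucial bookkeeping is the choice \(L=\lceil\frac{2N}{\pi}\mathrm{rad}(K)\Omega-\frac12\rceil\): for \(w\in[0,\Omega]^N\) and \(x,y\in K\) every signed partial sum \(\sum_j\pm w(j)(x(j)-y(j))\) stays inside \([-\frac{\pi}{2}(2L+1),\frac{\pi}{2}(2L+1)]\), so the truncation in \(\cos_\Omega\) is inactive and the ordinary identity \(2\cos(a)\cos(b)=\cos(a-b)+\cos(a+b)\) applies iteratively, collapsing \(\Delta(w,x-y)\) into \(2^{-N}\sum_{\pm}\cos_\Omega(\pm w(1)(x(1)-y(1))\pm\cdots)\). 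Using evenness of \(\cos_\Omega\) to fold the sign choices into the domain of \(w\), the weight \(\prod_j w(j)\) over \([0,\Omega]^N\) becomes \(|\prod_j w(j)|\) over \([-\Omega,\Omega]^N\) and the sum collapses to the single ridge term \(\cos_\Omega(\langle w,x-y\rangle)\), giving \(f(x)=\lim_\Omega \frac{1}{(2\Omega)^N}\int_{K\times[-\Omega,\Omega]^N} f(y)\cos_\Omega(\langle w,x-y\rangle)|\prod_j w(j)|\,dy\,dw\) uniformly on \(K\).

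Next I would reintroduce \(\rho\). Since \(\cos_\Omega\in C_c(\mathbb{R})\subset C_0(\mathbb{R})\), the uniform convolution identity \eqref{eqn: part 1 uniform convergence} (the one-dimensional case of Lemma~\ref{lem: conv equality}) yields \(\cos_\Omega(z)=\lim_{\alpha\to\infty}(\cos_\Omega * h_\alpha)(z)\) uniformly in \(z\), with \(h_\alpha\) the one-dimensional window built from \(\rho\). Substituting this into the cosine representation and using that \(K\times[-\Omega,\Omega]^N\) is compact while \(\cos_\Omega * h_\alpha\) is uniformly bounded, the Dominated Convergence Theorem lets me pull the \(\alpha\)-limit outside the integral. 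Expanding \((\cos_\Omega * h_\alpha)(z)=\alpha\int_{-\frac{\pi}{2}(2L+1)}^{\frac{\pi}{2}(2L+1)}\cos_\Omega(u)\rho(\alpha(z-u))\,du\) at \(z=\langle w,x-y\rangle\) and regrouping reproduces exactly the weight \(F_{\alpha,\Omega}\) and bias \(b_\alpha\) of \eqref{eqn: F-b}, because \(\alpha(\langle w,x-y\rangle-u)=\alpha\langle w,x\rangle+b_\alpha(y,w,u)\); this is precisely the asserted representation \eqref{eqn: final lim-int rep}.

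I expect the main obstacle to be the disciplined management of the two nested limits together with the truncation constants. The limits genuinely cannot be interchanged: once \(\Omega=\infty\) the truncated cosine leaves \(C_0(\mathbb{R})\) and \eqref{eqn: part 1 uniform convergence} is no longer available, so the \(\alpha\)-limit must be resolved inside the fixed-\(\Omega\) integral and the \(\Omega\)-limit taken only afterward. The more delicate point is constant-tracking: \(\cos_\Omega\) is \emph{not} normalized to integrate to one, so applying the approximate-identity machinery of Lemma~\ref{lem: Step 1} to it requires care, and one must verify that the \(2^{-N}\) from product-to-sum, the sign-folding that doubles each coordinate range, and the prefactor passing from \(\frac{1}{\Omega^N}\) to \(\frac{1}{(2\Omega)^N}\) all reconcile exactly — and that the truncation stays inactive on the \emph{entire} domain so the product-to-sum step is exact rather than approximate. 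Throughout, uniform control over \(x\in K\) must be preserved at each stage so that the final convergence is uniform as claimed.
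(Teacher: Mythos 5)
Your proposal follows essentially the same route as the paper's proof in Section~\ref{sec: step 2}: replace \(\rho\) by \(\cos_{\Omega}\) in the convolution identity of Lemma~\ref{lem: Step 1}, use the choice of \(L\) to keep the truncation inactive so the product-to-sum identity applies exactly, fold the sign choices into the domain \([-\Omega,\Omega]^N\) via evenness to obtain the ridge representation \eqref{eqn: cos-int rep}, and then reintroduce \(\rho\) through \(\cos_{\Omega}=\lim_{\alpha\to\infty}(\cos_{\Omega}*h_{\alpha})\) with the Dominated Convergence Theorem pulling the \(\alpha\)-limit outside. The delicate points you flag (the non-interchangeability of the two limits and the normalization/constant bookkeeping) are exactly the ones the paper handles, so the plan is correct and matches the published argument.
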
}\fi

\subsubsection{Proof of Lemma \ref{lem: part 3}: Monte-Carlo integral approximation}\label{sec: step 3}

The next step in the proof of Theorem~\ref{thm: IP95} is to approximate the integral in~\eqref{eqn: final lim-int rep} using the Monte-Carlo method.
To this end, let \(\{y_k\}_{k=1}^n\), \(\{w_k\}_{k=1}^n\), and \(\{u_k\}_{k=1}^n\) be independent samples drawn uniformly from \(K\), \([-\Omega,\Omega]^N\), and \([-\frac{\pi}{2}(2L+1),\frac{\pi}{2}(2L+1)]\), respectively, and consider the sequence of random variables \(\{I_n(x)\}_{n=1}^{\infty}\) defined by
\begin{linenomath}
\begin{align}\label{eqn: MC sum}
    I_n(x)
    :=\frac{\mathrm{vol}(K(\Omega))}{n}\sum_{k=1}^n
    F_{\alpha,\Omega}(y_k,w_k,u_k)\rho\big(\alpha\langle w_k,x\rangle+b_{\alpha}(y_k,w_k,u_k)\big)
\end{align}
\end{linenomath}
for each \(x\in K\), where we note that \(\mathrm{vol}(K(\Omega))=(2\Omega)^N\pi(2L+1)\mathrm{vol}(K)\).
If we also define
\begin{linenomath}
\begin{align}\label{eqn: MC int appendix}
    I(x;p)
    :=\int_{K(\Omega)}
    \Big(F_{\alpha,\Omega}(y,w,u)\rho\big(\alpha\langle w,x\rangle+b_{\alpha}(y,w,u)\big)\Big)^p\mathrm{d}y\mathrm{d}w\mathrm{d}u
\end{align}
\end{linenomath}
for \(x\in K\) and \(p\in\mathbb{N}\), then we want to show that
\begin{linenomath}
\begin{align}\label{eqn: MC approx}
    \mathbb{E}\int_K|I(x;1)-I_n(x)|^2\mathrm{d}x
    =O(1/n)
\end{align}
\end{linenomath}
% with convergence rate \(O(1/n)\),
\OPtext{as $n\to\infty,$} where the expectation is taken with respect to the joint distribution of the random samples \(\{y_k\}_{k=1}^n\), \(\{w_k\}_{k=1}^n\), and \(\{u_k\}_{k=1}^n\).
For this, it suffices to find a constant \(C(f,\rho,\alpha,\Omega,N)<\infty\) independent of \(n\) satisfying \[\int_K\mathbb{E}|I(x;1)-I_n(x)|^2\mathrm{d}x
    \leq\frac{C(f,\rho,\alpha,\Omega,N)}{n}.\]
Indeed, an application of Fubini's theorem would then yield \[\mathbb{E}\int_K
    |I(x;1)-I_n(x)|^2\mathrm{d}x
    \leq\frac{C(f,\rho,\alpha,\Omega,N)}{n},\]
which implies~\eqref{eqn: MC approx}.
To determine such a constant, we first observe by  Theorem~\ref{thm: MC MSE} that \[\mathbb{E}|I(x;1)-I_n(x)|^2
    =\frac{\mathrm{vol}^2(K(\Omega))\sigma(x)^2}{n},\]
where we define the variance term \[\sigma(x)^2
    :=\frac{I(x;2)}{\mathrm{vol}(K(\Omega))}-\frac{I(x;1)^2}{\mathrm{vol}^2(K(\Omega))}\]
for \(x\in K\).
% Noting that
\OPtext{Since $\lVert\phi\rVert_\infty=1$ (see Lemma~\ref{lem: phi} below), it follows that}\[
    \lvert F_{\alpha,\Omega}(y,w,u)\rvert
    =\frac{\alpha}{(2\pi)^N}\lvert f(y)\rvert\cdot\lvert\cos_{\Omega}(u)\rvert\prod_{j=1}^N\lvert\phi(w(j)/\Omega)\rvert
    \leq\frac{\alpha M}{(2\pi)^N}\]
for all \(y,w\in\mathbb{R}^N\) and \(u\in\mathbb{R}\), where \(M:=\sup_{x\in K}|f(x)|<\infty\), we obtain the following simple bound on the variance term
\begin{linenomath}
\begin{align}\label{eqn: sigma bound}
    \sigma(x)^2
    \leq\frac{I(x;2)}{\mathrm{vol}(K(\Omega))}
    \leq\frac{\alpha^2M^2}{(2\pi)^{2N}\mathrm{vol}(K(\Omega))}\int_{K(\Omega)}\Big|\rho\big(\alpha\langle w,x\rangle+b_{\alpha}(y,w,u)\big)\Big|^2\mathrm{d}y\mathrm{d}w\mathrm{d}u.
\end{align}
\end{linenomath}
Since we assume \OPtext{$\rho\in L^\infty(\mathbb{R}),$} we then have
\begin{linenomath}
\begin{align*}
    \int_K\mathbb{E}|I(x;1)-I_n(x)|^2\mathrm{d}x
    &=\frac{\mathrm{vol}^2(K(\Omega))}{n}\int_K\sigma(x)^2\mathrm{d}x\\
    &\leq\frac{\alpha^2M^2\mathrm{vol}(K(\Omega))}{(2\pi)^{2N}n}\int_{K\times K(\Omega)}\Big|\rho\big(\alpha\langle w,x\rangle+b_{\alpha}(y,w,u)\big)\Big|^2\mathrm{d}x\mathrm{d}y\mathrm{d}w\mathrm{d}u\\
    &=\frac{\alpha^2M^2\mathrm{vol}^2(K(\Omega))\mathrm{vol}(K)\lVert\rho\rVert_\infty^2}{(2\pi)^{2N}n}.
\end{align*}
\end{linenomath}
Substituting the value of \(\mathrm{vol}(K(\Omega))\), we obtain \[C(f,\rho,\alpha,\Omega,N)
    :=\alpha^2M^2(\Omega/\pi)^{2N}\pi^2(2L+1)^2\mathrm{vol}^3(K)
    \Vert\rho\Vert_\infty^2\]
is a suitable choice for the desired constant.

Now that we have established~\eqref{eqn: MC approx}, we may rewrite the random variables \(I_n(x)\) in a more convenient form.
To this end, we change the domain of the random samples \(\{w_k\}_{k=1}^n\) to \([-\alpha\Omega,\alpha\Omega]^N\) and define the new random variables \(\{b_k\}_{k=1}^n\subset\mathbb{R}\) by \(b_k:=-(\langle w_k,y_k\rangle+\alpha u_k)\) for each \(k=1,\ldots,n\).
In this way, if we denote \[v_k
    :=\frac{\mathrm{vol}(K(\Omega))}{n}F_{\alpha,\Omega}\Big(y_k,\frac{w_k}{\alpha},u_k\Big)\]
for each \(k=1,\ldots,n\), the random variables \(\{f_n\}_{n=1}^{\infty}\) defined by 

\[f_n(x)
    :=\sum_{k=1}^nv_k\rho\big(\langle w_k,x\rangle+b_k\big)\]
satisfy \(f_n(x)=I_n(x)\) for every \(x\in K\).
Combining this with~\eqref{eqn: MC approx}, we have proved Lemma~\ref{lem: part 3}.

\begin{lemma}\label{lem: phi}
$\lVert\phi\rVert_\infty=1.$
\end{lemma}
\begin{proof}
It suffices to prove that $\lvert\phi(z)\rvert\leq1$ for all $z\in\mathbb{R}$ because $\phi(0)=1.$ By Cauchy--Schwarz,
\begin{linenomath}
\begin{align*}
    \lvert\phi(z)\rvert&=\bigg\lvert\int_{\mathbb{R}}A(u)A(z-u)\mathrm{d}u\bigg\rvert\leq\sqrt{\int_{\mathbb{R}}\displaystyle A(u)A(u)\mathrm{d}u\int_{\mathbb{R}}\displaystyle A(z-u)A(z-u)\mathrm{d}u}\\
    &=\sqrt{\int_{\mathbb{R}}\displaystyle A(u)A(0-u)\mathrm{d}u\int_{\mathbb{R}}\displaystyle A(v)A(-v)\mathrm{d}v}=\sqrt{\phi(0)\phi(0)}=1
\end{align*}
\end{linenomath}
because $A$ is even.
\end{proof}

\subsubsection{Proof of Theorem~\ref{thm: IP95_short} when \OPtext{$\rho'\in L^1(\mathbb{R})\cap L^\infty(\mathbb{R})$}}
\label{sec: rho_prime_proof}
Let \(f\in C_c(\mathbb{R}^N)\) with \(K:=\mathrm{supp}(f)\) and suppose \(\varepsilon>0\) is fixed.
Take the activation function \(\rho\colon\mathbb{R}\rightarrow\mathbb{R}\) to be differentiable with \OPtext{$\rho^{\prime}\in L^1(\mathbb{R})\cap L^\infty(\mathbb{R}).$}
We wish to show that there exists a sequence of RVFL networks \(\{f_n\}_{n=1}^{\infty}\) defined on \(K\) which satisfy the asymptotic error bound \[\mathbb{E}\int_K|f(x)-f_n(x)|^2\mathrm{d}x\leq\varepsilon+O(1/n)\]
\OPtext{as $n\to\infty.$} The proof of this result is a minor modification of \OPtext{second step} in the proof of Theorem~\ref{thm: IP95}.

% To begin, note that \(\rho^{\prime}\) satisfies the assumptions on \(\rho\) in Theorem~\ref{thm: IP95}.
% Hence, we may use Lemma~\ref{lem: Step 1} with 
% \(h_w\) defined by
% \begin{align*}
%     h_w(y):=\prod_{j=1}^Nw(j)\rho^{\prime}\big(w(j)y(j)\big)
% \end{align*}
% for all \(y,w\in\mathbb{R}^N\) to obtain the representation~\eqref{eqn: lim-int rep 2} for all \(x\in\mathbb{R}^N\), which leads to the representation~\eqref{eqn: intermediate lim-int rep}.
\OPtext{If we redefine $h_\alpha(z)$ as $\alpha\rho'(\alpha z),$ then~\eqref{eqn: intermediate lim-int rep} plainly still holds and~\eqref{eqn: unif cos rep} reads}
% Now, since~\eqref{eqn: unif cos rep} gives us
\[(\cos_{\Omega}*h_{\alpha})(z)
    =\alpha\int_{\mathbb{R}}\cos_{\Omega}(u)\rho^{\prime}\big(\alpha(z-u)\big)\mathrm{d}u.
\]
%uniformly for all \(z\in\mathbb{R}\),
\OPtext{R}ecalling the definition of \(\cos_{\Omega}\) in~\eqref{eqn: trunc cos} and integrating by parts, we obtain
\begin{linenomath}
\begin{align*}
    (\cos_{\Omega}*h_{\alpha})(z)
    &=\alpha\int_{\mathbb{R}}\cos_{\Omega}(u)\rho^{\prime}\big(\alpha(z-u)\big)\mathrm{d}u\\
    &=-\int_{-\frac{\pi}{2}(2L+1)}^{\frac{\pi}{2}(2L+1)}\cos_{\Omega}(u)d\rho(\alpha(z-u))\\
    &=-\cos_{\Omega}(u)\rho(\alpha(z-u))\Big|_{-\frac{\pi}{2}(2L+1)}^{\frac{\pi}{2}(2L+1)}+\int_{-\frac{\pi}{2}(2L+1)}^{\frac{\pi}{2}(2L+1)}\rho(\alpha(z-u))d\cos_{\Omega}(u)\\
    &=-\int_{\mathbb{R}}\sin_{\Omega}(u)\rho\big(\alpha(z-u)\big)\mathrm{d}u
\end{align*}
\end{linenomath}
for all \(z\in\mathbb{R}\), where \(L:=\lceil\frac{2N}{\pi}\mathrm{rad}(K)\Omega-\frac{1}{2}\rceil\) and \(\sin_{\Omega}\colon\mathbb{R}\rightarrow[-1,1]\) is defined analogously to~\eqref{eqn: trunc cos}.
Substituting this representation of \((\cos_{\Omega}*h_{\alpha})(z)\) into~\eqref{eqn: intermediate lim-int rep} then yields 

\[f(x) =\lim_{\Omega\rightarrow\infty}\lim_{\alpha\rightarrow\infty}
    \frac{-\alpha}{(2\pi)^N}
    \int_{K(\Omega)}
    f(y)\sin_{\Omega}(u)\rho\big(\alpha(\langle w,x-y\rangle-u)\big)\prod_{j=1}^N\phi(w(j)/\Omega)\mathrm{d}y\mathrm{d}w\mathrm{d}u\]
uniformly for every \(x\in K\).
Thus, if we replace the definition of \(F_{\alpha,\Omega}\) in~\eqref{eqn: F-b} by \[F_{\alpha,\Omega}(y,w,u)
    :=\frac{-\alpha}{(2\pi)^N}f(y)\sin_{\Omega}(u)\prod_{j=1}^N\phi(w(j)/\Omega)\]
for \(y,w\in\mathbb{R}^N\) and \(u\in\mathbb{R}\), we again obtain the uniform representation~\eqref{eqn: final lim-int rep} for all \(x\in K\).
The remainder of the proof proceeds from this point exactly as in the proof of Theorem~\ref{thm: IP95}.

\subsubsection{Proof of Theorem \ref{thm: probabilistic manifold Igelnik-Pao}}\label{sec: Manifold proof}
We wish to show that there exist sequences of RVFL networks \(\{\tilde{f}_{n_j}\}_{n_j=1}^{\infty}\) defined on \(\phi_j(U_j)\) for each \(j\in J\) which together satisfy the error bound \[\int_{\mathcal{M}}\bigg|f(x)\quad-\sum_{\{j\in J\colon x\in U_j\}}(\tilde{f}_{n_j}\circ\phi_j)(x)\bigg|^2\mathrm{d}x<\varepsilon
\]
with probability at least \(1-\eta\) for \(\{n_j\}_{j\in J}\) sufficiently large.
The proof is obtained by showing that
\begin{linenomath}
\begin{align}\label{eqn: manifold ptwise bound}
    \bigg|f(x)\quad-\sum_{\{j\in J\colon x\in U_j\}}(\tilde{f}_{n_j}\circ\phi_j)(x)\bigg|
    <\sqrt{\frac{\varepsilon}{\mathrm{vol}(\mathcal{M})}}
\end{align}
\end{linenomath}
holds uniformly for \(x\in\mathcal{M}\) with high probability.

We begin as in the proof of Theorem~\ref{thm: manifold Igelnik-Pao} by applying the representation~\eqref{eqn: manifold funct rep} for \(f\) on each chart \((U_j,\phi_j)\), which gives us
\begin{linenomath}
\begin{align}\label{eqn: prob man bound 1}
    \bigg|f(x)\quad-\sum_{\{j\in J\colon x\in U_j\}}(\tilde{f}_{n_j}\circ\phi_j)(x)\bigg|
    &\leq\sum_{\{j\in J\colon x\in U_j\}}\Big|(\hat{f}_j\circ\phi_j)(x)-(\tilde{f}_{n_j}\circ\phi_j)(x)\Big|
\end{align}
\end{linenomath}
for all \(x\in\mathcal{M}\).
Now, since we have already seen that \(\hat{f}_j\in C_c(\mathbb{R}^d)\) for each \(j\in J\), Theorem~\ref{thm: probabilistic Igelnik-Pao} implies that for any \(\varepsilon_j>0\), there exist constants \(\alpha_j,\Omega_j>0\) and hidden-to-output layer weights \(\{v_k^{(j)}\}_{k=1}^{n_j}\subset\mathbb{R}\) for each \(j\in J\) such that for any
\begin{linenomath}
\begin{align}\label{eqn: manifold net bound}
    \delta_j
    <\frac{\sqrt{\varepsilon_j}}{8\sqrt{2d}\kappa\alpha_j^2M_j\Omega_j(\Omega_j/\pi)^d\mathrm{vol}^{3/2}(\phi_j(U_j))(\pi+2d\mathrm{rad}(\phi_j(U_j))\Omega)}
\end{align}
\end{linenomath}
we have \[\Big|\hat{f}_j(z)-\tilde{f}_{n_j}(z)\Big|
    <\sqrt{\frac{\varepsilon_j}{2\mathrm{vol}(\phi_j(U_j))}}\]
uniformly for all \(z\in\phi_j(U_j)\) with probability at least \(1-\eta_j\), provided the number of nodes \(n_j\) satisfies
\begin{linenomath}
\begin{align}\label{eqn: manifold node bound}
    n_j
    &\geq\frac{c\Sigma^{(j)}\alpha_j(\Omega_j/\pi)^d(\pi+2d\mathrm{rad}(\phi_j(U_j))\Omega_j)\log(3\eta_j^{-1}\mathcal{N}(\delta_j,\phi_j(U_j)))}{\sqrt{\varepsilon_j}\log\big(1+\frac{\sqrt{\varepsilon_j}}{\Sigma^{(j)} \alpha_j(\Omega_j/\pi)^d(\pi+2d\mathrm{rad}(\phi_j(U_j))\Omega_j)}\big)},
\end{align}
\end{linenomath}
where \(c>0\) is a numerical constant and \OPtext{$\Sigma^{(j)}:=2C^{(j)}\sqrt{2\mathrm{vol}(\phi_j(U_j))}.$}
% as in~\eqref{eqn: manifold complexity constants}.
Indeed, it suffices to choose %the coefficients
\[v_k^{(j)}:=\frac{\mathrm{vol}(K(\Omega_j))}{n_j}F_{\alpha_j,\Omega_j}\Big(y_k^{(j)},\frac{w_k^{(j)}}{\alpha_j},u_k^{(j)}\Big)
\]
for each \(k=1,\ldots,n_j\), where
\[K(\Omega_j):=\phi_j(U_j)\times[-\alpha_j\Omega_j,\alpha_j\Omega_j]^d\times[-\tfrac{\pi}{2}(2L_j+1),\tfrac{\pi}{2}(2L_j+1)]
\]
for each \(j\in J\).
Combined with~\eqref{eqn: prob man bound 1}, choosing \(\delta_j\) and \(n_j\) satifying~\eqref{eqn: manifold net bound} and~\eqref{eqn: manifold node bound}, respectively, then yields
\[\bigg|f(x)\quad-\sum_{\{j\in J\colon x\in U_j\}}(\tilde{f}_{n_j}\circ\phi_j)(x)\bigg|
    <\sum_{\{j\in J\colon x\in U_j\}}\sqrt{\frac{\varepsilon_j}{2\mathrm{vol}(\phi_j(U_j))}}
    \leq\sum_{j\in J}\sqrt{\frac{\varepsilon_j}{2\mathrm{vol}(\phi_j(U_j))}}
\]
for all \(x\in\mathcal{M}\) with probability at least \(1-\sum_{\{j\in J\colon x\in U_j\}}\eta_j\geq1-\sum_{j\in J}\eta_j\).
Since we require that~\eqref{eqn: manifold ptwise bound} holds for all \(x\in\mathcal{M}\) with probability at least \(1-\eta\), the proof is then completed by choosing \(\{\varepsilon_j\}_{j\in J}\) and \(\{\eta_j\}_{j\in J}\) such that
\[\varepsilon
    =\frac{\mathrm{vol}(\mathcal{M})}{2}\Big(\sum_{j\in J}\sqrt{\frac{\varepsilon_j}{\mathrm{vol}(\phi_j(U_j))}}\Big)^2
    \quad\text{ and }\quad
    \eta
    =\sum_{j\in J}\eta_j.
\]
In particular, it suffices to choose
\[\varepsilon_j
    =\frac{2\mathrm{vol}(\phi_j(U_j))\,\varepsilon}{|J|^2\mathrm{vol}(\mathcal{M})}
\]
and \(\eta_j=\eta/|J|\) for each \(j\in J\), so that~\eqref{eqn: manifold net bound} and~\eqref{eqn: manifold node bound} become
\begin{linenomath}
\begin{align*}
    \delta_j
    &<\frac{\sqrt{\varepsilon}}{8\lvert J\rvert\sqrt{d\mathrm{vol}(\mathcal{M})}\kappa\alpha_j^2M_j\Omega_j(\Omega_j/\pi)^d\mathrm{vol}(\phi_j(U_j))(\pi+2d\mathrm{rad}(\phi_j(U_j))\Omega)},\\
    n_j
    &\geq\frac{2c\lvert J\rvert\sqrt{\mathrm{vol}(\mathcal{M})}C^{(j)}\alpha_j(\Omega_j/\pi)^d(\pi+2d\mathrm{rad}(\phi_j(U_j))\Omega_j)\log(3\lvert J\rvert\eta^{-1}\mathcal{N}(\delta_j,\phi_j(U_j)))}{\sqrt{\varepsilon}\log\big(1+\frac{\sqrt{\varepsilon}}{2\lvert J\rvert\sqrt{\mathrm{vol}(\mathcal{M})}C^{(j)} \alpha_j(\Omega_j/\pi)^d(\pi+2d\mathrm{rad}(\phi_j(U_j))\Omega_j)}\big)},
\end{align*}
\end{linenomath}
as desired.

\section{Discussion}\label{sec: discussion}

The central topic of this paper is the study of the approximation properties of a randomized variation of shallow neural networks known as RVFL. In contrast with the classical single-layer neural networks, training of an RVFL involves only learning the output weights, while the input weights and biases of all the nodes are selected at random from an appropriate distribution and stay fixed throughout the training. The main motivation for studying the properties of such networks is as follows:

\begin{enumerate}
    \item Random weights are often  utilized as an initialization for a NN training procedure. Thus, establishing the properties of the RVFL networks is an important first step toward understanding  how random weights are transformed during training.

    \item Due to their much more computationally efficient training process, the RVFL networks proved to be a valuable alternative to the classical SLFNs. They were successfully used in several modern applications, especially those that require frequent re-training of a neural network~\citep{chen1999rapid, tang2018noniterative,dash2018indian}.
\end{enumerate}

Despite their practical and theoretical importance, results providing rigorous mathematical analysis of the properties of RVFLs are rare. The work of Igelnik and Pao \cite{igelnik1995stochastic} showed that RVFL networks are  universal approximators for the class of continuous, compactly supported functions and established the asymptotic convergence rate of the expected approximation error as a function of the number of nodes in the hidden layer. While this result served as a theoretical justification for using RVFL networks in practice, a close examination led us to the conclusion that the proofs in \cite{igelnik1995stochastic} contained several technical errors. 

In this paper, we offer a revision and a modification of the proof methods from \cite{igelnik1995stochastic} that allow us to prove a corrected, slightly weaker version of the result announced by Igelnik and Pao. We further build upon their work and show a non-asymptotic probabilistic (instead of on average) approximation result, which gives an explicit bound on the number of hidden layer nodes that are required to achieve the desired approximation accuracy with the desired level of certainty (that is, with high enough probability). In addition to that, we extend the obtained result to the case when the function is supported on a compact, low-dimensional submanifold of the ambient space.

While our work closes some of the gaps in the study of the approximation properties of RVFL, we believe that it just starts the discussion and opens many  directions for further research. We briefly outline some of them here.

In our results, the dependence of the required number $n$ of the nodes in the hidden layer on the dimension $N$ of the domain is superexponential, which is likely an artifact of the proof methods we use. We believe this dependence can be improved to be exponential by using a different, more refined approach to the construction of the limit-integral representation of a function. A related interesting direction for future research is to study how the bound on $n$ changes for more restricted classes of (e.g., smooth) functions. 

Another important direction that we did not discuss in this paper is learning the output weights and studying the robustness of the RVFL approximation to the noise in the training data. Obtaining provable robustness guarantees for an RVFL training procedure would be a step towards the robustness analysis of neural networks.

\section*{Conflict of Interest Statement}
%All financial, commercial or other relationships that might be perceived by the academic community as representing a potential conflict of interest must be disclosed. If no such relationship exists, authors will be asked to confirm the following statement: 

The authors declare that the research was conducted in the absence of any commercial or financial relationships that could be construed as a potential conflict of interest.

%\section*{Author Contributions}
%
%All the authors equally contributed to the article and
%approved the submitted version.

%\section*{Funding}
%Deanna Needell was partially supported by NSF  DMS 2108479 and NSF DMS 2011140. Rayan Saab was partially supported by a UCSD senate research award and a Simons fellowship. Palina Salanevich was partially supported by NSF Division of Mathematical Sciences award \#1909457.

\section*{Acknowledgments}
Deanna Needell was partially supported by NSF  DMS 2108479 and NSF DMS 2011140. Rayan Saab was partially supported by a UCSD senate research award and a Simons fellowship. Palina Salanevich was partially supported by NSF Division of Mathematical Sciences award \#1909457.
The authors thank F. Krahmer, S. Krause-Solberg, and J. Maly for sharing their GMRA code, which they adapted from that provided by M. Maggioni.

%\section*{Supplemental Data}

\section*{Data Availability Statement}
The code used to obtain the numerical results is available upon direct request sent to the corresponding author.
% Please see the availability of data guidelines for more information, at https://www.frontiersin.org/guidelines/policies-and-publication-ethics#materials-and-data-policies

%\bibliographystyle{Frontiers-Harvard} %  Many Frontiers journals use the Harvard referencing system (Author-date), to find the style and resources for the journal you are submitting to: https://zendesk.frontiersin.org/hc/en-us/articles/360017860337-Frontiers-Reference-Styles-by-Journal. For Humanities and Social Sciences articles please include page numbers in the in-text citations 
\bibliographystyle{plain} % Many Frontiers journals use the numbered referencing system, to find the style and resources for the journal you are submitting to: https://zendesk.frontiersin.org/hc/en-us/articles/360017860337-Frontiers-Reference-Styles-by-Journal
\bibliography{RVFL_frontiers}

\end{document}